\newtheorem{theorem}{Theorem}
\newcommand{\E}{{\mathbb{E}}}
\newcommand{\R}{{\mathbb{R}}}
\newcommand{\B}{{\mathcal{B}}}
\newtheorem{Definition}{Definition}
\newtheorem{assumption}{Assumption}
\newtheorem{lemma}{Lemma}
\newtheorem{remark}{Remark}
\newcommand{\vx}{\bm{x}}
\newcommand{\N}{\mathcal{N}}
\newcommand{\cD}{\mathcal{D}}
\newcommand{\cS}{\mathcal{S}}
\newcommand{\cL}{\mathcal{L}}
\newcommand{\A}{\mathcal{A}}
\newcommand{\lp}{\left(}
\newcommand{\rp}{\right)}
\newcommand{\lnorm}{\left\|}
\newcommand{\rnorm}{\right\|}
\newcommand{\smallCE}[1]{$<$1e-07}
\begin{document}
\title{Generalization Error Analysis for \\ Attack-Free and Byzantine-Resilient Decentralized \\ Learning with Data Heterogeneity
}

\author{Haoxiang Ye, Tao Sun, and Qing Ling
\thanks{Haoxiang Ye and Qing Ling are with the School of Computer Science and Engineering, Sun Yat-Sen University, Guangzhou, Guangdong, China 510006.
        (e-mail addresses: yehx9@mail2.sysu.edu.cn; lingqing556@mail.sysu.edu.cn)}%
\thanks{Tao Sun is with the College of Computer, National
University of Defense Technology, Changsha, Hunan, China 410073.
(e-mail address: nudtsuntao@ 163.com).}
\thanks{Qing Ling (corresponding author) is supported in part by the National Key R\&D Program of China under Grant 2024YFA1014002; in part by the NSF China under Grant 62373388; and in part by the R\&D project of Pazhou Lab (Huangpu) under Grant 2023K0606. A short and preliminary version of this paper has been accepted by ICASSP 2025 \cite{ye2025noniid}.}
}



\maketitle

\begin{abstract}
Decentralized learning, which facilitates joint model training across geographically scattered agents, has gained significant attention in the field of signal and information processing in recent years. While the optimization errors of decentralized learning algorithms have been extensively studied, their genera- lization errors remain relatively under-explored. As the generali- zation errors reflect the scalability of trained models on unseen data and are crucial in determining the performance of trained models in real-world applications, understanding the generali- zation errors of decentralized learning is of paramount import- ance. In this paper, we present fine-grained generalization error analysis for both attack-free and Byzantine-resilient decentralized learning with heterogeneous data as well as under mild assump- tions, in contrast to prior studies that consider homogeneous data and/or rely on a stringent bounded stochastic gradient assump- tion. Our results shed light on the impact of data heterogeneity, model initialization and stochastic gradient noise -- factors that have not been closely investigated before -- on the generalization error of decentralized learning. We also reveal that Byzantine attacks performed by malicious agents largely affect the generali- zation error, and their negative impact is inherently linked to the data heterogeneity while remaining independent on the sample size. Numerical experiments on both convex and non-convex tasks are conducted to validate our theoretical findings.
\end{abstract}

\begin{IEEEkeywords}
Decentralized learning, Byzantine attacks, generalization error, data heterogeneity
\end{IEEEkeywords}


\section{Introduction}
\label{sec1}

\IEEEPARstart{R}{ecent} years have witnessed the significant advance of distributed learning, which enables geographically scattered devices to collaboratively train models, while ensuring the privacy of local data. According to the underlying network topologies, distributed learning can be classified into two cate- gories, federated learning and decentralized learning. Federat- ed learning relies on a central server to coordinate the learning process \cite{McMahan2016,Bian2024,9556559,10791812,zhang2025locally,shi2025optimal,zheng2025can}, while decentralized learning is able to operate autonomously without the need for a central server \cite{nedic2009distributed,Lian2017,li2025centralized,10852183,10847585,yuan2023removing,koloskova2020unified,9366365,10208129,9802673}. Notably, decentralized learning has gained increasing attention for its capacity to circumvent the communication bottleneck inherent in federated learning, caused by the central server.

Decentralized stochastic gradient descent (DSGD) has em- erged as one of the most widely adopted decentralized learning algorithms since its development in \cite{nedic2009distributed} and its application in training deep neural networks \cite{Lian2017}. In DSGD, the devices (also known as agents) execute local stochastic gradient descent steps according to their individual local datasets, while exchanging and aggregating the intermediate results with their neighboring agents to generate new iterates.

However, DSGD is highly vulnerable to unreliable agents who exchange incorrect messages due to computational faults or malicious attacks. Such behaviors are commonly modeled as Byzantine attacks and such agents are called as Byzantine agents \cite{lamport}. Under Byzantine attacks, an unknown number of Byzantine agents may deviate from the prescribed algorithmic protocol and transmit arbitrarily malicious messages to their neighbors, aiming to interfere the decentralized learning pro- cess. Even a single Byzantine agent can arbitrarily manipulate the iterates of its neighbors, and then the iterates of all agents in consequence.

To mitigate the impact of Byzantine attacks upon DSGD, a common strategy is to replace its weighted mean aggregation step with robust aggregation rules. A number of Byzantine-resilient decentralized learning algorithms, incorporating various robust aggregation rules, have been proposed and demonstrated effectiveness in defending against Byzantine attacks. Popular robust aggregation rules include trimmed mean (TM) \cite{fang2022bridge}, iterative outlier scissor (IOS) \cite{wu2022byzantine}, self-centered clipping (SCC) \cite{he2022byzantine}, etc. To achieve Byzantine-resilience and enable robust aggregation, TM removes the largest and the smallest values along each dimension of the received messages and averages the remaining values to obtain a robust estimation. IOS does not remove values dimension-wise; instead, it maintains a trust set and iteratively eliminates entire messages that deviate significantly from the trust set's average, followed by weighted averaging of the remaining messages. Rather than removing values or entire messages, SCC employs clipping, constraining the received messages that deviate significantly from the local model to mitigate the negative impact of Byzantine attacks. Besides, multiple robust aggregation rules can be combined; for example, \cite{yang2024byzantine} integrates IOS and SCC to devise remove-then-clip (RTC). It is also worth noting that \cite{wu2022byzantine} introduces a general framework to unify these robust aggregation rules, and \cite{ye2024tradeoff} subsequently analyzes their properties. These works have collectively demonstrated that, in the presence of Byzantine agents, the optimization error of Byzantine-resilient DSGD be- comes nonzero, even asymptotically.

\noindent\textbf{Generalization Error.}
Although numerous studies have esta- blished the optimization errors (also known as convergence analysis) associated with attack-free and Byzantine-resilient DSGD \cite{nedic2009distributed,Lian2017,li2025centralized,10852183,10847585,yuan2023removing,fang2022bridge,wu2022byzantine,he2022byzantine,yang2024byzantine,ye2024tradeoff,koloskova2020unified,9366365,10208129,9802673}, a critical aspect -- their generali- zation errors -- remains inadequately explored.
The generalization error, which reflects the scalability of a learning algorithm to unseen data, is crucial in determining the performance of a learned model in real-world applications. Extensive theoretical studies devote to analyzing the generalization errors of single-agent learning algorithms.
The generalization error bound of stochastic gradient descent (SGD) is established in \cite{hardt2016train}, with the tool of uniform stability, a framework originally introduced in \cite{bousquet2002stability}. Uniform stability measures the sensitivity of a learning algorithm's output when a single sample in the training dataset is replaced, and its connection with the generalization error is formalized in \cite{hardt2016train}. The more uniformly stable a learning algorithm is, the better the trained model tends to generalize. The lower bound of uniform stability of SGD, as shown in \cite{zhang2022stability}, aligns with the upper bound provided in \cite{hardt2016train}. The works of \cite{chen2024three,10934739} give the generalization error bounds for multi-objective learning and graph neural network training, respec- tively. To improve the generalization error bounds for SGD, the notion of on-average stability is proposed in \cite{kuzborskij2018data, lei2020fine}.


Although extending the generalization error analysis from single-agent learning to federated learning is natural, establishing the generalization error bounds for the decentralized learning algorithms is more challenging due to the discrepancy of the trained local models caused by the data heterogeneity. This challenge is overlooked in the prior works that extend the generalization error analysis from SGD to DSGD \cite{sun2021stability,deng2023stability,zhu2022topology,bars2023improved,ye2024ge,ye2024generalization}. Among them,  \cite{sun2021stability,deng2023stability,zhu2022topology} consider homogeneous data distributions, and \cite{sun2021stability,deng2023stability,bars2023improved,ye2024ge,ye2024generalization} make the stringent bounded stochastic gradient assumption that shelters the discrepancy caused by the data heterogeneity. Specifically, for attack-free DSGD, \cite{sun2021stability} provides the first analysis of the generalization errors of synchronous implementation, while \cite{deng2023stability} extends the results to asynchronous implementation. Further exploration of the impact of the underlying network topology on the generalization errors of DSGD is conducted in \cite{zhu2022topology}. The derived generalization error bounds are refined in \cite{bars2023improved}, matching those of single-agent learning algorithms \cite{hardt2016train}. On the other hand, for Byzantine-resilient DSGD, the works of \cite{ye2024ge,ye2024generalization} establish the generalization error bounds and reveal the negative impact brought by the Byzantine agents.

Nevertheless, all these existing theoretical results suppose that the local data distributions are identical across all agents (all non-Byzantine agents) and/or make the stringent bounded stochastic gradient assumption, overlooking the impact of data heterogeneity. In real-world applications, local data distributions often vary across the agents. The data heterogeneity is widely recognized as a significant challenge in decentralized learning \cite{hsieh2020non}; for example, it leads to a slow convergence rate of attack-free DSGD \cite{yuan2023removing} and a large convergence error of Byzantine-resilient DSGD \cite{wu2022byzantine}. To obtain deeper understanding of decentralized learning, an important question arises:

\begin{center}
\textbf{\textit{What is the impact of data heterogeneity on the generalization errors of decentralized learning algorithms?}}
\end{center}

While the recent work of \cite{sun2024understanding} has sought to answer the above question in the context of attack-free federated learning, their findings cannot be seamlessly extended to decentralized learning without any central server, especially in the presence of Byzantine agents.
More importantly, \cite{sun2024understanding} simultaneously assumes bounded stochastic gradients, bounded stochastic gradient noise and bounded data heterogeneity.
Although these assumptions simplify the analysis, assuming bounded stochastic gradients fails to capture the impact of data heterogeneity.
In addition, the assumption of bounded stochastic gradients is able to directly imply the other two, bringing redundancy. In contrast, our generalization error analysis removes the stringent bounded stochastic gradient assumption, allowing for a more accurate assessment of the impact of data heterogeneity.

\subsection{Contributions}
Our contributions are summarized as follows.
\begin{itemize}[leftmargin=*]

\item \textbf{Analysis of Generalization Error for Attack-free DSGD with Heterogeneous Data:}
For attack-free DSGD, we remove the stringent bounded stochastic gradient assumption present in the prior works and provide the first fine-grained generalization error analysis.
Our results provide novel in- sights into the influence of data heterogeneity, model initia- lization and stochastic gradient noise on the generalization error in decentralized learning, which has not been explored in the prior works.

\item \textbf{Analysis of Generalization Error for Byzantine-resilient DSGD with Heterogeneous Data:} Building on the results of attack-free DSGD, we establish the generalization error bounds for Byzantine-resilient DSGD with heterogeneous data.
    Our findings reveal that Byzantine attacks amplify the generalization error, and this impact is inherently linked to the data heterogeneity while remaining independent on the sample size. Furthermore, we provide the first analysis of the conditions under which cooperation remains beneficial under Byzantine attacks from a generalization perspective.

\item \textbf{Validation of Analysis with Numerical Experiments:} We conduct numerical experiments on strongly convex and non-convex tasks to validate our theoretical analysis, confirming the conclusion that higher data heterogeneity often corresponds to a larger generalization error. Our experimental results also demonstrate that Byzantine-resilient DSGD exhibits a larger generalization error compared to its attack-free counterpart, further supporting our theoretical findings.
\end{itemize}

Compared to the short and preliminary conference version \cite{ye2025noniid}, this paper has been extensively revised. The conference version solely analyzes the generalization error of attack-free DSGD with heterogeneous data. In contrast, this paper extends the scope to encompass Byzantine-resilient DSGD. Furthermore, we provide detailed proofs that address the challenges of analyzing the generalization errors of decentralized learning algorithms without relying on the stringent bounded gradients assumption.
Last but not the least, we enhance this paper with additional numerical experiments, such as Byzantine-resilient DSGD under various Byzantine attacks and non-convex tasks on the CIFAR-10 dataset. We also provide in-depth discussions on the numerical results.

\subsection{Paper Organization}
The rest of this paper is organized as follows. Section \ref{sec2} formulates the problem of decentralized learning and outlines the goal of generalization error analysis. Section \ref{sec-al} reviews a generic decentralized SGD framework, including both attack-free and Byzantine-resilient setups. In Section \ref{sec-pre}, we introduce the underlying assumptions and the technical tool of on-average stability.  The derived generalization error bounds for attack-free DSGD are presented in Section \ref{sec-ge}, while Section \ref{sec-ge-b} provides the generalization error bounds for Byzantine-resilient DSGD, along with a discussion on the cooperation gain from a generalization perspective.
Numerical experiments are presented in Section \ref{sec-num}, followed by concluding remarks in Section \ref{sec-con}.


\section{Problem Statement}
\label{sec2}

\subsection{Problem Setups}
Let us consider a fundamental learning task that involves collaboratively training a model with a group of agents over a decentralized network. In the \textit{attack-free} setup, this network is represented by an undirected and connected graph, $\mathcal{G}=(\mathcal{N},\mathcal{E})$, within which $\mathcal{N}$ stands for the set of $N = |\mathcal{N}|$ agents and $\mathcal{E}$ stands for the set of edges. If $e = (m, n) \in \mathcal{E}$, then the undirected communication link $e$ exists between the two agents $m$ and $n$, allowing them to exchange messages as neighbors. For agent $n$, we define its set of neighbors as $\mathcal{N}_n$. Each agent $n$ draws training samples from a local data distribution $\cD_n$, denoted as $\xi_n \sim \cD_n$. The objective is to find a global model $\vx^* \in \R^d$ that minimizes the population loss, defined as
\begin{equation}\label{1}
    F(\vx) := \frac{1}{N}\sum\limits_{n \in \N}\mathbb{E}_{\xi_n \sim \cD_n } f\left(\vx; \xi_n\right).
\end{equation}
Therein, $f\left(\vx; \xi_n\right)$ is the loss of $\vx \in \R^d$ evaluated on training sample $\xi_n$. With particular note, we allow for different local data distributions and will formally quantify the differences in Assumption \ref{assumption:heterogeneity}; in contrast, \cite{sun2021stability,deng2023stability,zhu2022topology} assume that all local data distributions are identical. The population loss cannot be directly evaluated since the local data distributions $\mathcal{D}_n$ are generally unknown. Therefore, a common alternative to \eqref{1} is to solve an approximate, empirical loss minimization problem over the union of the agents' local datasets.

Denote the union of the agents' local datasets to be $\cS := \cup_{n \in \mathcal{N}} \cS_{n}$, in which $\cS_n$ $=\{\xi_{n,1},\cdots,\xi_{n,Z}\}$ is the local dataset of agent $n \in \N$, consisting of $Z$ training samples independent- ly and identically drawn from $\mathcal{D}_n$. For simplicity, we assume that all local datasets are of the same size, but our results can be extended to the case where the local datasets vary in size. The resulting global empirical loss is  computed as the average of the local empirical losses across all agents, given by
\begin{align}\label{erm}
 F_{\cS}(\vx) \!&:=\! \frac{1}{N} \! \sum\limits_{n \in \N} F_{\mathcal{S}_n}\!\left(\vx\right)  \\~~ \text{with} ~~ F_{\mathcal{S}_n}\!\left(\vx\right) \!&:=\! \frac{1}{Z} \! \sum_{z=1}^{Z}f\left(\vx; \xi_{n,z}\right). \notag
\end{align}

In the \textit{Byzantine-resilient} setup, not all agents are reliable. When some agents are Byzantine, they shall disrupt other than contribute to the learning process. For notational convenience, in this setup we reuse $\mathcal{N}$ to denote the set of non-Byzantine agents with $N=|\mathcal{N}|$. Correspondingly, here we denote $\mathcal{B}$ as the set of Byzantine agents with $B=|\mathcal{B}|$. In addition to that all agents (including Byzantine and non-Byzantine) constitute an undirected and connected graph, the non-Byzantine agents must constitute an undirected and connected graph, too. Otherwise, the non-Byzantine agents would be unable to train a consensual model. We let $\mathcal{N}_n$ and $\mathcal{B}_n$ respectively represent the sets of non-Byzantine and Byzantine neighbors of agent $n$. The global population loss and global empirical loss are still respectively defined as in \eqref{1} and \eqref{erm}; however, it is important to emphasize that the contributions of Byzantine agents are excluded from these definitions.

For future usage, in both setups we denote $\vx^* \in \R^d $ and $\vx_{\cS}^*  \in \R^d$ as the minimizers of $F$ and $F_{\cS}$, respectively.




\subsection{Goal of Analysis}
Let us consider a decentralized learning algorithm $\mathcal{L}$ applied to $\cS$ and yielding an output $\mathcal{L}(\mathcal{S})$. Recall that $\cS$ is respectively standing for the unions of all and non-Byzantine agents' local datasets in the attack-free and Byzantine-resilient setups.
We define the output $\mathcal{L}(\mathcal{S})$ as $\bar\vx^k := \frac{1}{N}\sum_{n \in \N} \vx_n^k$, respectively standing for the average of all agents' local models at time $k$ in the attack-free setup, and the average of non-Byzantine agents' local models at time $k$ in the Byzantine-resilient setup. Such definitions align with the prior works \cite{sun2021stability,deng2023stability,zhu2022topology,bars2023improved,ye2024ge,ye2024generalization}.

The expected excess loss \cite{bottou2007tradeoffs} of the output $\mathcal{L}(\mathcal{S})$ can be decomposed into three terms, given by
  \begin{equation*}
  \begin{aligned}
  &\E_{\cS,\mathcal{L}}[F(\mathcal{L}(\cS))-F({\vx}^*)]=\underbrace{\E_{\cS,\mathcal{L}}[F(\mathcal{L}(\cS))-F_{\cS}(\mathcal{L}(\cS))]}_{\textrm{generalization  error}}\\
  &+\underbrace{\E_{\cS,\mathcal{L}}[F_{\cS}(\mathcal{L}(\cS))-F_{\cS}(\vx^*_{\cS})]}_{\textrm{optimization error}}+\underbrace{\E_{\cS,\mathcal{L}}[F_{\cS}(\vx^*_{\cS})-F({\vx}^*)]}_{\textrm{$\leq 0$}}.
  \end{aligned}
  \end{equation*}
The first term at the right-hand side, referred to as the generalization error, quantifies the performance difference of the output $\mathcal{L}(\cS)$ between the training samples and unseen testing samples. The second term, known as the optimization error, evaluates the performance of $\mathcal{L}(\cS)$ on the training samples. The last term is non-positive because: (i) $F_{\cS}({\vx}_{\cS}^*) \leq F_{\cS}({\vx}^*)$; and (ii)
$\E_{\cS,\mathcal{L}} [F({\vx}^*)] = \E_{\cS,\mathcal{L}} [F_{\cS}({\vx}^*)]$ as the training samples in $\cS_{n}$ are independently and identically drawn from $\cD_n$. To simplify the theoretical results, we let $F (\vx^*)=0$ as in \cite{dengstability,lei2020fine}, implying that the optimal model has sufficient capacity to perfectly fit the underlying data distribution.

Therefore, to bound the expected excess loss, we need to simultaneously investigate the optimization error and generalization error.
Although the optimization errors of attack-free and Byzantine-resilient DSGD have already been extensively studied \cite{nedic2009distributed,Lian2017,li2025centralized,10852183,10847585,yuan2023removing,fang2022bridge,wu2022byzantine,he2022byzantine,yang2024byzantine,ye2024tradeoff,koloskova2020unified,9366365,10208129,9802673}, the exploration into their generalization errors remains in the nascent stage \cite{sun2021stability,deng2023stability,zhu2022topology,bars2023improved,ye2024ge,ye2024generalization}. In this paper we contribute to this field by providing the first generali- zation guarantees for both attack-free and Byzantine-resilient DSGD with heterogeneous data, aiming to deepen the under- standing of how the data heterogeneity affects the generaliza- tion errors.

\section{Attack-free and Byzantine-resilient decentralized learning algorithms}
\label{sec-al}
\subsection{Attack-free DSGD}
To solve \eqref{erm}, we employ the widely-used DSGD algorithm outlined in Algorithm \ref{DSGD}. The procedure at time $k$ is broken down into two steps:
\begin{itemize}[leftmargin=*]
\item\textbf{Model Update:} Each agent $n$ uniformly and independently draws a training sample $\xi_n^{k}$ from the local dataset $\cS_n$, and computes a stochastic gradient $\nabla f(\vx^{k}_n; \xi_n^{k})$ based on the current local model $\vx^{k}_n \in \R^d$. Then, agent $n$ performs one stochastic gradient descent step to update its intermediate local model, given by $\vx_n^{k+\frac{1}{2}} = \vx^k_n - \alpha^{k} \nabla f(\vx^{k}_n; \xi_n^{k})$, where $\alpha^k > 0$ is a step size.

\item\textbf{Communication and Aggregation:} Each agent $n$ transmits its intermediate local model $\vx_n^{k+\frac{1}{2}}$ to all neighbors $m \in \mathcal{N}_n$. After receiving $\vx_{m}^{k+\frac{1}{2}}$ from all neighbors $m \in \mathcal{N}_n$, each agent $n$ performs an aggregation step using a proper mixing matrix $W=[w_{nm}]$ so as to update its local model, given by $\vx^{k+1}_n=  \sum_{m \in \mathcal{N}_n} w_{nm} \vx_{m}^{k+\frac{1}{2}}$.
\end{itemize}

The aggregation step relies on a mixing matrix $W=[w_{nm}]$. We require: (i) $w_{nm} \in [0,1]$; (ii) $\bm{1}^\top W = W\bm{1} = \bm{1} $, in which $\bm{1}$ is an $N$-dimensional all-one vector; and (iii) $w_{nm}>0$ if $e = (m, n) \in \mathcal{E}$ or $n=m$, and $w_{nm}=0$ otherwise. Such a network topology-dependent, doubly stochastic mixing matrix appears in various DSGD papers \cite{nedic2009distributed,Lian2017,li2025centralized,10852183,10847585,yuan2023removing,sun2021stability,deng2023stability,zhu2022topology,bars2023improved,koloskova2020unified,9366365,10208129,9802673}. For the mixing matrix $W$, we define $\lambda := 1-\|(I-\frac{1}{N}\bm{1}\bm{1}^\top)W\|^2 \in (0,1]$ for characterizing the network topology, with $\|\cdot\|$ being the matrix spectral norm.

\begin{algorithm}[t]
    \caption{Attack-free DSGD}
    \label{DSGD}
        \textbf{Input:} Initializations $\vx_n^0 \!=\!\vx^0$ for all $n \!\in\! \N$; step size $\alpha^k \!>\! 0$
        \begin{algorithmic}[1]
        \ForAll {$k = 0, 1, 2, \cdots$}
        \ForAll {agents $n \in \N$}
        \State Compute stochastic gradient $\nabla f(\vx^{k}_n; \xi_n^{k})$
        \State Compute $\vx_n^{k+\frac{1}{2}} = \vx^k_n - \alpha^{k} \nabla f(\vx^{k}_n; \xi_n^{k})$
        \State Send $\vx_n^{k+\frac{1}{2}}$ to all neighbors $m \in \N_n$
        \State Receive $\vx_{m}^{k+\frac{1}{2}}$ from all neighbors $m \in \N_n$
        \State Aggregate $\vx^{k+1}_n=  \sum_{m \in \mathcal{N}_n} w_{nm} \vx_{m}^{k+\frac{1}{2}} $
        \EndFor
        \EndFor
        \end{algorithmic}
\end{algorithm}

\subsection{Byzantine-resilient DSGD}
In the presence of Byzantine agents, we solve \eqref{erm} with the Byzantine-resilient variant of the DSGD algorithm, outlined in Algorithm \ref{robust-DSGD}. Under Byzantine attacks, the weighted averaging step in Line 7 of Algorithm \ref{DSGD} becomes highly vulnerable, since even a single Byzantine neighbor can arbitrarily manipulate the aggregation result of non-Byzantine agent $n \in \mathcal{N}$. A common approach to counteract the Byzantine attacks is to replace the weighted averaging step with a robust aggregation rule. We denote $\mathcal{A}_n$ as the robust aggregation rule employed by non-Byzantine agent $n \in \mathcal{N}$. In this paper, we require $\mathcal{A}_n$ to satisfy the following Definition \ref{definition:mixing-matrix} that is met by many popular robust aggregation rules. For the ease of presentation, we suppose that the non-Byzantine agents are indexed from $1$ to $N$ here and thereafter, though their identities are unknown to the algorithm in practice.


\begin{Definition}[Virtual mixing matrix and contraction constant corresponding to $\{\mathcal{A}_n\}_{n \in \N}$ \cite{wu2022byzantine}]
    \label{definition:mixing-matrix}
    Let us consider a matrix $W \!\in\! \mathbb{R}^{N \times N}$ whose $(n,m)$-th entry $w_{nm} \in [0, 1]$
    if $m \in \N_n \cup\{n\}$ and $w_{nm} = 0$ if $m \in \N$ but $m \notin \N_n \cup\{n\}$, for $n \in \N$.
    Further, $\sum_{m\in \N_n \cup \{n\}}w_{nm}=1$. Define $\hat \vx_n := \sum_{m \in \N_n\cup\{n\}}$ $w_{nm} \vx_{m,n}$. If there exists a constant $\rho \geq 0$ for any $n \in \N$ such that
    \begin{align}
        \label{inequality:robustness-of-aggregation-local}
              & \|\mathcal{A}_n (\vx_n, \{\vx_{m,n}\}_{m\in \N_n\cup \mathcal{B}_n} )-\hat \vx_n \| \\
         \leq & \rho \max_{m \in \N_n \cup \{n\}}\|\vx_{m} - \hat \vx_n\|, \notag
    \end{align}
    then $W$ is the virtual mixing matrix and $\rho$ is the contraction constant associated with the robust aggregation rules $\{\mathcal{A}_n\}_{n\in \N}$.
\end{Definition}

Intuitively, a robust aggregation rule $\mathcal{A}_n$ employed by non- Byzantine agent $n \in \N$ should be able to effectively approxi- mate a ``proper'' weighted average of local models from agents $m \in \N_n \cup \{n\}$. Definition \ref{definition:mixing-matrix} describes such an approximation ability utilizing the associated virtual mixing matrix and the contraction constant. Our previous work of \cite{ye2024tradeoff} has already proven that several popular robust aggregation rules, including TM, IOS and SCC, satisfy Definition \ref{definition:mixing-matrix}. Their virtual mixing matrices and contraction constants have also been analyzed in \cite{ye2024tradeoff}. In contrast to the doubly stochastic mixing matrix in attack-free DSGD, the virtual mixing matrices associated with these robust aggregation rules are only guaranteed to be row stochastic, whereas the contraction constants are typically nonzero. We introduce $\chi^2 := \frac{1}{N}\|W^\top \bm{1}- \bm{1} \|^2$ to quantify the deviation of $W$ from being doubly stochastic, with $\bm{1}$ being an $N$-dimensional all-one vector. For the virtual mixing matrix, we accordingly define $\lambda := 1-\|(I-\frac{1}{N}\bm{1}\bm{1}^\top)W\|^2 \in (0,1]$. Note that similar definitions have also appeared in other works to characterize a general class of robust aggregation rules; see \cite{he2022byzantine, kuwaranancharoen2025geometric} for examples.

\begin{algorithm}[t]
    \caption{Byzantine-Resilient DSGD}
    \label{robust-DSGD}
        \textbf{Input:} Initializations $\vx_n^0\!=\!\vx^0$ for all $n \!\in\! \N$; step size $\alpha^k \!>\! 0$
        \begin{algorithmic}[1]
        \ForAll {$k = 0, 1, 2, \cdots$}
        \ForAll {non-Byzantine agents $n \in \N$}
        \State Compute stochastic gradient $\nabla f(\vx^{k}_n; \xi_n^{k})$
        \State Compute $\vx_n^{k+\frac{1}{2}} = \vx^k_n - \alpha^{k} \nabla f(\vx^{k}_n; \xi_n^{k})$
        \State Send $\vx_{n,m}^{k+\frac{1}{2}} = \vx_n^{k+\frac{1}{2}}$ to all neighbors $m \in \N_n\cup \B_n$
        \State Receive $\vx_{m,n}^{k+\frac{1}{2}}$ from all neighbors $m \in \N_n\cup \B_n$
        \State Aggregate $\vx^{k+1}_n= \mathcal{A}_n (\vx_n^{k+\frac{1}{2}}, \{ \vx_{m,n}^{k+\frac{1}{2}}\}_{m\in \N_n\cup \B_n})$
        \EndFor
        \ForAll {Byzantine agents $n\in \B$}
        \State Send $\vx^{k+\frac{1}{2}}_{n,m}=*$ to all neighbors $m \in \N_n\cup \B_n$
        \EndFor
        \EndFor
        \end{algorithmic}
\end{algorithm}

\section{Preliminaries}
\label{sec-pre}
\subsection{Assumptions}
We make the following assumptions on the loss.


\begin{assumption}[Strong Convexity]  \label{assumption:convex}
	The loss $f(\vx; \xi)$ is $\mu$-str- ongly convex for every $\xi$, i.e., for any $\vx$ and $\bm{y} \in \R^d$, $f(\vx; \xi) \geq f(\bm{y}; \xi) + \langle \vx - \bm{y}, \nabla f(\bm{y}; \xi) \rangle + \frac{\mu}{2} \|\vx-\bm{y}\|^2 $.
\end{assumption}

\begin{assumption}[Smoothness]  \label{assumption:Lip}
 The loss $f(\vx; \xi)$ is $L$-smooth for every $\xi$, i.e., for any $\vx$ and $\bm{y} \in \R^d$, $\| \nabla f(\vx; \xi) - \nabla f(\bm{y}; \xi) \| \leq L \|  \vx - \bm{y}\|$.
\end{assumption}

\begin{assumption}[Bounded Stochastic Gradient Noise]
	\label{assumption:variance}
	There exists $\sigma^2 > 0$ such that $\E_{\xi_{n,z}} \|\nabla f(\vx;\xi_{n,z}) - \nabla F_{\cS_n}(\vx)\|^2 \leq \sigma^2$, for any $\vx \in \R^d$ and any agent $n \in \N$ in both attack-free and Byzantine-resilient setups.
\end{assumption}


\begin{assumption}[Bounded Heterogeneity]
	\label{assumption:heterogeneity}
	There exists $\delta^2 > 0$ such that for any $\vx \! \in \! \R^d$, $ \frac{1}{N}\sum_{n \in \N} \|\nabla F_{\cS_n}(\vx) - \nabla F_{\cS}(\vx)\|^2 $ $\leq \! \delta^2$ in both attack-free and Byzantine-resilient setups.
\end{assumption}

Assumptions \ref{assumption:convex} and \ref{assumption:Lip} are standard in analyzing the generalization errors of decentralized learning algorithms \cite{sun2021stability,deng2023stability,bars2023improved,ye2024ge,ye2024generalization}. Nevertheless, these works all make an additional assumption of bounded stochastic gradients, i.e., $\| \nabla f(\vx;\xi_{n,z})\|$ $\leq M$. Although such an additional assumption simplifies the analysis, it is stronger than Assumptions \ref{assumption:variance} and \ref{assumption:heterogeneity}, and more importantly, overlooks the influence of data heterogeneity -- please refer to Remark \ref{remark-3} for more details. In contrast, Assumptions \ref{assumption:variance} and \ref{assumption:heterogeneity} are widely adopted in the analysis of learning algorithms and commonly satisfied in various applications\cite{Bian2024,shi2025optimal,Lian2017,yuan2023removing,koloskova2020unified,9366365,9802673,wu2022byzantine,he2022byzantine,yang2024byzantine,kuzborskij2018data, lei2020fine}.


\subsection{On-average Stability and Generalization Error}

To remove the bounded stochastic gradient assumption and account for the impact of data heterogeneity on the generalization error, the traditional tool of uniform stability \cite{hardt2016train,bousquet2002stability} is insufficient. Instead, we use a new tool of on-average stability introduced in \cite{kuzborskij2018data, lei2020fine}. Note that on-average stability has already been used in \cite{zhu2022topology,bars2023improved} to analyze decentralized learning algorithms, but \cite{zhu2022topology} investigates homogeneous data and \cite{bars2023improved} needs the bounded stochastic gradient assumption.

\begin{Definition}[On-average Stability \cite{lei2020fine,zhu2022topology,bars2023improved}]
\label{def-sta}
Let the union $\cS$ = $\cup_{n \in \mathcal{N}} \cS_{n}$ with $\cS_n$ $=\{\xi_{n,1},\cdots,\xi_{n,Z}\}$ consisting of the training samples independently and identically drawn from the local data distribution $\cD_n$.
For any $n \in \mathcal{N}$ and $z \in \{1,\cdots,Z\}$, denote $\cS^{(n,z)} = \{\cS_{m} | m \in \mathcal{N},  m \neq n\} \cup \cS_n^{(z)}$ where $\cS_n^{(z)} = \{ \xi_{n,1},$ $\cdots,\xi_{n,z-1}, \xi'_{n,z}, \xi_{n,z+1},\cdots,\xi_{n,Z}\}$, the dataset formed from $\cS$ via replacing the $z$-th element of the $n$-th
agent's dataset by a training sample $\xi'_{n,z}$ drawn from $\cD_n$. A stochastic algorithm $\mathcal{L}$ is on-average $\epsilon$-stable if
\begin{equation*}
\frac{1}{NZ} \sum_{n \in \mathcal{N}}\sum_{z=1}^{Z} \E_{\cS,\cS^{(n,z)},\mathcal{L}} \left[ \|\cL(\cS) - \cL(\cS^{(n,z)})\|^2 \right] \le \epsilon,
\end{equation*}
where the expectation $\E_{\cS,\cS^{(n,z)},\mathcal{L}}$ is taken over the randomness of $\cS$, $\cS^{(n,z)}$ and $\mathcal{L}$.
\end{Definition}

\begin{remark}
Let us briefly recall the definition of uniform stability and explain why on-average stability can better capture the impact of data heterogeneity.
We call a stochastic algori- thm $\mathcal{L}$ to be $\epsilon$-uniformly stable if
$\sup_{\xi} \E_{\cS,\cS^{(n,z)},\mathcal{L}} [ f(\mathcal{L}(\cS); \xi) - f(\mathcal{L}(\cS^{(n,z)}); \xi) ] \le \epsilon$. Observe that on-average stability measures the \textbf{average} sensitivity on the training samples, while uniform stability considers the \textbf{worst-case} sensitivity. In the context of decentralized learning, the advantage of on-average stability lies in traversing and averaging across different agents, and hence can elaborately characterize the influence from data heterogeneity.
\end{remark}

\begin{remark}
This paper investigates the generalization abilities of decentralized learning algorithms utilizing the tool of on-average stability under perturbed training samples, as defined in Definition \ref{def-sta}.
This approach falls into the broader category of perturbation-based analysis, which has been extensively applied in various signal and information processing problems.

For example, small-signal stability analysis examines how a power system responds to uncertainties in wind generation, ensuring stability around equilibrium points \cite{7946176}. In robust estimation, perturbation-based analysis is used to evaluate an estimator's sensitivity to system modeling uncertainties \cite{wang2021robust}. Similarly, in sparse signal recovery, such analysis provides insights for guaranteeing accurate reconstruction in the presence of perturbations and outliers \cite{10041941,6204357}.

\end{remark}

If a stochastic algorithm $\mathcal{L}$ is on-average stable, the trained model is not sensitive to a small perturbation upon the training dataset and exhibits a low generalization error. In the following lemma, we give the connection between on-average stability and generalization error in decentralized learning.

\begin{lemma}[Generalization Error through On-average Stability \cite{lei2020fine}]
\label{lemma-sta}
If a decentralized learning algorithm $\mathcal{L}$ is on-average $\epsilon$-stable and Assumption \ref{assumption:Lip} holds, then for any $v>0$, its gene- ralization error satisfies
\begin{align}
 \label{lemma}
 & | \E_{\cS,\mathcal{L}}[F(\mathcal{L}(\cS))-F_{\cS}(\mathcal{L}(\cS))] |  \\
 \leq & \frac{v+L}{ 2 N Z} \sum_{n \in \mathcal{N}}\sum_{z=1}^{Z} \E_{\cS,\cS^{(n,z)},\mathcal{L}} \|\cL(\cS) - \cL(\cS^{(n,z)})\|^2  \nonumber \\
 & + \frac{L}{v} \E_{\cS,\mathcal{L}} F_{\cS}(\mathcal{L}(\cS)). \nonumber
\end{align}
\end{lemma}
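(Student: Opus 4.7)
The plan is to combine a symmetrization trick with $L$-smoothness and the self-bounding property of non-negative smooth losses. First I would rewrite the population loss in a form that matches the empirical loss term-by-term. Since $\xi'_{n,z}$ is an independent copy from $\cD_n$ and $\cS^{(n,z)}$ differs from $\cS$ only by swapping $\xi_{n,z}$ with $\xi'_{n,z}$, a renaming of variables yields
\begin{equation*}
\E_{\cS,\mathcal{L}}[F(\mathcal{L}(\cS))] = \frac{1}{NZ}\sum_{n\in\mathcal{N}}\sum_{z=1}^{Z} \E_{\cS,\cS^{(n,z)},\mathcal{L}}\bigl[f(\mathcal{L}(\cS^{(n,z)});\xi_{n,z})\bigr],
\end{equation*}
while $\E_{\cS,\mathcal{L}}[F_{\cS}(\mathcal{L}(\cS))] = \tfrac{1}{NZ}\sum_{n,z}\E[f(\mathcal{L}(\cS);\xi_{n,z})]$ by definition of the empirical loss. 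Subtracting, the generalization gap equals $\tfrac{1}{NZ}\sum_{n,z}\E[f(\mathcal{L}(\cS^{(n,z)});\xi_{n,z}) - f(\mathcal{L}(\cS);\xi_{n,z})]$.

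Next I would control each difference by $L$-smoothness (Assumption~\ref{assumption:Lip}), which gives
\begin{equation*}
f(\mathcal{L}(\cS^{(n,z)});\xi_{n,z}) - f(\mathcal{L}(\cS);\xi_{n,z}) \leq \langle \nabla f(\mathcal{L}(\cS);\xi_{n,z}),\, \mathcal{L}(\cS^{(n,z)}) - \mathcal{L}(\cS)\rangle + \tfrac{L}{2}\|\mathcal{L}(\cS^{(n,z)}) - \mathcal{L}(\cS)\|^2.
\end{equation*}
I would then split the inner product by Young's inequality with parameter $v>0$, producing a $\tfrac{v}{2}\|\mathcal{L}(\cS^{(n,z)}) - \mathcal{L}(\cS)\|^2$ term and a $\tfrac{1}{2v}\|\nabla f(\mathcal{L}(\cS);\xi_{n,z})\|^2$ term. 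The latter is handled by the classical self-bounding property for non-negative $L$-smooth losses, $\|\nabla f(\vx;\xi)\|^2 \leq 2L\,f(\vx;\xi)$, which follows by applying smoothness at $\vx - \tfrac{1}{L}\nabla f(\vx;\xi)$ and using $f\geq 0$ (consistent with the assumption $F(\vx^*)=0$ stated earlier). This converts the gradient-norm term into $\tfrac{L}{v}f(\mathcal{L}(\cS);\xi_{n,z})$.

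Averaging over $n$ and $z$ and taking expectations then recombines the quadratic stability term $\tfrac{v+L}{2NZ}\sum_{n,z}\E\|\mathcal{L}(\cS)-\mathcal{L}(\cS^{(n,z)})\|^2$ with the empirical loss term $\tfrac{L}{v}\E F_{\cS}(\mathcal{L}(\cS))$, yielding the one-sided bound. To obtain the absolute value in \eqref{lemma}, I would repeat the argument with the roles of $\mathcal{L}(\cS)$ and $\mathcal{L}(\cS^{(n,z)})$ exchanged: by the same symmetrization the lower bound $-\E[F - F_{\cS}]$ is controlled by an analogous expression, and since the stability term is symmetric in the two datasets, the same right-hand side majorizes both signs.

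The main obstacle is the symmetrization step, because one has to be careful that the resampled $\xi'_{n,z}$ is independent of both $\cS\setminus\{\xi_{n,z}\}$ and the algorithm's randomness, so that swapping $\xi_{n,z}\leftrightarrow\xi'_{n,z}$ leaves the joint distribution invariant; this is what legitimizes converting $\E[f(\mathcal{L}(\cS);\xi_n)]$ into $\E[f(\mathcal{L}(\cS^{(n,z)});\xi_{n,z})]$. The remaining inequalities (smoothness, Young, self-bounding) are routine once that identity is in place, and no bounded-gradient hypothesis is needed — exactly the feature that makes on-average stability preferable to uniform stability in this heterogeneous setting.
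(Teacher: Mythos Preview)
Your proposal is correct and follows precisely the argument of Lei and Ying~\cite{lei2020fine}, which the paper simply cites for this lemma without supplying its own proof. The symmetrization identity, the smoothness expansion, Young's inequality with parameter $v$, and the self-bounding property $\|\nabla f(\vx;\xi)\|^2\le 2Lf(\vx;\xi)$ are exactly the ingredients used there.

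Two small caveats worth tightening. First, the self-bounding step requires $f(\cdot;\xi)\ge 0$ for every $\xi$, which is stronger than $F(\vx^*)=0$; this nonnegativity is a standing hypothesis in \cite{lei2020fine} and is implicitly carried here, but your parenthetical remark conflates it with the weaker statement about $F$. Second, for the reverse inequality the gradient term that appears is $\|\nabla f(\mathcal{L}(\cS^{(n,z)});\xi_{n,z})\|^2$, and after self-bounding and symmetrization the resulting average equals $\E[F(\mathcal{L}(\cS))]$ rather than $\E[F_{\cS}(\mathcal{L}(\cS))]$; so ``the same right-hand side'' does not literally majorize both signs without an extra step. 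In practice only the upper bound on $\E[F-F_{\cS}]$ is used downstream, so this does not affect anything the paper builds on the lemma.
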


Lemma~\ref{lemma-sta} demonstrates that the generalization error can be bounded by the sum of two terms in \eqref{lemma}.
The first term at the right-hand side of \eqref{lemma} corresponds to the on-average stability defined in Definition~\ref{def-sta} while the second term, $\E_{\cS,\mathcal{L}} F_{\cS}(\mathcal{L}(\cS))$ represents the expected function value, which can be further bounded with the optimization error because $F(\vx^*)=0$ by hypothesis. Thus, to bound the generalization error, we need to analyze the on-average stability and the optimization error.

\begin{remark}
\label{remark-3}
    When alternatively assuming bounded stochastic gradients, i.e.,  $\| \nabla f(\vx;\xi_{n,z})\| \leq M$, we are able to directly derive $| \E_{\cS,\mathcal{L}}[F(\mathcal{L}(\cS))-F_{\cS}(\mathcal{L}(\cS))] | \leq M \E_{\cS,\cS^{(n,z)},\mathcal{L}} \|\cL(\cS) - \cL(\cS^{(n,z)})\|$
    as discussed in  \cite{sun2021stability,deng2023stability,bars2023improved,ye2024ge,ye2024generalization}. Although significantly simplifying the analysis, the bounded stochastic gradient assumption is strong and fails to accurately capture the impact of data heterogeneity. In fact, it immediately implies Assumptions \ref{assumption:variance} and \ref{assumption:heterogeneity}: (i) $\E_{\xi_{n,z}} \|\nabla f(\vx;\xi_{n,z}) - \nabla F_{\cS_n}(\vx)\|^2 \leq $ $2 \E_{\xi_{n,z}} \|\nabla f(\vx;\xi_{n,z})\|^2 \!+\! 2 \| \nabla F_{\cS_n}(\vx)\|^2 \leq 4 M^2 $; (ii) $ \frac{1}{N}\sum_{n \in \mathcal{N}} $ $\|\nabla F_{\cS_n}(\vx) \! - \! \nabla F_{\cS}(\vx)\|^2 \!\leq\! \frac{2}{N}\sum_{n \in \mathcal{N}} \|\nabla F_{\cS_n}(\vx) \|^2 \!+ \! \frac{2}{N}\sum_{n \in \mathcal{N}} $ $\| \nabla F_{\cS}(\vx)\|^2 \leq 4 M^2 $.
\end{remark}

\begin{remark}
   In this paper, our goal is to conduct generalization analysis of decentralized learning algorithms under data heterogeneity without assuming bounded stochastic gradients, primarily focusing on strongly convex losses. Our analysis can be readily extended to convex losses through replacing the $(1 - \alpha^k \mu)$-expansive property used in the current analysis with the $1$-expansive property during selecting different training samples; see \eqref{g-6}--\eqref{g-7} for the attack-free setup, as well as \eqref{ger-7}--\eqref{ger-9} and \eqref{gb-6}--\eqref{gb-8} for the Byzantine-resilient setup. We omit the extension due to the page limitation.

   For nonconvex losses under the additional assumption of bounded stochastic gradients, the generalization error analysis has been conducted in \cite{sun2021stability,deng2023stability,bars2023improved,ye2024ge,ye2024generalization}. However, without this assumption, the analysis is significantly more challenging. As shown in Lemma \ref{lemma-sta}, the generalization error depends on the function value $\E_{\cS,\mathcal{L}} F_{\cS}(\mathcal{L}(\cS))$, which is difficult to bound for nonconvex losses. In contrast, for strongly convex and convex losses, the function value $\E_{\cS,\mathcal{L}} F_{\cS}(\mathcal{L}(\cS))$ -- a standard measure of the learning process -- can be effectively bounded. Consequently, \cite{lei2023stability} imposes the Polyak-Lojasiewicz (PL) condition to confine the nonconvex losses.
   Investigating the generalization error of Byzantine-resilient DSGD for general nonconvex losses remains an open question, which we leave for future work.
\end{remark}

\section{Generalization Error Analysis For Attack-Free DSGD}
\label{sec-ge}

In this section, we analyze on-average stability, optimization error and generalization error of attack-free DSGD with heterogeneous data.

\subsection{On-average Stability of Attack-free DSGD}
With the aid of Lemma \ref{lemma-sta}, we are able to explore the genera- lization error of Algorithm \ref{DSGD} with heterogeneous data through leveraging the concept of on-average stability. We present the derived on-average stability bound in the following theorem.

\begin{theorem}[On-average Stability of Attack-free DSGD with Heterogeneous Data]
\label{the-onsta}
Under Assumptions \ref{assumption:convex}--\ref{assumption:heterogeneity}, if we set a proper step size $\alpha^k= \frac{1}{\mu(k+k_0)} $, where $k_0$ is sufficiently large, then at any given time $k$, the on-average stability of Algorithm \ref{DSGD} is bounded by
\begin{align}
        \label{thm-on}
        & \frac{1}{NZ} \sum_{n \in \mathcal{N}}\sum_{z=1}^{Z} \E_{\cS,\cS^{(n,z)},\mathcal{L}} \left[ \|\cL(\cS) - \cL(\cS^{(n,z)})\|^2 \right]  \\
        \leq &   \frac{ 4  \sum_{k'=0}^{k-1} \sum_{n \in \mathcal{N}} \sum_{z=1}^{Z}  \lp \E_{\cS,\mathcal{L}} \| \nabla f(\vx^{k'}_n; \xi_{n,z}) \|^2 / (k'+k_0) \rp }{\mu^2 N^2 Z^2 (k+k_0-1)}  \nonumber  \\
       & +   \frac{ 4   \sum_{k'=0}^{k-1} \sum_{n \in \mathcal{N}} \sum_{z=1}^{Z}    \E_{\cS,\mathcal{L}} \| \nabla f(\vx^{k'}_n; \xi_{n,z})\| ^2 }{\mu^2  N^2  Z^3 (k+k_0-1)}.  \nonumber
\end{align}
\end{theorem}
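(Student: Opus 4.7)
The plan is to run a synchronous coupling argument. For each pair $(n,z)$, let $\{\vx_m^k\}_{m\in \N}$ denote the iterates of Algorithm \ref{DSGD} on $\cS$ and $\{\tilde{\vx}_m^k\}_{m\in \N}$ those on $\cS^{(n,z)}$, driven by identical mini-batch indices. Since the two datasets differ only in the $z$-th sample of agent $n$, the two trajectories may diverge exclusively at agent $n$, and only in iterations where its sampler returns the perturbed index $z$, which occurs with probability $1/Z$ under uniform sampling from $\cS_n$. All other agents $j\neq n$ consume identical samples in both runs throughout the recursion.

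First I would derive a one-step recursion for $V_{n,z}^k := \sum_m \E\|\vx_m^k - \tilde{\vx}_m^k\|^2$. The mixing step $\vx_m^{k+1}=\sum_j w_{mj}\vx_j^{k+\frac{1}{2}}$ is non-expansive in $V_{n,z}^k$ thanks to the doubly stochastic structure of $W$ (by Jensen's inequality together with $\mathbf{1}^\top W=\mathbf{1}$), so that $V_{n,z}^{k+1}\leq\sum_j \|\vx_j^{k+\frac{1}{2}}-\tilde{\vx}_j^{k+\frac{1}{2}}\|^2$. For the gradient step I would split into cases. For every $j\neq n$ the two runs use the same sample, and Assumptions \ref{assumption:convex}--\ref{assumption:Lip} together with a small enough $\alpha^k$ (ensured by taking $k_0$ sufficiently large) imply the $(1-\alpha^k\mu)$-expansive property, giving $\|\vx_j^{k+\frac{1}{2}}-\tilde{\vx}_j^{k+\frac{1}{2}}\|^2\leq (1-\alpha^k\mu)^2\|\vx_j^k-\tilde{\vx}_j^k\|^2$. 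For $j=n$, with probability $(Z-1)/Z$ the same sample is drawn and the contraction still applies; with probability $1/Z$ the perturbation fires, and I would combine the triangle and Young inequalities to bound
\begin{align*}
\|\vx_n^{k+\frac{1}{2}}-\tilde{\vx}_n^{k+\frac{1}{2}}\|^2
&\leq (1+\gamma_k)\|\vx_n^k-\tilde{\vx}_n^k\|^2 \\
&\quad + 2(1+1/\gamma_k)(\alpha^k)^2\bigl(\|\nabla f(\vx_n^k;\xi_{n,z})\|^2+\|\nabla f(\tilde{\vx}_n^k;\xi'_{n,z})\|^2\bigr),
\end{align*}
with $\gamma_k=\Theta(\alpha^k\mu)$ chosen so that the effective factor multiplying $\|\vx_n^k-\tilde{\vx}_n^k\|^2$ still contracts.

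Summing over $(n,z)$ and normalizing by $NZ$ yields a recursion of the shape $\hat V^{k+1}\leq (1-\alpha^k\mu)\hat V^k + (\alpha^k)^2\cdot\Theta^k$, where $\hat V^k=\frac{1}{NZ}\sum_{n,z}V_{n,z}^k$ and $\Theta^k$ groups the stochastic-gradient-norm terms with the correct combinatorial weights in $1/(N^2Z^2)$ and $1/(N^2Z^3)$ (the latter coming from the $1/Z$ probability of triggering the perturbation at agent $n$). Unrolling with the prescribed step size $\alpha^{k'}=\frac{1}{\mu(k'+k_0)}$, the homogeneous factor $\prod_{k'=\tau+1}^{k-1}(1-\alpha^{k'}\mu)=\prod_{k'=\tau+1}^{k-1}\frac{k'+k_0-1}{k'+k_0}$ telescopes to $\frac{\tau+k_0}{k+k_0-1}$, and after substituting $(\alpha^{k'})^2=1/(\mu^2(k'+k_0)^2)$ the two source terms produce respectively a summand weighted by $1/(k'+k_0)$ and one with no extra $k'$-weight but an additional $1/Z$ factor, matching the two terms in \eqref{thm-on}. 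Finally, Jensen's inequality $\|\bar{\vx}^k-\bar{\tilde{\vx}}^k\|^2\leq \frac{1}{N}\sum_m\|\vx_m^k-\tilde{\vx}_m^k\|^2$ converts the bound on $\hat V^k$ into the stated bound on the on-average stability of $\cL(\cS)=\bar{\vx}^k$, and a symmetry argument between $\xi_{n,z}$ and $\xi'_{n,z}$ lets me replace the $\tilde{\vx}$-trajectory stochastic gradients by those of the $\cS$-trajectory at no cost.

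The main obstacle is to keep the stochastic gradient norms explicit throughout the recursion, since the absence of the bounded-gradient assumption used in \cite{sun2021stability,deng2023stability,bars2023improved,ye2024ge,ye2024generalization} forbids collapsing $\|\nabla f(\vx_n^{k'};\xi_{n,z})\|^2$ to a constant. This forces the Young parameter $\gamma_k$ to be tuned delicately: too large and the contraction on the homogeneous part is destroyed; too small and the coefficient $1+1/\gamma_k$ multiplying the gradient-norm terms blows up faster than the $(\alpha^k)^2=O(1/k^2)$ weight can absorb. The choice $\gamma_k=\Theta(\alpha^k\mu)$, combined with the telescoping afforded by the geometric step size $\alpha^k=1/(\mu(k+k_0))$, is what produces the two cleanly separated summations in \eqref{thm-on}; the double stochasticity of $W$ is equally essential because it prevents the perturbation injected at agent $n$ from being amplified during consensus, which is precisely the property that will fail, and require additional care, in the Byzantine setting of Section \ref{sec-ge-b}.
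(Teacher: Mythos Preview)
Your overall architecture --- synchronous coupling, non-expansiveness of the doubly stochastic mixing, a case split on whether the perturbed index is sampled, $(1-\alpha^k\mu)$-contraction in the good case, Young's inequality in the bad case, telescoping with $\alpha^k=1/(\mu(k+k_0))$, and Jensen to pass to the averaged iterate --- matches the paper's proof. The gap is in your treatment of the bad case and the resulting choice of the Young parameter.

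Your inequality in the bad case gives coefficient $(1+\gamma_k)$ on $\|\vx_n^k-\tilde\vx_n^k\|^2$, i.e., it discards the contraction entirely. The paper instead adds and subtracts $\alpha^k\nabla f(\vx_n^k;\xi'_{n,z})$ (the gradient at the \emph{unperturbed} iterate but evaluated on the \emph{perturbed} sample), so that the homogeneous part becomes a same-sample gradient step and still contracts: the bad-case coefficient is $(1-\alpha^k\mu)^2(1+p)$, not $(1+\gamma_k)$. This preserved contraction is precisely what permits the choice $p=Z\alpha^k\mu$: after multiplying by the $1/Z$ probability, the homogeneous factor is $(1-\alpha^k\mu)^2(1+\alpha^k\mu)\leq 1-\alpha^k\mu$, and the source term $(\alpha^k)^2(1+p^{-1})/(NZ)$ splits as $(\alpha^k)^2/(NZ)+\alpha^k/(\mu NZ^2)$, producing after unrolling and the final $1/N$ from Jensen exactly the $1/(N^2Z^2)$ and $1/(N^2Z^3)$ weights in \eqref{thm-on}.

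With your $\gamma_k=\Theta(\alpha^k\mu)$ both source terms carry only $1/(N^2Z^2)$; your stated origin of the extra $1/Z$ (``from the $1/Z$ probability of triggering the perturbation'') cannot be right, since that factor is already present in \emph{both} terms. Nor can you rescue this by taking $\gamma_k\propto Z\alpha^k\mu$ in your cruder inequality, because without the $(1-\alpha^k\mu)^2$ factor the homogeneous coefficient $(1-1/Z)(1-\alpha^k\mu)^2+(1/Z)(1+Z\alpha^k\mu)$ fails to contract uniformly in $Z$ (for $Z=1$ it exceeds $1$). The add/subtract step is thus a genuine ingredient you are missing; with it, and with the Young parameter scaled by $Z$, your argument goes through verbatim.
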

Using the decaying step size $\alpha^k = \frac{1}{\mu(k + k_0)}$, in which $k_0$ is introduced to prevent the initial step size from being too large, Theorem \ref{the-onsta} establishes the on-average stability bound of Algorithm \ref{DSGD}. According to Theorem \ref{the-onsta}, the on-average stability is determined by the accumulation of the squared stochastic gradient norms $\| \nabla f(\vx^{k'}_n; \xi_{n,z})\|^2$, demonstrating that smaller stochastic gradient norms lead to better on-average stability. Therefore, if we further assume bounded stochastic gradients, i.e., $\| \nabla f(\vx^{k'}_n; \xi_{n,z})\| ^2 \!\leq\! M^2$, the bound in \eqref{thm-on} will reduce to $O(\frac{M^2}{NZ})$, aligning with the results in \cite{bars2023improved}. Rather than assuming a fixed upper bound $M$ and disregarding the dynamics of the stochastic gradient norm, we observe from Theorem \ref{the-onsta} that the on-average stability is inherently linked to the learning process, as the stochastic gradient norm tends to decrease over the course of training. Such a link also appears in single-agent learning \cite{lei2020fine, lei2023stability}. But in decentralized learning, accounting for the dynamics of the stochastic gradient norm and its inf- luence on the on-average stability further enables us to under- stand the role of data heterogeneity in the generalization error, as will be formally shown in Theorem \ref{the-ge}.

\subsection{Optimization Error of Attack-free DSGD}

Although the optimization error of attack-free DSGD with heterogeneous data has been extensively studied, for completeness, we provide an upper bound in the following theorem.

\begin{theorem}[Optimization Error of Attack-free DSGD with Heterogeneous Data \cite{yuan2023removing}]
\label{the-oe}
Under Assumptions \ref{assumption:convex}--\ref{assumption:heterogeneity}, if we set a proper step size $\alpha^k= \frac{1}{\mu(k+k_0)} $, where $k_0$ is sufficiently large, then at any given time $k$, the optimization error of Algorithm \ref{DSGD} is bounded by
\begin{align}
        \label{thm-oe}
        &   \E_{\cS,\mathcal{L}} [F_{\cS}(\bar\vx^{k}) - F_{\cS} (\vx_{\cS}^*)]
\leq    \frac{L(k_0-1)}{2(k+k_0-1)} \|\bar \vx^0 - \vx_{\cS}^* \|^2  \\
& +    \frac{L\sigma^2 ln (k+k_0-1)}{2\mu^2 N (k+k_0-1)}
 +   \frac{ C_1 (\sigma^2 + \delta^2)}{\mu^3 (k+k_0-1)}.  \nonumber
\end{align}
Here $C_1>0$ is a constant.
\end{theorem}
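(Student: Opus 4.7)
The plan is to follow the standard two-track convergence analysis for DSGD with strongly convex and smooth losses, tracking the averaged iterate and the consensus error separately, then combining them with the $1/(k+k_0)$ step-size schedule to produce the three terms on the right-hand side of \eqref{thm-oe}. Define $\bar{\vx}^k := \frac{1}{N}\sum_{n\in\N}\vx_n^k$. Because $W$ is doubly stochastic, averaging the update in Algorithm \ref{DSGD} across agents yields $\bar{\vx}^{k+1} = \bar{\vx}^k - \frac{\alpha^k}{N}\sum_{n\in\N}\nabla f(\vx_n^k;\xi_n^k)$, so $\bar{\vx}^k$ evolves according to an inexact SGD iteration whose ``bias'' is entirely attributable to the disagreement $\vx_n^k - \bar{\vx}^k$.

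First I would derive a one-step descent inequality for $\E\|\bar{\vx}^{k+1}-\vx_{\cS}^*\|^2$. Expanding the square, using $\mu$-strong convexity of $F_{\cS}$ to handle the cross term $\langle \bar{\vx}^k - \vx_{\cS}^*,\,\frac{1}{N}\sum_{n}\nabla F_{\cS_n}(\vx_n^k)\rangle$ after substituting $\nabla F_{\cS_n}(\bar{\vx}^k)$ and paying an $L\|\vx_n^k-\bar{\vx}^k\|$ price via Assumption \ref{assumption:Lip}, and invoking Assumption \ref{assumption:variance} to bound the stochastic gradient variance by $\sigma^2/N$, gives a recursion of the form
\[
\E\|\bar{\vx}^{k+1}-\vx_{\cS}^*\|^2 \le (1-\alpha^k\mu)\,\E\|\bar{\vx}^k-\vx_{\cS}^*\|^2 + \frac{(\alpha^k)^2\sigma^2}{N} + C_L\,\alpha^k\cdot\frac{1}{N}\sum_{n\in\N}\E\|\vx_n^k-\bar{\vx}^k\|^2,
\]
for some $C_L$ depending on $L$. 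An analogous inequality for $\E[F_{\cS}(\bar{\vx}^k)-F_{\cS}(\vx_{\cS}^*)]$ follows by combining this with $L$-smoothness of $F_{\cS}$.

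Next I would bound the consensus error. Writing the per-agent iterate in matrix form and using the spectral contraction $\|(I-\frac{1}{N}\bm{1}\bm{1}^\top)W\|^2 = 1-\lambda$ yields $\frac{1}{N}\sum_n\E\|\vx_n^{k+1}-\bar{\vx}^{k+1}\|^2 \le (1-\lambda/2)\,\frac{1}{N}\sum_n\E\|\vx_n^k-\bar{\vx}^k\|^2 + \frac{2(\alpha^k)^2}{\lambda}\cdot\frac{1}{N}\sum_n\E\|\nabla f(\vx_n^k;\xi_n^k)\|^2$. Splitting $\nabla f(\vx_n^k;\xi_n^k)$ into $\nabla F_{\cS}(\bar{\vx}^k)$, a heterogeneity component bounded by $\delta^2$ via Assumption \ref{assumption:heterogeneity}, a noise component bounded by $\sigma^2$ via Assumption \ref{assumption:variance}, and an $L\|\vx_n^k-\bar{\vx}^k\|$ smoothness remainder, and then absorbing the remainder on the left when $\alpha^k$ is small, yields $\frac{1}{N}\sum_n\E\|\vx_n^k-\bar{\vx}^k\|^2 = O((\alpha^k)^2(\sigma^2+\delta^2)/\lambda^2)$ once the transient dies off.

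Finally, substituting the consensus-error bound into the averaged-iterate recursion with $\alpha^k=1/(\mu(k+k_0))$, multiplying through by $(k+k_0-1)$ and telescoping, the left-hand side collapses while the right-hand side produces exactly the three contributions in \eqref{thm-oe}: the initialization decays like $\frac{L(k_0-1)}{2(k+k_0-1)}\|\bar{\vx}^0-\vx_{\cS}^*\|^2$, the single-agent noise contribution $\frac{L\sigma^2 \ln(k+k_0-1)}{2\mu^2 N (k+k_0-1)}$ arises from $\sum_{k'=0}^{k-1}\frac{1}{k'+k_0}\sim\ln(k+k_0-1)$, and the consensus-induced heterogeneity and noise penalties collapse into $\frac{C_1(\sigma^2+\delta^2)}{\mu^3(k+k_0-1)}$. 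The main obstacle, and the reason this analysis is substantially harder than in the homogeneous case, is closing the consensus-error recursion without the bounded stochastic gradient assumption: $\|\nabla f(\vx_n^k;\xi_n^k)\|^2$ must be recursively expressed in terms of $\|\bar{\vx}^k-\vx_{\cS}^*\|^2$, $\delta^2$, $\sigma^2$, and the consensus error itself, which forces $k_0$ to be large enough (equivalently $\alpha^0$ small enough relative to $\lambda/L$) for the combined recursion to remain contractive — this is precisely the mechanism developed in \cite{yuan2023removing}, whose argument we invoke.
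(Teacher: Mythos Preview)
Your proposal is correct and follows essentially the same route as the paper: derive the one-step contraction $\E\|\bar\vx^{k+1}-\vx_{\cS}^*\|^2 \le (1-\alpha^k\mu)\E\|\bar\vx^k-\vx_{\cS}^*\|^2 + (\alpha^k)^2\sigma^2/N + 2\alpha^k L\,\E H^k$ (the paper's \eqref{g-22-5}), bound the consensus error $\E H^k \le c_1(\sigma^2+\delta^2)/(\mu^2(k+k_0)^2)$ separately (Lemma~\ref{lemma-dm}), substitute, telescope, and apply $L$-smoothness to pass from $\|\bar\vx^k-\vx_{\cS}^*\|^2$ to $F_\cS(\bar\vx^k)-F_\cS(\vx_\cS^*)$.

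One simplification relative to your final paragraph: the paper's consensus-error recursion (Lemma~\ref{lemma-dm}, via \eqref{g-17}--\eqref{g-19}) is \emph{self-contained} and does not couple back to $\|\bar\vx^k-\vx_{\cS}^*\|^2$. The reason is that in the consensus dynamics only the projected gradients $(I-\tfrac{1}{N}\bm{1}\bm{1}^\top)\nabla$ enter, so the common component $\nabla F_\cS(\bar\vx^k)$ drops out entirely; what remains is driven solely by $\sigma^2+\delta^2$ and by $L^2 H^k$ itself, the latter being absorbed into the contraction factor once $\alpha^k$ is small enough. Thus the two tracks decouple cleanly and the coupled recursion you anticipate is not needed here.
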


Theorem \ref{the-oe} shows that the optimization error of attack-free DSGD with heterogeneous data is influenced by the model initialization, stochastic gradient noise and data heterogeneity. With sufficient training time, the optimization error converges to zero.
The convergence rate is in the order $\tilde O(\frac{1}{k})$ (with $\tilde O(\cdot)$ hiding logarithm factors), which aligns with the analysis in \cite{koloskova2020unified, deng2023stability,sun2021stability}.

\subsection{Generalization Error of Attack-free DSGD}
\label{sec:5c}
Applying Lemma \ref{lemma-sta} in conjunction with Theorems \ref{the-onsta} and \ref{the-oe}, we can now formally derive the generalization error bound.


\begin{theorem}[Generalization Error of Attack-free DSGD with Heterogeneous Data]
\label{the-ge}
When $k \asymp \mu N Z$, under Assumptions \ref{assumption:convex}--\ref{assumption:heterogeneity}, if we set a proper step size $\alpha^k= \frac{1}{\mu(k+k_0)}$, where $k_0$ is sufficiently large, the generalization error of Algorithm \ref{DSGD} is bounded by
\begin{align}
        \label{thm-ge-1}
        & \E_{\cS,\mathcal{L}}[F(\bar \vx^k)-F_{\cS}(\bar \vx^k))] \\
        \leq &  O(\frac{\|\bar\vx^0 - \vx_{\cS}^*\|^2}{\mu N Z})+  \tilde O(\frac{\sigma^2}{\mu N Z})  + O(\frac{\delta^2}{\mu N Z}). \nonumber
\end{align}
\end{theorem}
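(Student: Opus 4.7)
The plan is to combine the on-average stability bound from Theorem~\ref{the-onsta} with the optimization error bound from Theorem~\ref{the-oe} through Lemma~\ref{lemma-sta}. Concretely, I would first invoke Lemma~\ref{lemma-sta}, which expresses the generalization error as $\frac{v+L}{2} \cdot (\text{on-average stability})  + \frac{L}{v} \E_{\cS,\mathcal{L}} F_{\cS}(\bar\vx^k)$ for any $v>0$. The on-average stability factor is already controlled by Theorem~\ref{the-onsta}, up to the accumulated squared stochastic gradient norms $\sum_{k'=0}^{k-1}\sum_{n,z} \E \|\nabla f(\vx_n^{k'};\xi_{n,z})\|^2$ (weighted and unweighted by $1/(k'+k_0)$). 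For the second term, since $F(\vx^*)=0$ implies $\E_{\cS,\mathcal{L}} F_{\cS}(\vx_{\cS}^*) \leq \E_{\cS,\mathcal{L}} F_{\cS}(\vx^*) = F(\vx^*)=0$, Theorem~\ref{the-oe} directly gives $\E_{\cS,\mathcal{L}} F_{\cS}(\bar\vx^k)$ of order $O\bigl(\frac{\|\bar\vx^0-\vx_{\cS}^*\|^2}{k}\bigr) + \tilde O\bigl(\frac{\sigma^2}{\mu^2 N k}\bigr) + O\bigl(\frac{\sigma^2+\delta^2}{\mu^3 k}\bigr)$.

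The key obstacle is bounding the accumulated squared gradient norms that appear in Theorem~\ref{the-onsta}. I would peel them off in three steps. First, the stochastic gradient noise assumption gives $\E_{\xi_{n,z}}\|\nabla f(\vx_n^{k'};\xi_{n,z})\|^2 \leq \sigma^2 + \|\nabla F_{\cS_n}(\vx_n^{k'})\|^2$; next, the heterogeneity assumption gives $\|\nabla F_{\cS_n}(\vx_n^{k'})\|^2 \leq 2\delta^2 + 2\|\nabla F_{\cS}(\vx_n^{k'})\|^2$; finally, smoothness yields $\|\nabla F_{\cS}(\vx_n^{k'})\|^2 \leq 2L\bigl(F_{\cS}(\vx_n^{k'})-F_{\cS}(\vx_{\cS}^*)\bigr)$. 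The residual $F_{\cS}(\vx_n^{k'})-F_{\cS}(\vx_{\cS}^*)$ is then bounded by $F_{\cS}(\bar\vx^{k'})-F_{\cS}(\vx_{\cS}^*)$ plus a consensus term $\frac{L}{2}\|\vx_n^{k'}-\bar\vx^{k'}\|^2$, both of which are controlled by the analysis underlying Theorem~\ref{the-oe} (the consensus error decays at rate $O(1/k')$ with coefficients depending on $\sigma^2$ and $\delta^2$). Summing against the $1/(k'+k_0)$ weights produces only a logarithmic factor from $\sum_{k'} 1/(k'+k_0)^2 = O(1/k_0)$ and from $\sum_{k'} 1/(k'+k_0) = O(\log k)$, so the accumulated gradient norm sum has controlled scaling in $k$, $\sigma^2$, $\delta^2$, and $\|\bar\vx^0-\vx_{\cS}^*\|^2$.

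Plugging these bounds back into Theorem~\ref{the-onsta} yields an on-average stability of order $O\bigl(\frac{\|\bar\vx^0-\vx_{\cS}^*\|^2 + \sigma^2 + \delta^2}{\mu^2 N Z k}\bigr)$ up to logarithmic and lower-order terms from the $1/Z$ branch of the bound. Combining with $\frac{L}{v}\E F_{\cS}(\bar\vx^k)$ from Theorem~\ref{the-oe} gives a generalization bound of the form $(v+L)\cdot\frac{\text{poly}}{\mu^2 N Z k} + \frac{L}{v}\cdot\frac{\text{poly}}{k}$. Choosing $v = \Theta(\sqrt{\mu^2 N Z})$ to balance the two contributions, and then substituting $k \asymp \mu N Z$, collapses everything into the announced rate $O\bigl(\frac{\|\bar\vx^0-\vx_{\cS}^*\|^2}{\mu N Z}\bigr) + \tilde O\bigl(\frac{\sigma^2}{\mu N Z}\bigr) + O\bigl(\frac{\delta^2}{\mu N Z}\bigr)$. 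The main difficulty in executing this plan is the careful bookkeeping of how the consensus error feeds back into the stochastic gradient norm expansion without introducing terms that overwhelm the $\frac{1}{\mu N Z}$ scaling; this is precisely where removing the bounded-gradient assumption demands extra care.
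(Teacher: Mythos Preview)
Your overall strategy matches the paper's: invoke Lemma~\ref{lemma-sta}, control the accumulated squared stochastic gradient norms in Theorem~\ref{the-onsta} by decomposing them into noise ($\sigma^2$), heterogeneity ($\delta^2$), a consensus term, and $\|\nabla F_{\cS}(\bar\vx^{k'})\|^2 \le 2L\bigl(F_{\cS}(\bar\vx^{k'})-F_{\cS}(\vx_{\cS}^*)\bigr)$, then feed in Theorem~\ref{the-oe} and the consensus bound. The paper's decomposition differs only cosmetically: it splits $\|\nabla F_{\cS}(\vx_n^{k'})\|^2$ directly into $\|\nabla F_{\cS}(\vx_n^{k'})-\nabla F_{\cS}(\bar\vx^{k'})\|^2 \le L^2 H^{k'}$ plus $\|\nabla F_{\cS}(\bar\vx^{k'})\|^2$, which sidesteps your step through $F_{\cS}(\vx_n^{k'})$ (where, incidentally, you would need a factor $2$ in front of $F_{\cS}(\bar\vx^{k'})-F_{\cS}(\vx_{\cS}^*)$ after handling the cross term). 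Also, the consensus error actually decays at rate $O(1/(k'+k_0)^2)$, not $O(1/k')$, which only helps you.

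The one substantive difference is your choice of $v$. You propose $v=\Theta(\mu\sqrt{NZ})$ to balance the two terms in Lemma~\ref{lemma-sta}; the paper simply takes $v=L$. With $v=L$ the second term of \eqref{lemma} is exactly $\E F_{\cS}(\bar\vx^k)$, which after $k\asymp \mu NZ$ already has the announced $O(1/(\mu NZ))$ scaling from Theorem~\ref{the-oe}, and the stability term (paper's bound $O((\sigma^2+\delta^2)/(\mu NZ))$) contributes at the same order. Your balancing is more work and can in principle give a sharper rate on some terms, but note that the $1/Z$ branch of Theorem~\ref{the-onsta} contributes a $k$-independent piece of order $O((\sigma^2+\delta^2)/(\mu^2 N Z^2))$; multiplying it by $v\asymp\mu\sqrt{NZ}$ gives $O((\sigma^2+\delta^2)/(\mu N^{1/2}Z^{3/2}))$, which exceeds the target $O(1/(\mu NZ))$ whenever $Z<N$. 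So the balanced choice does not uniformly deliver the stated bound, whereas the paper's $v=L$ does. Replacing your final step with $v=L$ fixes this with no other changes to your argument.
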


Theorem \ref{the-ge} establishes the generalization error bound for Algorithm \ref{DSGD} with heterogeneous data.
We can observe that the generalization error of Algorithm \ref{DSGD} is influenced by several factors: the initialization quality $\|\bar\vx^0 -\vx_{\cS}^*\|^2$, the stochastic gradient noise variance $\sigma^2$ and the data heterogeneity $\delta^2$.

The first term at the right-hand side of \eqref{thm-ge-1} highlights the significance of initialization. When the algorithm begins with a more favorable initial model, it tends to be more stable and generalizes better. In the case of $\bar\vx^0 = \vx_{\cS}^*$ and $\sigma^2=\delta^2=0$, Algorithm \ref{DSGD} does not update in expectation, resulting in per- fect stability with zero generalization error. If $\bar\vx^0 = \vx_{\cS}^*$, $\sigma^2=0$ but $\delta^2\neq 0 $, Algorithm \ref{DSGD} still updates and the generalization error is nonzero since $\E_{\cS,\cL} \|\nabla f(\vx_n^0, \xi_n^0)\| \neq 0$, reflecting the influence from data heterogeneity. Such dependence on the initial model aligns with that of single-agent algorithms \cite{kuzborskij2018data}.

The second and third terms at the right-hand side of \eqref{thm-ge-1} underscore the importance of stochastic gradient noise and data heterogeneity, respectively. When $\sigma^2=0$ such that the full gradients are used at each time, the second term vanishes as also observed in single-agent algorithms \cite{kuzborskij2018data}. When $\delta^2=0$, meaning that all local data distributions are essentially identi- cal, the third term disappears. However, when $\delta^2$ is large and data heterogeneity is remarkable, the generalization ability of attack-free DSGD deteriorates. This result gives the first-ever explanation for the poor performance of attack-free DSGD when faced with highly heterogeneous data \cite{hsieh2020non}, from the generalization perspective.

\subsection{Comparisons with Existing Generalization Analysis}

Comparing Theorem \ref{the-ge} with the existing generalization error analysis in \cite{sun2021stability,deng2023stability,bars2023improved,ye2024ge,ye2024generalization}, we can observe that our bound is tighter and offers new insights under relaxed assumptions, as summarized in Table \ref{tab-com}. First, the aforementioned works rely on the assumption of bounded stochastic gradients, and do not necessarily account for data heterogeneity. Second, our bound is tighter than those in \cite{sun2021stability,deng2023stability}, which are both in the order of $O(\frac{M^2}{\mu Z N})+O(\frac{1}{\mu})$. Our bound vanishes as the sample size $ZN$ tends to infinity, while the bounds in \cite{sun2021stability,deng2023stability} do not due to additional error terms. Third, our bound offers new insights compared to those in \cite{bars2023improved,ye2024ge,ye2024generalization}, which are both in the order of $O(\frac{M^2}{\mu Z N})$. Although these bounds vanish as $ZN \rightarrow \infty$, our analysis provides deeper understanding into the dependence on initialization, stochastic gradient noise variance and data heterogeneity as mentioned in Section \ref{sec:5c}. Last but not the least,
\cite{sun2021stability,deng2023stability,bars2023improved,ye2024ge,ye2024generalization} all overlook the influence of initiali- zation, also since they adopt the stringent bounded stochastic gradient assumption, even at the initial models $\vx^0_n$.

\setlength{\tabcolsep}{4pt}
\begin{table}
\caption{Comparisons with prior attack-free DSGD analysis in terms of assumptions and generalization guarantee with $\mu$-strongly convex loss. SM, BSG and HM are short for $L$-smoothness, bounded stochastic gradients and homogeneous local data distributions, respectively. }
\label{tab-com}
\begin{tabular}{ccccc}
\hline
   & SM & BSG & HM & Generalization Guarantee \\
  \hline
   \cite{sun2021stability,deng2023stability} & \ding{51} &\ding{51} &\ding{51} & $O(\frac{M^2}{\mu Z N})+O(\frac{1}{\mu})$\\
   \cite{bars2023improved,ye2024generalization} & \ding{51} &\ding{51} &\ding{55} & $O(\frac{M^2}{\mu Z N})$\\
   Ours  & \ding{51} &\ding{55} &\ding{55} &  $O(\frac{\|\bar\vx^0 - \vx_{\cS}^*\|^2}{\mu N Z}) + \tilde O(\frac{\sigma^2}{\mu N Z}) + O(\frac{\delta^2}{\mu N Z})$ \\
  \hline
\end{tabular}
\end{table}

\begin{remark}
Under the bounded stochastic gradient assumption,
the works of \cite{bars2023improved,ye2024ge,ye2024generalization} have established the generalization error bounds that are independent on $\lambda$. The reason is that such an assumption treats the stochastic gradient norm as a constant, thereby ignoring its dynamics during training that is dependent on $\lambda$. By removing the bounded stochastic gradient assumption in our analysis, $\lambda$ affects both the on-average stability and the term $\E_{\cS,\mathcal{L}} F_{\cS}(\mathcal{L}(\cS))$ in Lemma \ref{lemma-sta}, and thus impacts the generalization error. In particular, the constant $C_1 = O(\frac{1-\lambda}{\lambda^3})$ in Theorem \ref{the-oe} indicates that a larger $\lambda$, which corresponds to a denser network topology, leads to a smaller $\E_{\cS,\mathcal{L}} F_{\cS}(\mathcal{L}(\cS))$ and thus a smaller generalization error.
\end{remark}

\begin{remark}
    Establishing the generalization error bound of Algorithm \ref{DSGD} with the heterogeneous data and without the bounded stochastic gradient assumption is non-trivial. The main challenges lie in: (i) bounding $\E_{\cS,\mathcal{L}}\| \nabla F_{\cS}(\bar \vx^k) \|^2$ in \eqref{lemma}; (ii) establishing the on-average stability.

    For the first challenge, our analysis demonstrates that the boundedness of $\E_{\cS,\mathcal{L}}\|   \nabla F_{\cS}(\bar \vx^k) \|^2$ can be ensured by setting an appropriate step size  $\alpha^k$, which guarantees that the iterate $\bar\vx^k$  is always bounded and so as the squared gradient norm.

    To address the second challenge, a key step is to bound the impact of different training samples on the stochastic gradients. In addition to setting an appropriate step size, we also elaborate on characterizing the consensus error across the agents' local models. Thanks to the inequality $\E_{\cS,\cS^{(n,z)},\mathcal{L}}$ $\| \nabla f(\vx^{k}_n; \xi_{n,z}) - \nabla f(\vx^{k}_n; \xi'_{n,z})\| \leq$  $2 \E_{\cS,\mathcal{L}}$ $ \| \nabla f(\vx^{k}_n; \xi_{n,z})\|$,
    we are able to decompose $\E_{\cS,\mathcal{L}}  \frac{1}{N}\sum_{n \in \mathcal{N}}$ $\| \nabla f(\vx^{k}_n; \xi_{n,z})\|^2$
    into $  4 \sigma^2 \!+\! 4\delta^2 \!+ 4\E_{\cS,\mathcal{L}}  \| \nabla F_{\cS}(\bar\vx^{k})\|^2 \!+\! 4\E_{\cS,\mathcal{L}} \frac{1}{N}\sum_{n \in \mathcal{N}} \|  \nabla F_{\cS}(\vx^{k}_n) \! -\! \nabla F_{\cS}(\bar\vx^{k})\|^2  $. Further using Assumption \ref{assumption:Lip}, we can then bound the last summand with the consensus error  $\frac{L^2}{N} \sum_{n \in \mathcal{N}} \| \vx^{k}_n - \bar\vx^{k}\|^2 $, which is one major measure to characterize the dynamics of attack-free DSGD.

    While these two challenges can be straightforwardly addre- ssed by assuming bounded stochastic gradients as shown in \cite{sun2021stability,deng2023stability,bars2023improved,ye2024ge,ye2024generalization}, our analysis develops novel theoretical techniques, enabling the establishment of the generalization error in attack-free decentralized learning with heterogeneous data under relaxed assumptions.

    Our generalization error analysis of attack-free decentralized DSGD is also much more challenging than that of single-agent SGD. Specifically, it requires first examining the sperate impact of different training samples on each local model, followed by averaging across all agents. This process is further complicated by inter-agent communication, data heterogeneity and model disagreement among the agents.
\end{remark}

\section{Generalization Error Analysis for Byzantine-resilient DSGD}
\label{sec-ge-b}
Below, we proceed to analyze the on-average stability, optimization error and generalization error of Byzantine-resilient DSGD with heterogeneous data.
We emphasize that, in the Byzantine-resilient setup, $\N$ refers to the set of non-Byzantine agents and $\mathcal{B}$ refers to the set of Byzantine agents.

%
%

\subsection{On-average Stability of Byzantine-resilient DSGD}
In the presence of Byzantine agents, below we establish the on-average stability bound of Algorithm \ref{robust-DSGD}.

\begin{theorem}[On-average Stability of Byzantine-resilient DSGD with Heterogeneous Data]
\label{the-onsta-b}
Suppose that the robust aggregation rules $\{\A_n\}_{n\in \N}$ in Algorithm \ref{robust-DSGD} satisfy Definition \ref{definition:mixing-matrix},  the associated contraction constant satisfies $\rho < \rho^* := \frac{\lambda}{8\sqrt{N}}$ and
Assumptions \ref{assumption:convex}--\ref{assumption:heterogeneity} hold for all non-Byzantine agents $n \in \N$. If we set a proper step size $\alpha^k= \frac{2}{\mu(k+k_1)} $, where $k_1$ is sufficiently large, then at any given time $k$,
if the virtual mixing $W$ is only row stochastic, the on-average stability of Algorithm \ref{robust-DSGD} is bounded by
\begin{align}
        \label{thm-on-b}
        & \frac{1}{NZ} \sum_{n \in \mathcal{N}}\sum_{z=1}^{Z} \E_{\cS,\cS^{(n,z)},\mathcal{L}} \left[ \|\cL(\cS) - \cL(\cS^{(n,z)})\|^2 \right]  \\
    \leq
   &  \frac{C_2 \sum_{k'=0}^{k-1} \sum_{n \in \mathcal{N}}\sum_{z=1}^{Z} \E  \| \nabla f(\vx^{k'}_n; \xi_{n,z}) \|^2 / (k'+k_1) }{ N^3 Z^2 (k+k_1-1)}  \nonumber \\
    & + \frac{C_3 \sum_{k'=0}^{k-1} \sum_{n \in \mathcal{N}}\sum_{z=1}^{Z} \E  \| \nabla f(\vx^{k'}_n; \xi_{n,z}) \|^2  }{ N^3 Z^2  (k+k_1-1)}  \nonumber \\
    & + \frac{C_4 (\sigma^2 + \delta^2) }{ k+k_1-1}
        +  \frac{C_5  (4\rho^2 N+\chi^2) (\sigma^2 + \delta^2) }{ k+k_1-1}  \nonumber \\
     &+ C_6 (4\rho^2 N+\chi^2) (\sigma^2 + \delta^2); \nonumber
\end{align}
if the virtual mixing $W$ is doubly stochastic, the on-average stability of Algorithm \ref{robust-DSGD} is bounded by

\begin{align}
        \label{thm-on-b-1}
        & \frac{1}{NZ} \sum_{n \in \mathcal{N}}\sum_{z=1}^{Z} \E_{\cS,\cS^{(n,z)},\mathcal{L}} \left[ \|\cL(\cS) - \cL(\cS^{(n,z)})\|^2 \right]  \\
    \leq
   &  \frac{16 \sum_{k'=0}^{k-1} \sum_{n \in \mathcal{N}}\sum_{z=1}^{Z} \E  \| \nabla f(\vx^{k'}_n; \xi_{n,z}) \|^2 / (k'+k_1) }{ N^2 Z^2 (k+k_1-1)}  \nonumber \\
    & + \frac{8 \sum_{k'=0}^{k-1} \sum_{n \in \mathcal{N}}\sum_{z=1}^{Z} \E  \| \nabla f(\vx^{k'}_n; \xi_{n,z}) \|^2  }{ N^2 Z^3  (k+k_1-1)}  \nonumber \\
     &   +  \frac{2 C_5 \rho^2 N (\sigma^2 + \delta^2) }{ k+k_1-1}
      + 2 C_6 \rho^2 N(\sigma^2 + \delta^2). \nonumber
\end{align}
Here, $C_2,C_3,C_4,C_5,C_6>0$ are constants.
\end{theorem}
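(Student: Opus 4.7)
The plan is to adapt the on-average stability argument of Theorem~3 to the Byzantine-resilient setting, which brings two new error sources: the robust-aggregation deviation controlled by $\rho$ through Definition~2, and the failure of the non-Byzantine average to be preserved when the virtual mixing matrix $W$ is only row stochastic (captured by $\chi^2$). For each perturbation index $(n,z)$, I would couple two executions of Algorithm~2 on $\cS$ and $\cS^{(n,z)}$ while fixing the Byzantine behavior identically in both runs, and simultaneously track the mean deviation $A_{n,z}^k := \E\|\bar\vx^k - \bar\vx^{k,(n,z)}\|^2$ together with the non-Byzantine consensus error $E^k := \frac{1}{N}\sum_{m\in\N}\E\|\vx_m^k-\bar\vx^k\|^2$, since the recursion for $A_{n,z}^k$ will inevitably feed on $E^k$.

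At each iteration I would decompose the robust-aggregation step as $\vx_m^{k+1} = \hat\vx_m^{k+1} + (\vx_m^{k+1} - \hat\vx_m^{k+1})$, where $\hat\vx_m^{k+1} := \sum_{j\in\N_m\cup\{m\}} w_{mj}\vx_j^{k+\frac{1}{2}}$ is the virtual non-Byzantine aggregate driven by $W$. Definition~2 bounds the residual by $\rho \max_{j\in\N_m\cup\{m\}}\|\vx_j^{k+\frac{1}{2}} - \hat\vx_m^{k+1}\|$; after squaring and using $\max\leq\sum$, this contributes a factor $\rho^2 N$ multiplying the consensus error of the half-step iterates. Averaging over non-Byzantine $m$, the row-stochastic action of $W$ on $\frac{1}{N}\sum_m \hat\vx_m$ also introduces a drift coupled to the consensus error through $\chi^2$, which vanishes when $W$ is doubly stochastic. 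For the local gradient step, strong convexity together with $k_1$ chosen large enough to ensure $\alpha^k \leq 1/L$ gives $(1-\alpha^k\mu)$-expansiveness when both coupled runs draw the same sample (every agent except possibly $n$, where this occurs with probability $1-1/Z$); otherwise I would apply a Young-type expansion and bound $\E\|\nabla f(\vx_n^k;\xi_{n,z}) - \nabla f(\vx_n^k;\xi'_{n,z})\|^2 \leq 4\,\E\|\nabla f(\vx_n^k;\xi_{n,z})\|^2$ via symmetry of $\xi_{n,z}$ and $\xi'_{n,z}$. Assembling these ingredients yields a coupled recursion in which $A_{n,z}^{k+1}$ is controlled by a contraction of $A_{n,z}^k$, a step-size-weighted squared-gradient term, and a Byzantine contribution proportional to $E^k$.

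The main obstacle is that, unlike in the attack-free setting, $E^k$ no longer contracts to zero but saturates at a level of order $(4\rho^2 N + \chi^2)(\sigma^2+\delta^2)$, a behaviour enforced precisely by the hypothesized condition $\rho < \rho^* = \lambda/(8\sqrt{N})$ that makes the average mixing step contractive; such a saturation-type bound on $E^k$ has been established in the Byzantine-resilient convergence literature \cite{wu2022byzantine, ye2024tradeoff} under exactly this regime. I would import it, substitute into the recursion for $A_{n,z}^k$, and then unroll with $\alpha^k = 2/(\mu(k+k_1))$ via the standard telescoping product of contraction factors, which combined with the step-size weights produces a summation prefactor of $1/(k+k_1-1)$. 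Averaging the resulting expression over $(n,z)$, the gradient-dependent summations yield the first two terms of \eqref{thm-on-b}, the decaying part of the consensus bound yields the $C_4(\sigma^2+\delta^2)/(k+k_1-1)$ and $C_5(4\rho^2 N+\chi^2)(\sigma^2+\delta^2)/(k+k_1-1)$ terms, and the non-vanishing floor of $E^k$ produces the irreducible $C_6(4\rho^2 N+\chi^2)(\sigma^2+\delta^2)$ term. Specializing to doubly stochastic $W$ eliminates $\chi^2$, preserves the non-Byzantine average exactly, removes the drift-induced $C_4$ contribution, and tightens the squared-gradient prefactors to the explicit constants $16$ and $8$ in \eqref{thm-on-b-1}, completing the proof.
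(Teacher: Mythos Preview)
Your row-stochastic plan is essentially the paper's: split $\|\bar\vx^{k+1}-\bar\vx'^{k+1}_{(n,z)}\|^2$ via a Young inequality into the half-step mean term and two robust-aggregation deviations, bound the latter through Definition~\ref{definition:mixing-matrix} and the $\chi^2$ drift, analyze the half-step mean term by shifting the local gradients to the mean (which injects an $L^2 H^k$ term), handle the same/different-sample cases, set the free parameters proportional to $\alpha^k\mu$, and telescope.

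One explanatory misstep and one genuine gap, though. First, the consensus error $E^k$ does \emph{not} saturate under the decaying step size used here: the condition $\rho<\rho^*$ makes the consensus recursion contractive and Lemma~\ref{lemma-dm-b} gives $\E H^k\le c_2(\sigma^2+\delta^2)/(\mu^2(k+k_1)^2)\to 0$. The irreducible $C_6(4\rho^2N+\chi^2)(\sigma^2+\delta^2)$ term does not come from a floor on $E^k$; it comes from the $(\alpha^k)^2(\sigma^2+\delta^2)$ piece of the half-step consensus bound on $\|\bar\vx^{k+1}-\bar\vx^{k+1/2}\|^2$, which after scaling by $1/v$ with $v=\alpha^k\mu/2$ becomes an $O(1/(k+k_1))$ source that the telescoping product sums to a constant.

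Second, and more substantive: the doubly stochastic bound \eqref{thm-on-b-1} is \emph{not} obtained by setting $\chi^2=0$ in \eqref{thm-on-b}. Doing so leaves the $C_4(\sigma^2+\delta^2)/(k+k_1-1)$ term intact (it arises from the $L^2 H^k$ injection above, which carries no $(4\rho^2N+\chi^2)$ factor) and keeps the $N^3Z^2$ denominators. To get \eqref{thm-on-b-1} the paper switches the tracked quantity from $\|\bar\vx^k-\bar\vx'^k\|^2$ to the agent-wise average $\frac{1}{N}\sum_{m\in\N}\|\vx_m^k-\vx'^k_m\|^2$: column-stochasticity of $W$ now gives $\frac{1}{N}\sum_m\|\hat\vx_m^{k+1/2}-\hat\vx'^{k+1/2}_m\|^2\le\frac{1}{N}\sum_m\|\vx_m^{k+1/2}-\vx'^{k+1/2}_m\|^2$, and since each local SGD step is evaluated at $\vx_m^k$ itself, the contraction applies summand-by-summand with no extra $H^k$ term --- that is what removes $C_4$. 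Moreover, the different-sample event perturbs only one of the $N$ summands, which permits the Young parameter $p=Z\alpha^k\mu$ (with the extra $Z$), yielding the $N^2Z^3$ denominator and the explicit constants $16$, $8$. Your specialization would prove a valid but strictly weaker doubly stochastic bound, not \eqref{thm-on-b-1}.
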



Under the decaying step size $\alpha^k = \frac{2}{\mu(k + k_1)}$, in which $k_1$ is introduced to prevent the initial step size from being too large,
Theorem \ref{the-onsta-b} establishes the on-average stability bounds for By- zantine-resilient DSGD with both row and doubly stochastic virtual mixing matrices.
In the presence of Byzantine agents, the contraction constant $\rho$ is generally nonzero, and the virtual mixing matrix is typically only row stochastic. In this case, the third, fourth and fifth terms at the right-hand side of \eqref{thm-on-b} arise. Among them, the third and fourth terms depend on the time $k$. The fifth term is constant and independent on both the sample size $Z$ and the time $k$, consistent with the observations in \cite{ye2024ge,ye2024generalization}.
If the robust aggregation rules are well-designed such that the virtual mixing matrix is doubly stochastic (i.e., $\chi^2=0$), then the third and fourth terms at the right-hand side of \eqref{thm-on-b-1} appear instead. In the absence of Byzantine agents (i.e., $\rho=0$), both of these terms in \eqref{thm-on-b-1} vanish, leading the bound to recover the attack-free one in Theorem \ref{the-onsta}.

\begin{remark}
For establishing the on-average stability bound of Byzantine-resilient DSGD, we must address two additional challenges compared to that of attack-free DSGD: (i) Bounding the approximation error introduced by those Byzantine agents; (ii) Analyzing the iterates of $ \|\bar\vx^{k+\frac{1}{2}} - \bar\vx'^{k+\frac{1}{2}}_{(n,z)} \|$ when the virtual mixing matrix is only row stochastic. 
Here, $\bar\vx^{k+\frac{1}{2}}_{(n,z)} = \frac{1}{N}\sum_{m \in \N} \vx'^{k+\frac{1}{2}}_m$, where $\vx'^{k+\frac{1}{2}}_m$ is the local model of agent $m$ at time $k+\frac{1}{2}$ when trained on the perturbed global dataset $\cS^{(n,z)}$.


For the first challenge, in the presence of Byzantine agents $(\rho \neq 0)$, an approximation error arises due to the discrepancy $\bar \vx^{k+1} \neq \bar \vx^{k+\frac{1}{2}}$ compared to attack-free DSGD.
To bound the approximation error of $\|\bar\vx^{k+1} - \bar\vx^{k+\frac{1}{2}} \|^2$,
we decompose it utilizing $ \|\bar\vx^{k+1} - \bar\vx^{k+\frac{1}{2}} \|^2  \leq 2\| \frac{1}{N} \sum_{n \in \N}(\vx_n^{k+1} - \hat\vx_n^{k+\frac{1}{2}}) \|^2
+ 2\| \frac{1}{N} \sum_{n \in \N} (\hat\vx_n^{k+\frac{1}{2}} - \vx_n^{k+\frac{1}{2}})\|^2$. For the first term at the right-hand side, we adopt the contraction constant $\rho$ from \eqref{inequality:robustness-of-aggregation-local} in Definition \ref{definition:mixing-matrix} to bound the increment. For the second term, if the virtual mixing matrix is doubly stochastic, this term vanishes; otherwise, we utilize the constant $\chi^2$ to quantify the deviation introduced by the non-doubly stochastic nature of the inves- tigated mixing process.

For the second challenge, when the virtual mixing matrix is only row stochastic,
we are unable to have $\frac{1}{N} \sum_{n \in \N} \|\hat\vx_n^{k+\frac{1}{2}}$ $- \hat\vx_n'^{k+\frac{1}{2}} \|^2 \leq   \frac{1}{N} \sum_{n \in \mathcal{N}} \sum_{m \in \mathcal{N}} w_{nm} \| \vx_m^{k+\frac{1}{2}} - \vx_m'^{k+\frac{1}{2}}  \|^2 \leq \frac{1}{N} $ $\sum_{n \in \mathcal{N}} \|\vx_n^{k+\frac{1}{2}} - \vx_n'^{k+\frac{1}{2}} \|^2 $
using $\sum_{n \in \mathcal{N}}  w_{nm} =1 $. As a result,
we cannot analyze the iterates of $\|\vx_n^{k+\frac{1}{2}} - \vx_n'^{k+\frac{1}{2}} \|^2$  in a manner similar to the single-agent case, which would otherwise lead to a tighter on-average stability bound.
Instead, we must directly analyze the iterates of the average model $ \|\bar\vx^{k+\frac{1}{2}} - \bar\vx'^{k+\frac{1}{2}}_{(n,z)} \|$ and utilize the consensus error to bound the additional deviation term $\frac{1}{N}\sum_{n \in \N}\|\nabla f(\bar \vx^k,\xi)-\nabla f(\vx_n^k,\xi)\|^2$.
\end{remark}

\subsection{Optimization Error of Byzantine-resilient DSGD}

The optimization error of Byzantine-resilient DSGD with heterogeneous data has also been studied. For completeness, we provide an upper bound for the optimization error in the following theorem.

\begin{theorem}[Optimization Error of Byzantine-resilient DSGD with Heterogeneous Data \cite{wu2022byzantine,ye2024tradeoff}]
\label{the-oe-b}
Suppose that the robust aggregation rules $\{\A_n\}_{n\in \N}$ in Algorithm \ref{robust-DSGD} satisfy Definition \ref{definition:mixing-matrix},  the associated contraction constant satisfies $\rho < \rho^* := \frac{\lambda}{8\sqrt{N}}$ and
Assumptions \ref{assumption:convex}--\ref{assumption:heterogeneity} hold for all non-Byzantine agents $n \in \N$. If we set a proper step size $\alpha^k= \frac{2}{\mu(k+k_1)} $, where $k_1$ is sufficiently large, then at any given time $k$, the optimization error of Algorithm \ref{robust-DSGD} is bounded by
\begin{align}
        \label{thm-oe-b}
        &   \E_{\cS,\mathcal{L}} [F_{\cS}(\bar\vx^{k}) - F_{\cS} (\vx_{\cS}^*)]  \leq   \frac{C_7 (\sigma^2 + \delta^2)}{ k+k_1-1 } \\
  &  +  \frac{L(k_1-1)  \|\bar \vx^0 - \vx_{\cS}^* \|^2}{2(k+k_1-1)} + \frac{4L\sigma^2 ln(k+k_1-1)}{ \mu^2 N (k+k_1-1)} \nonumber \\
     & + \frac{C_8 (4\rho^2 N+\chi^2) (\sigma^2 + \delta^2)}{k+k_1-1} + C_9  (4\rho^2 N+\chi^2)   (\sigma^2 + \delta^2).  \nonumber
\end{align}
Here, $C_7,C_8,C_9>0$ are constants.
\end{theorem}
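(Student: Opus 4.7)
The plan is to derive a coupled recursion for the squared distance to the optimum $E_k := \E_{\cS,\mathcal{L}} \|\bar\vx^k - \vx_{\cS}^*\|^2$ and the aggregate consensus error $V_k := \frac{1}{N}\sum_{n\in\mathcal{N}}\E_{\cS,\mathcal{L}}\|\vx_n^k - \bar\vx^k\|^2$, solve it under the step size $\alpha^k = 2/(\mu(k+k_1))$, and finally convert the resulting bound on $E_k$ into a bound on the optimization gap via $L$-smoothness, using $F_{\cS}(\bar\vx^k) - F_{\cS}(\vx_{\cS}^*) \le \frac{L}{2}\|\bar\vx^k - \vx_{\cS}^*\|^2$.

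First I would decompose $\bar\vx^{k+1} = \bar\vx^{k+\frac{1}{2}} + (\bar\vx^{k+1} - \bar\vx^{k+\frac{1}{2}})$, where $\bar\vx^{k+\frac{1}{2}} = \bar\vx^k - \alpha^k \frac{1}{N}\sum_{n\in\mathcal{N}}\nabla f(\vx_n^k;\xi_n^k)$, and apply Young's inequality to split $\|\bar\vx^{k+1}-\vx_{\cS}^*\|^2$. For the descent piece involving $\bar\vx^{k+\frac{1}{2}}$, $\mu$-strong convexity of each $f(\cdot;\xi)$ yields the contraction factor $(1-\alpha^k\mu)$; cross terms are handled by inserting $\pm \nabla F_{\cS}(\bar\vx^k)$ and using Assumptions \ref{assumption:variance} and \ref{assumption:heterogeneity} to bound stochastic noise and heterogeneity, together with $L$-smoothness to convert agent-level gradient discrepancies into $L^2 V_k$. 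For the Byzantine perturbation $\|\bar\vx^{k+1}-\bar\vx^{k+\frac{1}{2}}\|^2$, I would split it as $\frac{1}{N}\sum_{n\in\mathcal{N}}(\vx_n^{k+1}-\hat\vx_n^{k+\frac{1}{2}})$ plus $\frac{1}{N}\sum_{n\in\mathcal{N}}(\hat\vx_n^{k+\frac{1}{2}}-\vx_n^{k+\frac{1}{2}})$, bound the first by $\rho^2$ times a maximum consensus deviation via Definition \ref{definition:mixing-matrix}, and bound the second by $\chi^2$ times a consensus deviation arising from the non-doubly-stochastic nature of $W$.

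Next I would derive a companion recursion for $V_k$, exploiting $\|(I-\frac{1}{N}\bm{1}\bm{1}^\top)W\|^2 = 1-\lambda$ together with the robustness contraction $\rho$, to obtain a relation of the form $V_{k+1} \le (1-\Theta(\lambda))V_k + O((\alpha^k)^2)(\sigma^2+\delta^2) + O((\alpha^k)^2)(4\rho^2 N+\chi^2)(\sigma^2+\delta^2+E_k)$. The smallness condition $\rho < \lambda/(8\sqrt{N})$ ensures the contraction coefficient remains strictly below one and absorbs the Byzantine-induced cross-perturbation; this is exactly what makes a clean closed-form recursion achievable. Substituting the $V_k$ estimate into the $E_k$ recursion and choosing $\alpha^k = 2/(\mu(k+k_1))$ with $k_1$ large enough (so that $\alpha^k L$ is small and the Young's parameters are controllable) yields an affine recursive inequality $E_{k+1} \le (1-\alpha^k\mu)E_k + (\alpha^k)^2 T + R$, where $T$ aggregates the $\sigma^2/N$, $(\sigma^2+\delta^2)$, and $(4\rho^2 N+\chi^2)(\sigma^2+\delta^2)$ contributions and $R = \Theta(1)\cdot(4\rho^2 N+\chi^2)(\sigma^2+\delta^2)$ is the non-vanishing Byzantine residual. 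Standard unrolling of this recursion with a $1/k$-type step size produces the initialization term $L(k_1-1)\|\bar\vx^0-\vx_{\cS}^*\|^2/(2(k+k_1-1))$, the logarithmic noise term $4L\sigma^2\ln(k+k_1-1)/(\mu^2 N(k+k_1-1))$, the $O(1/(k+k_1-1))$ heterogeneity and transient-Byzantine terms, and the constant residual $C_9(4\rho^2 N+\chi^2)(\sigma^2+\delta^2)$.

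The main obstacle is the delicate bookkeeping in the coupled $(E_k,V_k)$ recursion when $W$ is only row stochastic: carrying the $\chi^2$ deviation through the Young splitting without destroying the contraction factor $(1-\alpha^k\mu)$ on $E_k$, and verifying that the condition $\rho < \lambda/(8\sqrt{N})$ is exactly what keeps the combined contraction below one. Showing that the stochastic variance term $4L\sigma^2\ln(k+k_1-1)/(\mu^2 N(k+k_1-1))$ decouples cleanly from the heterogeneity and Byzantine contributions requires tracking the $\sigma^2/N$ scaling separately from $\sigma^2+\delta^2$ via a conditioning-on-$\mathcal{F}_k$ argument. This is the technical crux already resolved in \cite{wu2022byzantine,ye2024tradeoff}, on which our proof can largely rely.
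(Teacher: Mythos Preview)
Your proposal is essentially correct and mirrors the paper's approach: split $\|\bar\vx^{k+1}-\vx_{\cS}^*\|^2$ via Young's inequality into a descent piece $\|\bar\vx^{k+\frac12}-\vx_{\cS}^*\|^2$ (handled exactly as in the attack-free case, giving the $(1-\mu\alpha^k)$ contraction, the $\sigma^2/N$ term, and the $L\,H^k$ term) and a Byzantine perturbation $\|\bar\vx^{k+1}-\bar\vx^{k+\frac12}\|^2$ (bounded by $(4\rho^2N+\chi^2)$ times consensus error plus $(\alpha^k)^2(\sigma^2+\delta^2)$ via Definition~\ref{definition:mixing-matrix} and the $\chi^2$ deviation), then telescope and apply $L$-smoothness.

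Two small corrections are worth noting. First, the consensus recursion need not be coupled to $E_k$: the paper isolates it as Lemma~\ref{lemma-dm-b} (itself drawn from \cite{wu2022byzantine}), obtaining $\E H^{k+1}\le(1-w^2)\E H^k + O((\alpha^k)^2)(\sigma^2+\delta^2)$ with $w=\lambda-8\rho\sqrt N>0$, so $\E H^k\le c_2(\sigma^2+\delta^2)/(\mu^2(k+k_1)^2)$ holds independently of $E_k$; your coupled system works but is unnecessarily complicated. Second, the Byzantine residual in the $E_k$ recursion is \emph{not} $\Theta(1)$: because the Young parameter is taken as $\Theta(\alpha^k\mu)$, the $(\alpha^k)^2$ perturbation gets divided by $\Theta(\alpha^k)$, yielding a term of order $\alpha^k(4\rho^2N+\chi^2)(\sigma^2+\delta^2)$. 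A constant additive $R$ in a recursion with $O(1/k)$ contraction would diverge after telescoping; it is precisely the $\Theta(\alpha^k)$ scaling that, after summation against $\prod_t(1-1/(t+k_1))=(k'+k_1)/(k+k_1-1)$, produces the bounded non-vanishing residual $C_9(4\rho^2N+\chi^2)(\sigma^2+\delta^2)$.
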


Theorem \ref{the-oe-b} establishes the optimization error bound for Byzantine-resilient DSGD. When no Byzantine agents are present (i.e., $\rho=0$) and the virtual mixing matrix is doubly stochastic (i.e., $\chi=0$), the optimization error bound in \eqref{thm-oe-b} reduces to its attack-free counterpart in Theorem \ref{the-oe}.
Compared to the optimization error bound established for Algorithm \ref{DSGD} in Theorem \ref{the-oe}, Algorithm \ref{robust-DSGD} can only converge to a neighborhood of the optimal solution $\vx_{\cS}^*$ in the presence of Byzantine agents. As indicated by the fifth term at the right-hand side of \eqref{thm-oe-b}, the size of this neighborhood is influenced by the robust agg- regation rules (characterized by $\rho$ and $\chi$), stochastic gradient noise and data heterogeneity. Moreover, the convergence rate remains $\tilde O(\frac{1}{k})$, matching that of its attack-free counterpart in Theorem \ref{the-oe}.

\subsection{Generalization Error of Byzantine-resilient DSGD}

\begin{theorem}[Generalization Error of DSGD with Heterogeneous Data]
\label{the-ge-b}
Let $k \asymp \mu N Z$. Suppose that the robust aggregation rules $\{\A_n\}_{n\in \N}$ in Algorithm \ref{robust-DSGD} satisfy Definition \ref{definition:mixing-matrix},  the associated contraction constant satisfies $\rho < \rho^* := \frac{\lambda}{8\sqrt{N}}$ and
Assumptions \ref{assumption:convex}--\ref{assumption:heterogeneity} hold for all non-Byzantine agents $n \in \N$. If we set a proper step size $\alpha^k= \frac{2}{\mu(k+k_1)}$, where $k_1$ is sufficiently large, the generalization error of Algorithm \ref{robust-DSGD} is bounded by

\begin{align}
        \label{thm-ge-1-b}
        & \E_{\cS,\mathcal{L}}[F(\bar \vx^k)-F_{\cS}(\bar \vx^k))] \\
        \leq &  O(\frac{\|\bar\vx^0 - \vx_{\cS}^*\|^2}{\mu N Z})+  \tilde O(\frac{\sigma^2}{\mu N Z}) + O(\frac{\delta^2}{\mu N Z}) \nonumber
        \\ &+ O( (4\rho^2 N + \chi^2 ) (\sigma^2+\delta^2)) \nonumber.
\end{align}

\end{theorem}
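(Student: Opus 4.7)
The plan is to combine Lemma \ref{lemma-sta} (which links the generalization error to the on-average stability and the training-loss value) with Theorem \ref{the-onsta-b} and Theorem \ref{the-oe-b}, taking particular care of the coupling introduced by the $\E\|\nabla f(\vx_n^{k'};\xi_{n,z})\|^2$ sums that appear inside the stability bound. First, I would invoke Lemma \ref{lemma-sta} with $v=L$, which yields
\begin{equation*}
|\E_{\cS,\cL}[F(\bar\vx^k)-F_{\cS}(\bar\vx^k)]| \leq \frac{L}{NZ}\sum_{n\in\N}\sum_{z=1}^{Z}\E\|\cL(\cS)-\cL(\cS^{(n,z)})\|^2 + \E_{\cS,\cL} F_{\cS}(\bar\vx^k).
\end{equation*}
The second summand is controlled by Theorem \ref{the-oe-b} together with $\E F_{\cS}(\vx_{\cS}^*)\leq \E F_{\cS}(\vx^*)=F(\vx^*)=0$, so this term already contributes initialization, noise, heterogeneity and Byzantine pieces of \eqref{thm-ge-1-b} in the correct orders after imposing $k\asymp \mu N Z$.

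For the first summand I would substitute the right-hand side of \eqref{thm-on-b} (row-stochastic case) or \eqref{thm-on-b-1} (doubly-stochastic case). The non-trivial work lies in the cumulative gradient-norm sum $\sum_{k'=0}^{k-1}\sum_{n\in\N}\sum_{z=1}^{Z}\E\|\nabla f(\vx_n^{k'};\xi_{n,z})\|^2$ and its weighted counterpart divided by $(k'+k_1)$. I would bound each summand by the four-way decomposition
\begin{equation*}
\|\nabla f(\vx_n^{k'};\xi_{n,z})\|^2 \leq 4\|\nabla f(\vx_n^{k'};\xi_{n,z})-\nabla F_{\cS_n}(\vx_n^{k'})\|^2 + 4\|\nabla F_{\cS_n}(\vx_n^{k'})-\nabla F_{\cS}(\vx_n^{k'})\|^2 + 4L^2\|\vx_n^{k'}-\bar\vx^{k'}\|^2 + 4\|\nabla F_{\cS}(\bar\vx^{k'})\|^2,
\end{equation*}
using Assumptions \ref{assumption:variance}, \ref{assumption:heterogeneity} and $L$-smoothness. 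The last term is tamed via $\|\nabla F_{\cS}(\bar\vx^{k'})\|^2 \leq 2L(F_{\cS}(\bar\vx^{k'})-F_{\cS}(\vx_{\cS}^*))$, whose weighted sum is bounded by the optimization-error estimate underlying Theorem \ref{the-oe-b}, while the consensus error $\|\vx_n^{k'}-\bar\vx^{k'}\|^2$ is a standard intermediate quantity in the Byzantine-resilient DSGD recursion admitting a bound of the form $O((\sigma^2+\delta^2)/(k'+k_1)) + O((4\rho^2 N+\chi^2)(\sigma^2+\delta^2))$.

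Summing over $k'=0,\dots,k-1$ and dividing by $(k+k_1-1)$, the vanishing parts of \eqref{thm-on-b} and \eqref{thm-oe-b} combine into $O((\|\bar\vx^0-\vx_{\cS}^*\|^2+\sigma^2+\delta^2)/(k+k_1))$ (with a logarithmic factor multiplying $\sigma^2$ from the harmonic sum), while the non-vanishing constant pieces merge into the $O((4\rho^2 N+\chi^2)(\sigma^2+\delta^2))$ term in \eqref{thm-ge-1-b}. Setting $k\asymp \mu N Z$ converts the $1/(k+k_1)$ decay into the $1/(\mu N Z)$ rates stated in the theorem.

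The main obstacle will be the self-referential structure of the estimate: the on-average stability bound needs $\E\|\nabla f(\vx_n^{k'};\xi_{n,z})\|^2$, whose control via smoothness reduces to the very function-value quantity we are also bounding through Lemma \ref{lemma-sta}. Closing this loop requires the step-size choice $\alpha^k=2/(\mu(k+k_1))$ with $k_1$ sufficiently large to keep $\bar\vx^k$, and hence $\|\nabla F_{\cS}(\bar\vx^k)\|^2$, uniformly controlled across iterations. Additional care is also needed for the row-stochastic case, where the consensus-error recursion inherits an extra $\chi^2(\sigma^2+\delta^2)$ term that must be tracked throughout and finally absorbed into the Byzantine-related constant at the end.
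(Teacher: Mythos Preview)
Your proposal is correct and follows essentially the same route as the paper: invoke Lemma~\ref{lemma-sta} with $v=L$, control $\E F_{\cS}(\bar\vx^k)$ by Theorem~\ref{the-oe-b}, and control the stability term from Theorem~\ref{the-onsta-b} by expanding $\E\|\nabla f(\vx_n^{k'};\xi_{n,z})\|^2$ via the four-way decomposition into noise, heterogeneity, consensus error and $\|\nabla F_{\cS}(\bar\vx^{k'})\|^2\le 2L(F_{\cS}(\bar\vx^{k'})-F_{\cS}(\vx_{\cS}^*))$, then substituting the optimization-error and consensus bounds and setting $k\asymp\mu NZ$. One small correction: under $\rho<\lambda/(8\sqrt{N})$ the paper's consensus error satisfies $\E H^{k'}\le c_2(\sigma^2+\delta^2)/(\mu^2(k'+k_1)^2)$ with \emph{no} non-vanishing Byzantine piece (Lemma~\ref{lemma-dm-b}); the $(4\rho^2N+\chi^2)(\sigma^2+\delta^2)$ constants enter only through the stability and optimization-error bounds themselves, not through $H^{k'}$.
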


Theorem \ref{the-ge-b} establishes the generalization error bound for Byzantine-resilient DSGD. When no Byzantine agents are present (i.e., $\rho=0$) and the virtual mixing matrix is doubly stochastic (i.e., $\chi=0$), the generalization error bound in \eqref{thm-ge-1-b} reduces to its attack-free counterpart in Theorem \ref{the-ge}.
Compared to the generalization error bounds established for Algorithm \ref{DSGD} in Theorem \ref{the-ge}, the presence of Byzantine agents introduces an additional error term of order $O( (4\rho^2 N + \chi^2 ) (\sigma^2+\delta^2)) $, which is independent on the sample size. Such an additional error term indicates that even as the sample size approaches infinity, the generalization error does not vanish. Instead, similar to the optimization error, it remains bounded by a constant that depends on the robust aggregation rule, stochastic gradient noise and data heterogeneity.
The intuition is that the malicious updates sent by Byzantine agents can be regarded as corrupted samples within the training dataset. Consequently, the training dataset is unable to accurately approximate the underlying data distribution in the presence of Byzantine agents, even with infinite training samples.

The work of \cite{ye2024generalization} has established the first generalization error bound for Byzantine-resilient DSGD. Nevertheless, the analysis must rely on the stringent assumption of bounded stochastic gradients and does not account for the impact of data heterogeneity on the generalization error.
In contrast, we remove this assumption and provide new insights into the interaction between the impact of Byzantine agents and data heterogeneity. Specifically, when the data distributions among non-Byzantine agents vary significantly, the negative impact of Byzantine agents on the generalization error becomes more severe -- an aspect not explored in \cite{ye2024generalization}.

\subsection{Cooperation Gain in the Presence of Byzantine Agents}

According to Theorem \ref{the-ge-b}, the presence of Byzantine agents introduces non-vanishing error terms in the generalization error.
This naturally raises the question:
\textit{From a generalization perspective, under what conditions is collaboration beneficial in the presence of Byzantine attacks?}

To answer this question, we must establish the generalization error bound of the algorithm when training independently on each agent's local dataset under data heterogeneity.
When the data distributions are heterogeneous among non-Byzantine agents, we leverage tools from domain adaptation theory \cite{blitzer2007learning,shui2022novel,allouahfine} to rigorously quantify the discrepancy between local data distributions $\cD_n$ and the global data distribution $\cD$.
Formally, for any $n \in \N$, the function that measures the discrepancy between two distributions is defined as
\begin{align}
    |F_{\cD_n}(\vx) - F_{\cD}(\vx)| \leq \Phi (\cD_n,\cD) , \vx \in \R^d
\end{align}

\begin{remark}
    The discrepancy measure function $\Phi$ has been introduced for various losses in machine learning.
    For the $0$-$1$ loss in binary classification, the discrepancy measure is defined as $\Phi (\cD_n,\cD) = 2 \sup_{h \in \mathcal{H}} | \mathbb{P}_{\mathcal{D}_n}(h(\vx) = 1) - \mathbb{P}_{\mathcal{D}}(h(\vx) = 1) | $, where $\mathcal{H}$ is the hypothesis class; see \cite{blitzer2007learning}.
    For entropy-based losses (e.g., the cross-entropy loss), the discrepancy measure is given by
    $\Phi (\cD_n,\cD)  = \frac{1}{2} D_{\text{KL}}(\mathcal{D}_n \!\parallel\! \mathcal{M}) + \frac{1}{2} D_{\text{KL}}(\mathcal{D} \!\parallel\! \mathcal{M})$,
    where $\mathcal{M} = \frac{1}{2} (\mathcal{D}_n + \mathcal{D})$ represents the mixture distri- bution and \( D_{\text{KL}} \) denotes the Kullback-Leibler (KL) divergence; see \cite{shui2022novel}. For more general losses, hypothesis space-dependent integral probability metrics can be used, as discussed in \cite{allouahfine}.

\end{remark}

Given the task defined in Section \ref{sec2} under Byzantine attacks, the goal is to ensure that the trained model performs well on the underlying global data distribution $\cD$.
If a non-Byzantine agent $n$ chooses not to cooperate, its training set is limited to $\cS_n$.  In this case, the generalization error is given by
$\E_{\cS_n,\mathcal{L}}[F(\cL(\cS_n))-F_{\cS_n}( \cL(\cS_n))]$.
We now establish the generalization error bound for running SGD independently on the dataset $\cS_n$ without cooperation, for any non-Byzantine agent $n \in \N$.

\begin{theorem}[Generalization Error of SGD on the Dataset $\cS_n$ without Cooperation]
\label{the-ge-s}
Let $k \asymp \mu N Z$. Under Assumptions \ref{assumption:convex}--\ref{assumption:heterogeneity}, if we set a proper step size $\alpha^k= \frac{1}{\mu(k+k_0)}$, where $k_0$ is sufficiently large, the generalization error of SGD on the dataset $\cS_n$ without cooperation is bounded by

\begin{align}
        \label{thm-ge-s}
        & \E_{\cS_n,\mathcal{L}}[F( \vx^k_n)-F_{\cS_n}( \vx^k_n))]  \\
        \leq &  O(\frac{\|\vx^0_n - \vx_{\cS_n}^*\|^2}{ \mu Z})+  \tilde O(\frac{\sigma^2}{ \mu Z}) + \Phi (\cD_n,\cD).  \nonumber
\end{align}

\end{theorem}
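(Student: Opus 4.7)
The plan is to decompose the target generalization error into two parts via the triangle inequality: one capturing the discrepancy between the local distribution $\cD_n$ and the global distribution $\cD$, and one capturing the standard generalization error of single-agent SGD trained on the local dataset $\cS_n$ whose samples are drawn i.i.d.\ from $\cD_n$. Letting $F_{\cD_n}(\vx) := \E_{\xi \sim \cD_n} f(\vx; \xi)$, I would write
\begin{equation*}
F(\vx_n^k) - F_{\cS_n}(\vx_n^k) = \bigl[F(\vx_n^k) - F_{\cD_n}(\vx_n^k)\bigr] + \bigl[F_{\cD_n}(\vx_n^k) - F_{\cS_n}(\vx_n^k)\bigr].
\end{equation*}
The first bracket quantifies the \emph{distribution shift} between the non-cooperating agent's training distribution and the target global distribution, while the second bracket is a purely single-agent generalization quantity.

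For the distribution-shift term, observe that the global population loss $F$ in \eqref{1} can be rewritten as $F(\vx) = \E_{\xi \sim \cD} f(\vx; \xi)$ with $\cD = \frac{1}{N}\sum_{n \in \N} \cD_n$ denoting the mixture distribution; hence $F$ coincides with $F_{\cD}$ in the sense introduced before the theorem. Applying the discrepancy inequality directly yields $|F(\vx_n^k) - F_{\cD_n}(\vx_n^k)| \leq \Phi(\cD_n, \cD)$, producing exactly the non-vanishing constant term in \eqref{thm-ge-s}.

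For the local-generalization term, I would specialize the machinery of Section \ref{sec-pre} and Section \ref{sec-ge} to the single-agent case. Running SGD on $\cS_n$ alone is precisely Algorithm \ref{DSGD} with one agent, which makes all consensus and communication errors identically zero and forces $\delta^2 = 0$ in Assumption \ref{assumption:heterogeneity}. Under the step size $\alpha^k = 1/(\mu(k+k_0))$, the on-average stability bound of Theorem \ref{the-onsta} collapses to an accumulation over $\E \|\nabla f(\vx_n^{k'}; \xi_{n,z})\|^2$ divided by $Z^2(k+k_0-1)$, and the optimization error bound of Theorem \ref{the-oe} simplifies to $O(\|\vx_n^0 - \vx_{\cS_n}^*\|^2/(k+k_0-1)) + \tilde O(\sigma^2/(\mu^2(k+k_0-1)))$ with the $\delta^2$-dependent summand dropped. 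Feeding both into Lemma \ref{lemma-sta} and optimizing $v$, then specializing the ``balanced'' horizon choice from Theorem \ref{the-ge} to the single-agent analogue $k \asymp \mu Z$, produces the local-generalization bound $O(\|\vx_n^0 - \vx_{\cS_n}^*\|^2/(\mu Z)) + \tilde O(\sigma^2/(\mu Z))$. Adding this to the $\Phi(\cD_n, \cD)$ term completes the proof of \eqref{thm-ge-s}.

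The main obstacle -- routine given the machinery already developed -- is ensuring that the accumulated $\E \|\nabla f(\vx_n^{k'}; \xi_{n,z})\|^2$ terms translate into the claimed $1/(\mu Z)$ rate. This requires showing that the single-agent iterate $\vx_n^k$ stays bounded so that $\E \|\nabla f\|^2$ is bounded by $O(\sigma^2 + \|\nabla F_{\cS_n}(\vx_n^k)\|^2)$, and then controlling $\E \|\nabla F_{\cS_n}(\vx_n^k)\|^2$ via the strongly convex optimization rate. This argument is the exact single-agent analogue of the boundedness reasoning for $\bar \vx^k$ used in the proof of Theorem \ref{the-ge}, and choosing $k_0$ sufficiently large to tame the initial step size delivers the required control without new ideas.
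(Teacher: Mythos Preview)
Your proposal is correct and is essentially the argument the paper implicitly relies on: the paper does not include a separate proof of Theorem~\ref{the-ge-s}, treating it as an immediate consequence of the single-agent specialization of Theorems~\ref{the-onsta}--\ref{the-ge} together with the discrepancy bound $|F_{\cD_n}(\vx)-F_{\cD}(\vx)|\le\Phi(\cD_n,\cD)$ introduced just before the theorem. Your decomposition into the distribution-shift part and the local single-agent generalization part is exactly the intended route.

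One small mismatch worth noting: the theorem statement fixes the horizon $k\asymp\mu NZ$ (so that the iteration budget matches the cooperative algorithms for a fair comparison), whereas you invoke the single-agent balance $k\asymp\mu Z$. This is harmless for the stated bound. With $N=1$, $\delta^2=0$, $H^k\equiv 0$, the stability term from Theorem~\ref{the-onsta} saturates at $O(\sigma^2/(\mu^2 Z^2))$ independently of $k$, while the optimization-error contribution $\E F_{\cS_n}(\vx_n^k)$ only shrinks as $k$ grows; hence running to $k\asymp\mu NZ\ge\mu Z$ yields a bound at least as tight as the one you derive at $k\asymp\mu Z$, and both are dominated by the claimed $O(\|\vx_n^0-\vx_{\cS_n}^*\|^2/(\mu Z))+\tilde O(\sigma^2/(\mu Z))$.
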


From Theorem \ref{the-ge-s}, we conclude that with data heterogeneity, training independently without cooperation also results in a non-vanishing additional error term in the generalization error bound. This term is independent on the sample size and depends solely on the discrepancy between the local and global data distributions. Notably, when the local and global distributions are identical, this term vanishes.
In contrast to Theorem \ref{the-ge}, which considers full cooperation without Byzantine attacks, Theorem \ref{the-ge-s} cannot leverage cooperation to improve the bound from $O(\frac{1}{Z})$ to $O(\frac{1}{NZ})$.
Consequently, in the absence of Byzan- tine agents, cooperation is always recommended, as it allows the non-Byzantine agents to utilize the other training samples to improve the bound by a factor of $\frac{1}{N}$ and mitigate the constant error induced by data heterogeneity. Instead, when the Byzantine agents are present, a comparison of Theorems \ref{the-ge-b} and \ref{the-ge-s} provides insights into when cooperation is beneficial from a generalization perspective.

\begin{itemize}[leftmargin=*]
\item \textbf{When the data distributions across non-Byzantine agents are identical, cooperation is not necessary.} In this case, $\Phi (\cD_n,\cD) =0$ and the generalization error of independent training diminishes to zero as the sample size $Z$ approaches infinity.
However, cooperation may introduce the negative impact of Byzantine agents, leading to non-vanishing error terms on the generalization error, regardless of the sample size.

\item \textbf{When the data distributions across non-Byzantine agents vary significantly and the local sample size is limited, cooperation is recommended.}
When the local samples are limited, cooperation with other non-Byzantine agents can be beneficial, improving certain terms from $O(\frac{1}{Z})$ to $O(\frac{1}{NZ})$.
In the presence of significant data heterogeneity, where $\Phi (\cD_n,\cD)$ is large, an appropriately designed aggregation rule can mitigate the impact of data heterogeneity, reducing the generalization error to $O( (4\rho^2 N + \chi^2 )$ $(\sigma^2+\delta^2))$.
\end{itemize}


\section{Numerical Experiments}
\label{sec-num}

In the numerical experiments, we generate an Erdos-R{\'e}nyi network topology of 10 agents, in which each pair of agents is connected with a probability of 0.7.
In the attack-free setup, all 10 agents are non-Byzantine. In the Byzantine-resilient setup, 2 out of 10 agents are selected as Byzantine. We investigate the squared $\ell_2$-norm regularized softmax regression task with strongly convex losses on the MNIST dataset, which consists of 60,000 training samples and 10,000 testing samples across 10 classes.
We test various levels of data heterogeneity among the agents' local data distributions to assess its impact on the generalization error. Specifically, the data heterogeneity is con- trolled by assigning the training samples to all agents accord- ing to the Dirichlet distribution $Dir(\beta)$,
which is commonly used to distribute heterogeneous data among the agents within federated and decentralized learning \cite{li2020federated}.
Therein, $\beta$ is the concentration parameter; a smaller $\beta$ results in more
heteroge- neous data partition, whereas a larger $\beta$ leads to more balanced partition. We implement attack-free DSGD outlined in Algorithm \ref{DSGD} and Byzantine-resilient DSGD outlined in Algorithm \ref{robust-DSGD} equipped with different robust aggregation rules: TM \cite{fang2022bridge}, IOS \cite{wu2022byzantine} and SCC \cite{he2022byzantine}.

During the training process, the step size is chosen as  $\alpha^k = \frac{1}{0.01k+1}$. The generalization error of the average model $\bar \vx^k$ is approximated by the difference between the losses calculated on the training samples and unseen testing samples, following \cite{deng2023stability,zhu2022topology,bars2023improved,ye2024ge,ye2024generalization}.

When the Byzantine agents are present, we consider the fo- llowing Byzantine attacks.

\noindent\textbf{Gaussian Attack\cite{schroth2024sensitivity}.} The Byzantine agents transmit messages with elements drawn from a Gaussian distribution with mean $0$ and variance $1$. \\
\noindent\textbf{Sample-Duplicating Attack\cite{ye2024generalization}.} The Byzantine agents collaboratively choose one non-Byzantine agent and consistently replicate its messages when communicating. This behavior is equivalent to the Byzantine agents duplicating the training samples of the selected non-Byzantine agent. \\
\noindent\textbf{A-Little-Is-Enough (ALIE) Attack\cite{baruch2019little}.} To non-Byzantine agent $n$, its Byzantine neighbors send $\frac{1}{|\N_n|}\sum_{m\in \N_n }\vx_{m,m}^{k+\frac{1}{2}} + r_n^k \Delta_n^k$ at time $k$, in which $\Delta_n^k$ is the coordinate-wise standard deviation of $\{\vx_{m,m}^{k+\frac{1}{2}}\}_{m\in \N_n}$ and $r_n^k$ is the scale factor. \\
\noindent\textbf{Sign-Flipping Attack\cite{wu2022byzantine}.} At time $k$, each Byzantine neighbor of non-Byzantine agent $n$ multiplies $\frac{1}{|\N_n|}\sum_{m\in \N_n }\vx_{m,m}^{k+\frac{1}{2}}$ with a negative constant $-1$ and sends the result.


\subsection{Experimental Results for Attack-free DSGD}





\begin{figure}[t]
\centering
\begin{minipage}{0.49\linewidth}
    \centering
    \includegraphics[width=0.9\linewidth]{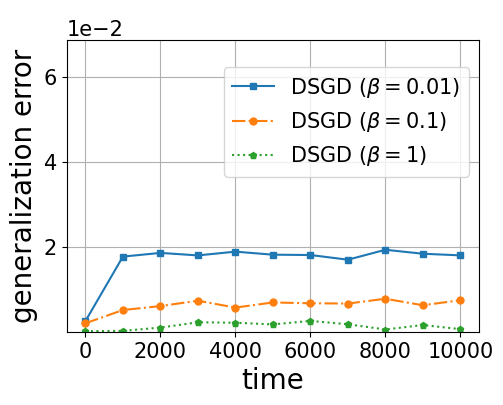}
    \caption{Generalization error of DSGD under different levels of data heterogeneity.}
    \label{gene-noniid}
\end{minipage}
\hfill
\begin{minipage}{0.49\linewidth}
    \centering
    \includegraphics[width=0.925\linewidth]{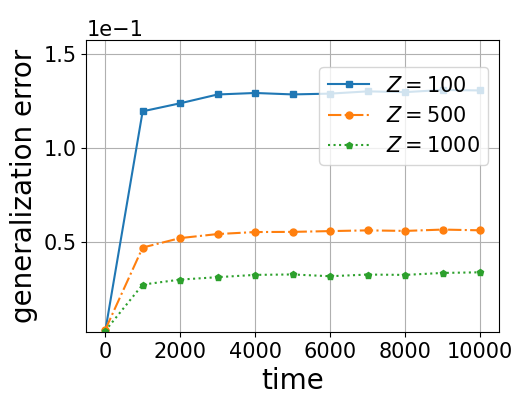}
    \caption{Generalization error of DSGD across different training sample sizes per agent when $\beta=0.01$.}
    \label{gene-size}
\end{minipage}
\end{figure}

Fig. \ref{gene-noniid} illustrates the generalization errors of Algorithm \ref{DSGD} under various levels of data heterogeneity.
We can observe that a higher level of data heterogeneity (a smaller concentration parameter $\beta$) is often associated with a larger generalization error, which corroborates the theoretical finding presented in Theorem \ref{the-ge}. Fig. \ref{gene-size} further depicts the generalization error of Algorithm \ref{DSGD} for different training sample sizes per agent when $\beta=0.01$.
The numerical results show that the generalization error decreases when the training sample size increases.
This phenomenon also supports our theoretical finding in Theorem \ref{the-ge}, which states that the generalization error is inversely proportional to the sample size.

\subsection{Experimental Results for Byzantine-resilient DSGD}




Fig. \ref{fig:br-dsgd} illustrates the generalization errors of Algorithm \ref{robust-DSGD} with different robust aggregation rules $\A_n$, including TM, IOS and SCC, under varying levels of data heterogeneity. Consistent with the observations in Fig. \ref{gene-noniid}, we observe that as data heterogeneity increases, the generalization errors of Algorithm \ref{robust-DSGD} with all robust aggregation rules also increase. This aligns with the theoretical results in Theorem \ref{the-ge-b}.
Furthermore, Fig. \ref{fig:robust} compares the generalization errors of attack-free DSGD and Byzantine-resilient DSGD when the concentration parameter is set to $\beta = 0.01$. It is evident that the generalization error of Byzantine-resilient DSGD is significantly larger than that of attack-free DSGD, confirming that the presence of Byzantine agents introduces additional error terms in the generalization error, as established in Theorem \ref{the-ge-b}.  By jointly analyzing Fig. \ref{gene-noniid} and Fig. \ref{fig:robust}, we observe that as data heterogeneity increases, the additional error induced by Byzantine agents becomes more severe, further validating the results in Theorem \ref{the-ge-b}.

\begin{figure*}[htbp]
    \centering
    \includegraphics[width=0.74\linewidth]{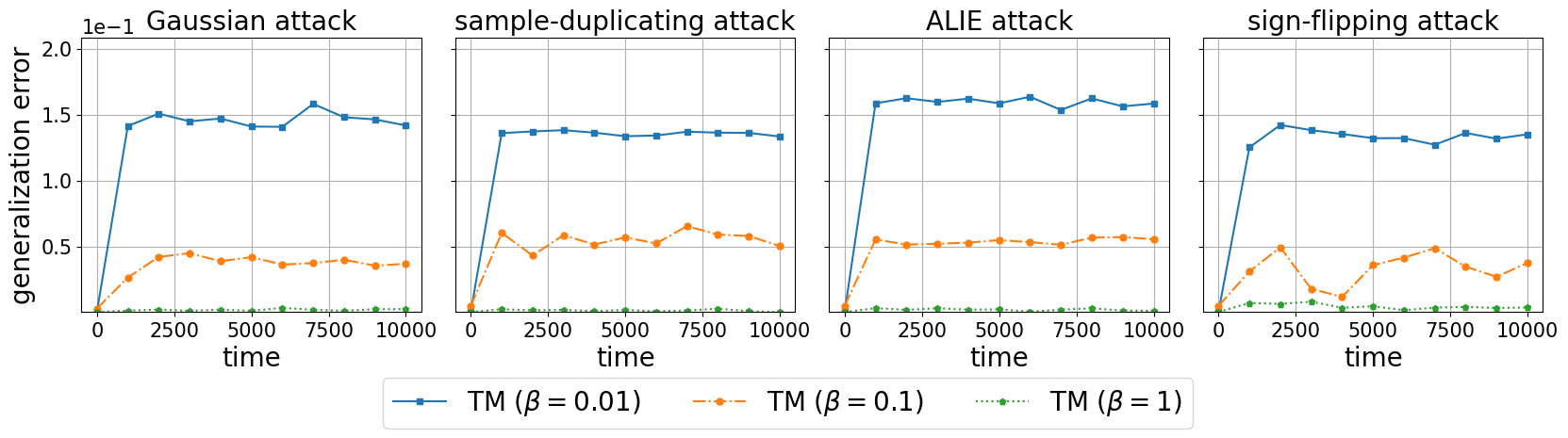}\\
    \vspace{4mm}
    \includegraphics[width=0.74\linewidth]{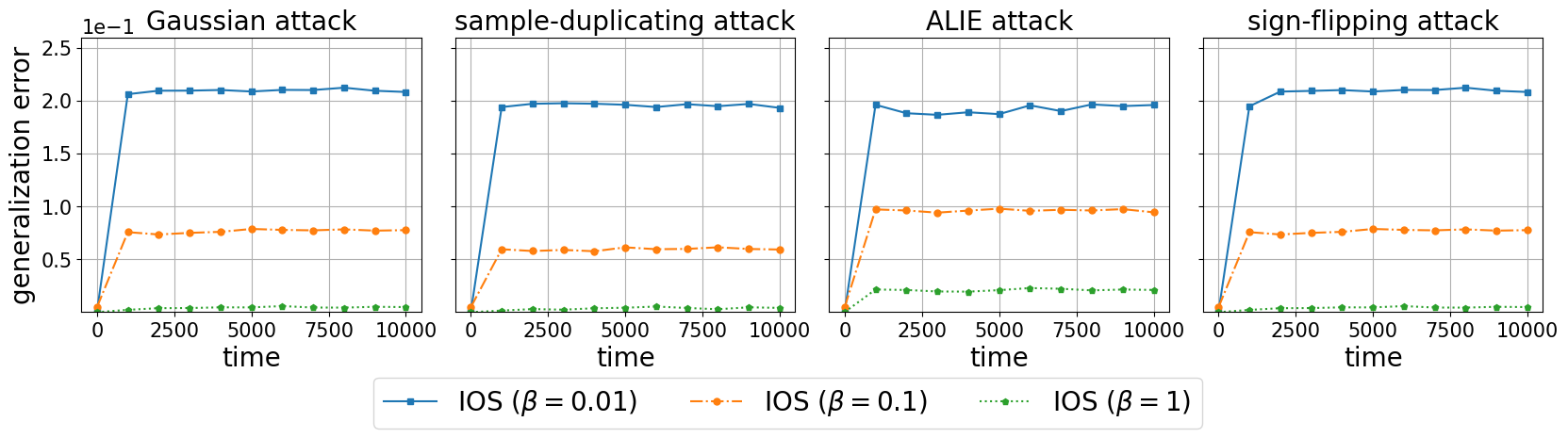} \\
     \vspace{4mm}
    \includegraphics[width=0.74\linewidth]{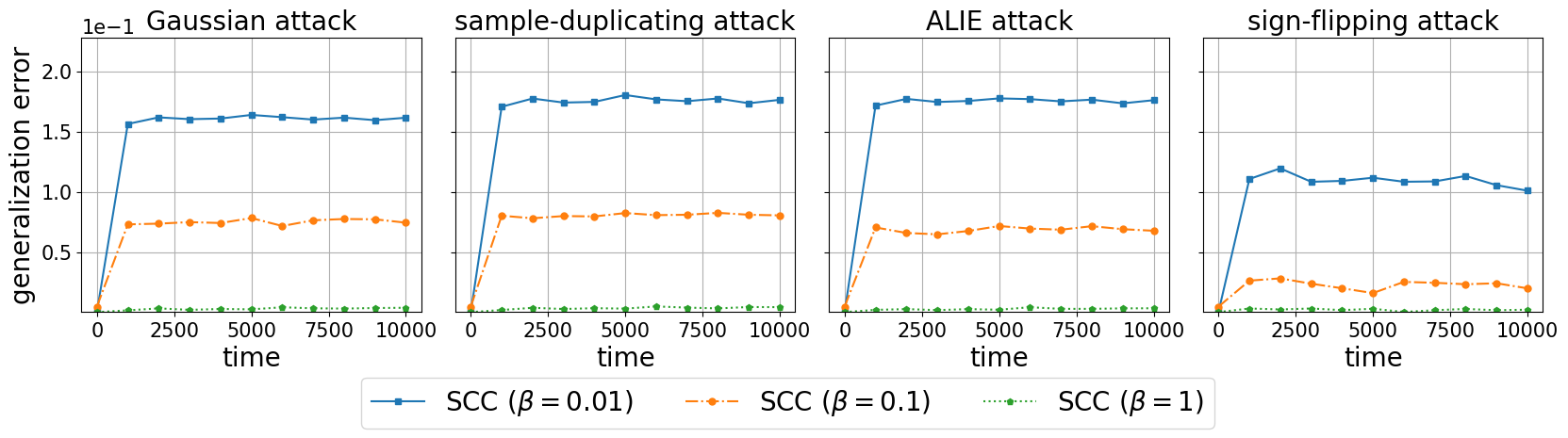}

    \caption{Generalization error of Byzantine-resilient DSGD under different levels of data heterogeneity with various aggregation rules.}
    \label{fig:br-dsgd}
\end{figure*}

\begin{figure*}[htbp]
    \centering
    \includegraphics[width=0.74\linewidth]{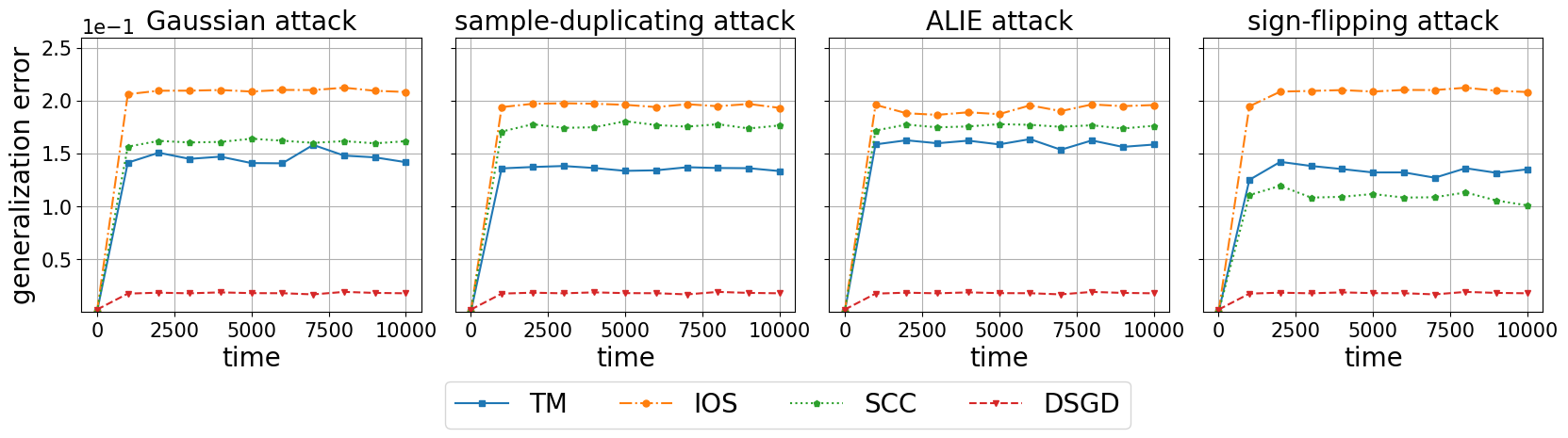}
    \caption{Generalization error of attack-free DSGD and Byzantine-resilient DSGD when $\beta=0.01$.}
    \label{fig:robust}
\end{figure*}

\subsection{Cooperation Gain of Byzantine-resilient DSGD}

To highlight the cooperation gain in the presence of Byzantine agents, we set the concentration parameter to $\beta = 0.01$ and randomly select one non-Byzantine agent to train independently on its own training samples without cooperation. The generalization errors of single-agent SGD without cooperation and Byzantine-resilient DSGD with cooperation are illustrated in Fig. \ref{fig:co}. We observe that the generalization error of single-agent SGD is significantly larger than that of Byzantine-resi- lient DGSD when $\beta = 0.01$. This is consistent with our theo- retical results in Theorem \ref{the-ge-s}. In this case, no cooperation avoids the negative impact of Byzantine agents; nevertheless, it also prevents the benefits from the training samples of other non-Byzantine agents and suffers from the discrepancies between local and global distributions. Therefore, when data heterogeneity is high, cooperation is both necessary and beneficial from a generalization perspective.

\begin{figure*}[htbp]
    \centering
    \includegraphics[width=0.74\linewidth]{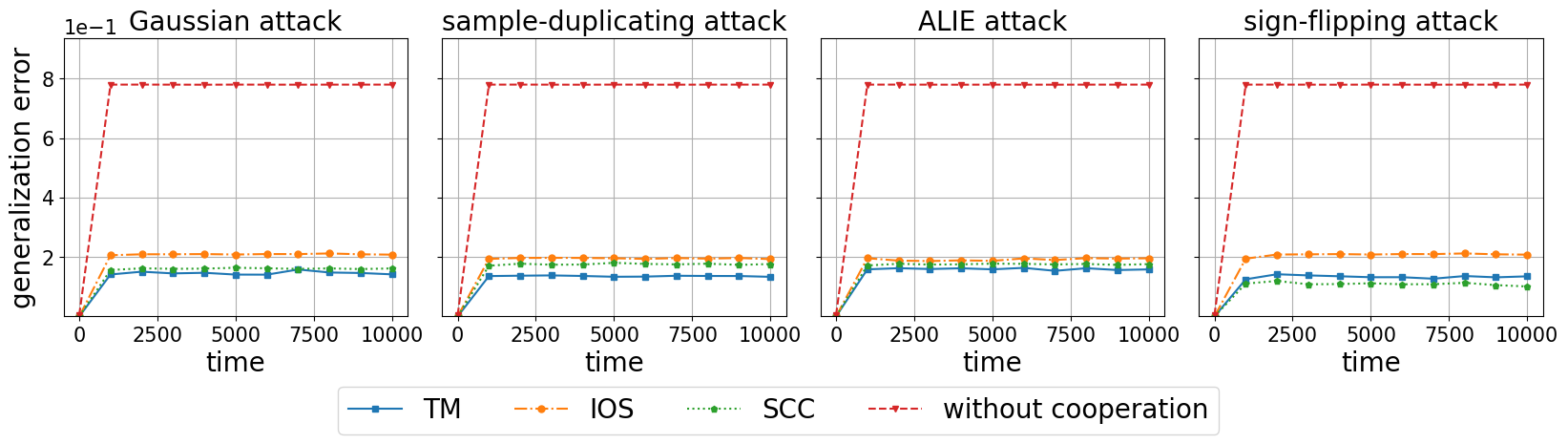}
    \caption{Generalization error of SGD without cooperation and Byzantine-resilient DSGD when $\beta=0.01$.}
    \label{fig:co}
\end{figure*}


\section{Conclusions}
\label{sec-con}

In this paper, we present the generalization guarantees for attack-free and Byzantine-resilient decentralized learning with heterogeneous data. Our generalization error analysis removes the stringent assumption of bounded stochastic gradients and reveals the impact of data heterogeneity in decentralized learn- ing. We also reveal the negative effect of Byzantine attacks on the generalization error under data heterogeneity and analyze the conditions under which cooperation remains beneficial in the presence of Byzantine agents from a generalization perspec- tive. The novel analytical techniques that we develop to estab- lish the generalization error bounds have potentials to inspire future works in this research topic. Numerical experiments on strongly convex and non-convex tasks are conducted, and the results validate our theoretical findings.

\bibliographystyle{IEEEtran}
\bibliography{main}

\begin{thebibliography}{10}
\providecommand{\url}[1]{#1}
\csname url@samestyle\endcsname
\providecommand{\newblock}{\relax}
\providecommand{\bibinfo}[2]{#2}
\providecommand{\BIBentrySTDinterwordspacing}{\spaceskip=0pt\relax}
\providecommand{\BIBentryALTinterwordstretchfactor}{4}
\providecommand{\BIBentryALTinterwordspacing}{\spaceskip=\fontdimen2\font plus
\BIBentryALTinterwordstretchfactor\fontdimen3\font minus \fontdimen4\font\relax}
\providecommand{\BIBforeignlanguage}[2]{{%
\expandafter\ifx\csname l@#1\endcsname\relax
\typeout{** WARNING: IEEEtran.bst: No hyphenation pattern has been}%
\typeout{** loaded for the language `#1'. Using the pattern for}%
\typeout{** the default language instead.}%
\else
\language=\csname l@#1\endcsname
\fi
#2}}
\providecommand{\BIBdecl}{\relax}
\BIBdecl

\bibitem{ye2025noniid}
H.~Ye, T.~Sun, and Q.~Ling, ``Generalization guarantee of decentralized learning with heterogeneous data,'' \emph{International Conference on Acoustics, Speech and Signal Processing}, 2025.

\bibitem{McMahan2016}
B.~McMahan, E.~Moore, D.~Ramage, S.~Hampson, and B.~A. y~Arcas, ``Communication-efficient learning of deep networks from decentralized data,'' \emph{International Conference on Artificial Intelligence and Statistics}, 2017.

\bibitem{Bian2024}
J.~Bian, L.~Wang, K.~Yang, C.~Shen, and J.~Xu, ``Accelerating hybrid federated learning convergence under partial participation,'' \emph{IEEE Transactions on Signal Processing}, vol.~72, pp. 3258--3271, 2024.

\bibitem{9556559}
X.~Zhang, M.~Hong, S.~Dhople, W.~Yin, and Y.~Liu, ``Fedpd: A federated learning framework with adaptivity to non-iid data,'' \emph{IEEE Transactions on Signal Processing}, vol.~69, pp. 6055--6070, 2021.

\bibitem{10791812}
J.~Zhang, X.~He, Y.~Huang, and Q.~Ling, ``Byzantine-robust and communication-efficient personalized federated learning,'' \emph{IEEE Transactions on Signal Processing}, vol.~73, pp. 26--39, 2025.

\bibitem{zhang2025locally}
J.~Zhang, L.~Zhu, D.~Fay, and M.~Johansson, ``Locally differentially private online federated learning with correlated noise,'' \emph{IEEE Transactions on Signal Processing}, 2025.

\bibitem{shi2025optimal}
Q.~Shi, J.~Peng, K.~Yuan, X.~Wang, and Q.~Ling, ``Optimal complexity in {B}yzantine-robust distributed stochastic optimization with data heterogeneity,'' \emph{arXiv preprint arXiv:2503.16337}, 2025.

\bibitem{zheng2025can}
H.~Zheng, R.~Wang, X.~Wang, and Q.~Ling, ``Can fairness and robustness be simultaneously achieved under {B}yzantine attacks?'' \emph{IEEE International Conference on Acoustics, Speech and Signal Processing}, 2025.

\bibitem{nedic2009distributed}
A.~Nedic and A.~Ozdaglar, ``Distributed subgradient methods for multi-agent optimization,'' \emph{IEEE Transactions on Automatic Control}, vol.~54, no.~1, pp. 48--61, 2009.

\bibitem{Lian2017}
X.~Lian, C.~Zhang, H.~Zhang, C.-J. Hsieh, W.~Zhang, and J.~Liu, ``Can decentralized algorithms outperform centralized algorithms? {A} case study for decentralized parallel stochastic gradient descent,'' \emph{Advances in Neural Information Processing Systems}, 2017.

\bibitem{li2025centralized}
Q.~Li, W.~Yu, Y.~Xia, and J.~Pang, ``From centralized to decentralized federated learning: Theoretical insights, privacy preservation, and robustness challenges,'' \emph{arXiv preprint arXiv:2503.07505}, 2025.

\bibitem{10852183}
T.~Wu, Z.~Li, and Y.~Sun, ``The effectiveness of local updates for decentralized learning under data heterogeneity,'' \emph{IEEE Transactions on Signal Processing}, vol.~73, pp. 751--765, 2025.

\bibitem{10847585}
Y.~Jiang, H.~Kang, J.~Liu, and D.~Xu, ``On the convergence of decentralized stochastic gradient descent with biased gradients,'' \emph{IEEE Transactions on Signal Processing}, vol.~73, pp. 549--558, 2025.

\bibitem{yuan2023removing}
K.~Yuan, S.~A. Alghunaim, and X.~Huang, ``Removing data heterogeneity influence enhances network topology dependence of decentralized {SGD},'' \emph{The Journal of Machine Learning Research}, vol.~24, no. 280, pp. 1--53, 2023.

\bibitem{koloskova2020unified}
A.~Koloskova, N.~Loizou, S.~Boreiri, M.~Jaggi, and S.~Stich, ``A unified theory of decentralized {SGD} with changing topology and local updates,'' \emph{International Conference on Machine Learning}, 2020.

\bibitem{9366365}
R.~Xin, U.~A. Khan, and S.~Kar, ``An improved convergence analysis for decentralized online stochastic non-convex optimization,'' \emph{IEEE Transactions on Signal Processing}, vol.~69, pp. 1842--1858, 2021.

\bibitem{10208129}
T.~Sun, D.~Li, and B.~Wang, ``On the decentralized stochastic gradient descent with {M}arkov chain sampling,'' \emph{IEEE Transactions on Signal Processing}, vol.~71, pp. 2895--2909, 2023.

\bibitem{9802673}
S.~A. Alghunaim and K.~Yuan, ``A unified and refined convergence analysis for non-convex decentralized learning,'' \emph{IEEE Transactions on Signal Processing}, vol.~70, pp. 3264--3279, 2022.

\bibitem{lamport}
L.~Lamport, R.~Shostak, and M.~Pease, ``The {B}yzantine generals problem,'' \emph{ACM Transactions on Programming Languages and Systems}, vol.~4, no.~3, pp. 382--401, 1982.

\bibitem{fang2022bridge}
C.~Fang, Z.~Yang, and W.~U. Bajwa, ``Bridge: Byzantine-resilient decentralized gradient descent,'' \emph{IEEE Transactions on Signal and Information Processing over Networks}, vol.~8, pp. 610--626, 2022.

\bibitem{wu2022byzantine}
Z.~Wu, T.~Chen, and Q.~Ling, ``Byzantine-resilient decentralized stochastic optimization with robust aggregation rules,'' \emph{IEEE Transactions on Signal Processing}, vol.~71, pp. 3179--3195, 2023.

\bibitem{he2022byzantine}
L.~He, S.~P. Karimireddy, and M.~Jaggi, ``Byzantine-robust decentralized learning via self-centered clipping,'' \emph{arXiv preprint arXiv:2202.01545}, 2022.

\bibitem{yang2024byzantine}
C.~Yang and J.~Ghaderi, ``{B}yzantine-robust decentralized learning via remove-then-clip aggregation,'' \emph{Proceedings of the AAAI Conference on Artificial Intelligence}, 2024.

\bibitem{ye2024tradeoff}
H.~Ye, H.~Zhu, and Q.~Ling, ``On the tradeoff between privacy preservation and {B}yzantine-robustness in decentralized learning,'' \emph{International Conference on Acoustics, Speech and Signal Processing}, 2024.

\bibitem{hardt2016train}
M.~Hardt, B.~Recht, and Y.~Singer, ``Train faster, generalize better: Stability of stochastic gradient descent,'' \emph{International Conference on Machine Learning}, 2016.

\bibitem{bousquet2002stability}
O.~Bousquet and A.~Elisseeff, ``Stability and generalization,'' \emph{The Journal of Machine Learning Research}, vol.~2, pp. 499--526, 2002.

\bibitem{zhang2022stability}
Y.~Zhang, W.~Zhang, S.~Bald, V.~Pingali, C.~Chen, and M.~Goswami, ``Stability of {SGD}: Tightness analysis and improved bounds,'' \emph{Uncertainty in Artificial Intelligence}, 2022.

\bibitem{chen2024three}
L.~Chen, H.~Fernando, Y.~Ying, and T.~Chen, ``Three-way trade-off in multi-objective learning: Optimization, generalization and conflict-avoidance,'' \emph{Advances in Neural Information Processing Systems}, 2023.

\bibitem{10934739}
Z.~Wang, J.~Cerviño, and A.~Ribeiro, ``Generalization of geometric graph neural networks with {L}ipschitz loss functions,'' \emph{IEEE Transactions on Signal Processing}, vol.~73, pp. 1549--1561, 2025.

\bibitem{kuzborskij2018data}
I.~Kuzborskij and C.~Lampert, ``Data-dependent stability of stochastic gradient descent,'' \emph{International Conference on Machine Learning}, 2018.

\bibitem{lei2020fine}
Y.~Lei and Y.~Ying, ``Fine-grained analysis of stability and generalization for stochastic gradient descent,'' \emph{International Conference on Machine Learning}, 2020.

\bibitem{sun2021stability}
T.~Sun, D.~Li, and B.~Wang, ``Stability and generalization of decentralized stochastic gradient descent,'' \emph{AAAI Conference on Artificial Intelligence}, 2021.

\bibitem{deng2023stability}
X.~Deng, T.~Sun, S.~Li, and D.~Li, ``Stability-based generalization analysis of the asynchronous decentralized {SGD},'' \emph{AAAI Conference on Artificial Intelligence}, 2023.

\bibitem{zhu2022topology}
T.~Zhu, F.~He, L.~Zhang, Z.~Niu, M.~Song, and D.~Tao, ``Topology-aware generalization of decentralized {SGD},'' \emph{International Conference on Machine Learning}, 2022.

\bibitem{bars2023improved}
B.~L. Bars, A.~Bellet, M.~Tommasi, K.~Scaman, and G.~Neglia, ``Improved stability and generalization guarantees of the decentralized {SGD} algorithm,'' \emph{International Conference on Machine Learning}, 2024.

\bibitem{ye2024ge}
H.~Ye and Q.~Ling, ``On the generalization error of {B}yzantine-resilient decentralized learning,'' \emph{International Conference on Acoustics, Speech and Signal Processing}, 2024.

\bibitem{ye2024generalization}
------, ``Generalization error matters in decentralized learning under {B}yzantine attacks,'' \emph{IEEE Transactions on Signal Processing}, vol.~73, pp. 843--857, 2025.

\bibitem{hsieh2020non}
K.~Hsieh, A.~Phanishayee, O.~Mutlu, and P.~Gibbons, ``The non-iid data quagmire of decentralized machine learning,'' \emph{International Conference on Machine Learning}, 2020.

\bibitem{sun2024understanding}
Z.~Sun, X.~Niu, and E.~Wei, ``Understanding generalization of federated learning via stability: Heterogeneity matters,'' \emph{International Conference on Artificial Intelligence and Statistics}, 2024.

\bibitem{bottou2007tradeoffs}
L.~Bottou and O.~Bousquet, ``The tradeoffs of large scale learning,'' \emph{Advances in Neural Information Processing Systems}, 2007.

\bibitem{dengstability}
X.~Deng, T.~Sun, S.~Li, D.~Li, and X.~Lu, ``Stability and generalization of asynchronous sgd: Sharper bounds beyond lipschitz and smoothness,'' \emph{Advances in Neural Information Processing Systems}, 2024.

\bibitem{kuwaranancharoen2025geometric}
K.~Kuwaranancharoen and S.~Sundaram, ``On the geometric convergence of {B}yzantine-resilient distributed optimization algorithms,'' \emph{SIAM Journal on Optimization}, vol.~35, no.~1, pp. 210--239, 2025.

\bibitem{7946176}
Y.~Pan, F.~Liu, L.~Chen, J.~Wang, F.~Qiu, C.~Shen, and S.~Mei, ``Towards the robust small-signal stability region of power systems under perturbations such as uncertain and volatile wind generation,'' \emph{IEEE Transactions on Power Systems}, vol.~33, no.~2, pp. 1790--1799, 2018.

\bibitem{wang2021robust}
S.~Wang, Z.~Wu, and A.~Lim, ``Robust state estimation for linear systems under distributional uncertainty,'' \emph{IEEE Transactions on Signal Processing}, vol.~69, pp. 5963--5978, 2021.

\bibitem{10041941}
K.~Suzuki and M.~Yukawa, ``Sparse stable outlier-robust signal recovery under {G}aussian noise,'' \emph{IEEE Transactions on Signal Processing}, vol.~71, pp. 372--387, 2023.

\bibitem{6204357}
Z.~Yang, C.~Zhang, and L.~Xie, ``Robustly stable signal recovery in compressed sensing with structured matrix perturbation,'' \emph{IEEE Transactions on Signal Processing}, vol.~60, no.~9, pp. 4658--4671, 2012.

\bibitem{lei2023stability}
Y.~Lei, T.~Sun, and M.~Liu, ``Stability and generalization for minibatch {SGD} and local {SGD},'' \emph{arXiv preprint arXiv:2310.01139}, 2023.

\bibitem{blitzer2007learning}
J.~Blitzer, K.~Crammer, A.~Kulesza, F.~Pereira, and J.~Wortman, ``Learning bounds for domain adaptation,'' \emph{Advances in Neural Information Processing Systems}, vol.~20, 2007.

\bibitem{shui2022novel}
C.~Shui, Q.~Chen, J.~Wen, F.~Zhou, C.~Gagn{\'e}, and B.~Wang, ``A novel domain adaptation theory with jensen--shannon divergence,'' \emph{Knowledge-Based Systems}, vol. 257, p. 109808, 2022.

\bibitem{allouahfine}
Y.~Allouah, A.~El~Mrini, R.~Guerraoui, N.~Gupta, and R.~Pinot, ``Fine-tuning personalization in federated learning to mitigate adversarial clients,'' \emph{Advances in Neural Information Processing Systems}, 2024.

\bibitem{li2020federated}
T.~Li, A.~K. Sahu, M.~Zaheer, M.~Sanjabi, A.~Talwalkar, and V.~Smith, ``Federated optimization in heterogeneous networks,'' \emph{Machine Learning and Systems}, vol.~2, pp. 429--450, 2020.

\bibitem{schroth2024sensitivity}
C.~A. Schroth, S.~Vlaski, and A.~M. Zoubir, ``Sensitivity curve maximization: Attacking robust aggregators in distributed learning,'' \emph{arXiv preprint arXiv:2412.17740}, 2024.

\bibitem{baruch2019little}
G.~Baruch, M.~Baruch, and Y.~Goldberg, ``A little is enough: Circumventing defenses for distributed learning,'' \emph{Advances in Neural Information Processing Systems}, 2019\color{black}.

\bibitem{10730755}
R.~Wang, Q.~Ling, and Z.~Tian, ``Dual-domain defenses for {B}yzantine-resilient decentralized resource allocation,'' \emph{IEEE Transactions on Signal and Information Processing over Networks}, vol.~10, pp. 804--819, 2024.

\end{thebibliography}

\begin{appendices}

\section{Proof of Theorem \ref{the-onsta}}
\begin{proof}
Following \cite{sun2021stability,deng2023stability,zhu2022topology,bars2023improved,ye2024ge,ye2024generalization}, we denote the output of attack-free DGSD on the datasets $\cS$ and $\cS^{(i,z)}$ at time $k$ as  $\cL(\cS) = \bar\vx^k = \frac{1}{N}\sum_{n \in \N} \vx_n^k$ and $\cL(\cS^{(i,z)})=\bar\vx'^k_{(i,z)} = \frac{1}{N}\sum_{n \in \N} \vx'^k_n$, respectively.
Here, $\vx'^k_n$ denotes the local model of agent $n$ at time $k$ when trained on the perturbed global dataset $\cS^{(i,z)} = \{\cS_{m} | m \in \mathcal{N},  m \neq i\} \cup \cS_i^{(z)}$, and $\bar\vx'^k_{(i,z)}$ is the average of $\vx'^k_n$.
For the distance between $\bar\vx^{k+1}$ and $\bar\vx'^{k+1}_{(i,z)}$, we have
\begin{align}
\label{g-5}
    &\|\bar\vx'^{k+1}_{(i,z)} - \bar\vx^{k+1} \|^2 \\
    \leq & \frac{1}{N}\sum_{n \in \N} \|  \vx'^{k+1}_{n} - \vx^{k+1}_n  \|^2 \nonumber \\
     \leq &  \frac{1}{N} \sum_{n \in \N} \sum_{m \in \N} w_{nm} \|     \vx_m^{k+\frac{1}{2}} - \vx_m'^{k+\frac{1}{2}}  \|^2 \nonumber \\
     \leq& \frac{1}{N} \sum_{n \in \N} \|\vx_n^{k+\frac{1}{2}} - \vx_n'^{k+\frac{1}{2}} \|^2.  \nonumber
\end{align}

For agent $i$ and at time $k$,
the probability of  Algorithm \ref{DSGD} selecting the same training sample from  $\cS_i$ and $\cS_i^{(z)}$  is $1-\frac{1}{Z}$. In this case,
according to Lemma 6 in \cite{sun2021stability}, when the loss is strongly convex and Assumption \ref{assumption:Lip} holds, choosing $\alpha^k \leq \frac{1}{L}$ yields
\begin{align}
\label{g-6}
    & \frac{1}{N} \sum_{n \in \N} \|\vx_n^{k+\frac{1}{2}} - \vx_n'^{k+\frac{1}{2}} \|^2 \\
    \leq & (1-\alpha^k \mu)^2 \frac{1}{N} \sum_{n \in \N} \|\vx_n^{k} - \vx_n'^{k} \|^2. \nonumber
\end{align}


For agent $i$ and at time $k$,
the probability of
 Algorithm \ref{DSGD} selecting the different training samples from  $\cS_i$ and $\cS_i^{(z)}$
is $\frac{1}{Z}$.
In this case, by Lemma 6 in \cite{sun2021stability}, when the loss is strongly convex and Assumption \ref{assumption:Lip} holds, for any $p>0$, choosing $\alpha^k \leq \frac{1}{L}$ yields
\begin{align}
    \label{g-7}
    & \frac{1}{N} \sum_{n \in \N} \|\vx_n^{k+\frac{1}{2}} - \vx_n'^{k+\frac{1}{2}} \|^2 \\
   = & \frac{1}{N} \| \vx_i^k -\alpha^k \nabla f(\vx^{k}_i; \xi_{i,z})-\vx'^k_i + \alpha^k \nabla f(\vx'^{k}_i; \xi'_{i,z}) \|^2  \nonumber \\
   & +    \frac{1}{N}  \sum_{n \in \N /\{i\} } \|\vx_n^{k+\frac{1}{2}} - \vx_n'^{k+\frac{1}{2}} \|^2 \nonumber \\
   \leq & \frac{1+p}{N} \| \vx_i^k -\alpha^k \nabla f(\vx^{k}_i; \xi'_{i,z})-\vx'^k_i + \alpha^k \nabla f(\vx'^{k}_i; \xi'_{i,z}) \|^2  \nonumber \\
   & + \frac{(\alpha^k)^2(1+p^{-1})}{N}  \| \nabla f(\vx^{k}_i; \xi_{i,z}) - \nabla f(\vx^{k}_i; \xi'_{i,z})\|^2  \nonumber \\
   & +    \frac{1}{N}  \sum_{n \in \N /\{i\} } \|\vx_n^{k+\frac{1}{2}} - \vx_n'^{k+\frac{1}{2}} \|^2 \nonumber \\
   \leq &  \frac{(1-\alpha^k \mu)^2 (1+p)}{N}  \sum_{n \in \N} \|\vx_n^{k} - \vx_n'^{k} \|^2  \nonumber \\
    & +  \frac{(\alpha^k)^2(1+p^{-1})}{N}  \| \nabla f(\vx^{k}_i; \xi_{i,z}) - \nabla f(\vx^{k}_i; \xi'_{i,z})\|^2.   \nonumber
\end{align}

Combining the two cases in \eqref{g-6} and \eqref{g-7}, we obtain

\begin{align}
    \label{g-10}
    & \frac{1}{N} \sum_{n \in \N}   \E \|\vx_n^{k+1} - \vx_n'^{k+1} \|^2 \\
    \leq &   (1-\alpha^k \mu)^2 (1+\frac{p}{Z})  \frac{1}{N}  \sum_{n \in \N} \E  \|\vx_n^{k} - \vx_n'^{k} \|^2 \nonumber \\
     & +  \frac{ 4 (\alpha^k)^2 (1+p^{-1}) }{NZ}  \E \| \nabla f(\vx^{k}_i; \xi_{i,z}) \|^2.  \nonumber
\end{align}

If we set  $p = Z \alpha^k \mu$ and $\alpha^k = \frac{1}{\mu (k+k_0)}$, in which $k_0$ is sufficiently large, we obtain
\begin{align}
    \label{g-11}
    &\frac{1}{N} \sum_{n \in \N}   \E \|\vx_n^{k+1} - \vx_n'^{k+1} \|^2 \\
    \leq &    (1-\frac{1}{k+k_0})  \frac{1}{N}  \sum_{n \in \N}   \E \|\vx_n^{k} - \vx_n'^{k} \|^2 \nonumber \\
     & +  \frac{ 4   \E \| \nabla f(\vx^{k}_i; \xi_{i,z})\|^2 }{\mu^2 N Z (k+k_0)^2} +   \frac{ 4   \E \| \nabla f(\vx^{k}_i; \xi_{i,z})\|^2 }{\mu^2 N Z^2 (k+k_0)}.  \nonumber
\end{align}

Using telescopic cancellation on \eqref{g-11} from time $0$ to $k$, we deduce that
\begin{align}
    \label{g-12}
    &\frac{1}{N} \sum_{n \in \N} \E \|\vx_n^{k} - \vx_n'^{k} \|^2  \\
    \leq & \sum_{k'=0}^{k-1} \lp \prod_{t=k'+1}^{k-1}(1-\frac{1}{t+k_0})  \rp  \cdot \frac{ 4  \E \| \nabla f(\vx^{k'}_i; \xi_{i,z})\|^2 }{\mu^2 N Z (k'+k_0)^2} \nonumber  \\
      &   + \sum_{k'=0}^{k-1} \lp \prod_{t=k'+1}^{k-1}(1-\frac{1}{t+k_0})  \rp  \cdot
         \frac{ 4   \E\| \nabla f(\vx^{k'}_i; \xi_{i,z})\|^2 }{\mu^2 N Z^2 (k'+k_0)}     \nonumber \\
    \leq & \sum_{k'=0}^{k-1} \frac{k'+k_0}{k+k_0-1}  \cdot  \frac{ 4  \E \| \nabla f(\vx^{k'}_i; \xi_{i,z})\|^2 }{\mu^2 N Z (k'+k_0)^2}   \nonumber \\
    & +  \sum_{k'=0}^{k-1} \frac{k'+k_0}{k+k_0-1}  \cdot   \frac{ 4   \E\| \nabla f(\vx^{k'}_i; \xi_{i,z})\|^2 }{\mu^2 N Z^2 (k'+k_0)}   \nonumber  \\
    \leq &   \frac{ 4 \sum_{k'=0}^{k-1} \lp \E \| \nabla f(\vx^{k'}_i; \xi_{i,z})\|^2  / (k'+k_0) \rp }{\mu^2 N Z (k+k_0-1)} \nonumber \\
    & +   \frac{ 4 \sum_{k'=0}^{k-1}  \E \| \nabla f(\vx^{k'}_i; \xi_{i,z})\|^2 }{\mu^2 N Z^2 (k+k_0-1) }.  \nonumber
\end{align}

According to \eqref{g-5} and Definition \ref{def-sta},  the on-average stability can be upper bounded by
\begin{align}
    \label{g-12-1}
    & \frac{1}{NZ}\sum_{i \in \N}\sum_{z=1}^{Z}  \|\bar\vx'^{k}_{(i,z)} - \bar\vx^{k} \|^2   \\
    \leq &   \frac{ 4 \sum_{k'=0}^{k-1} \sum_{i \in \N}\sum_{z=1}^{Z} \lp \E \| \nabla f(\vx^{k'}_i; \xi_{i,z})\|^2  / (k'+k_0) \rp }{\mu^2 N^2 Z^2 (k+k_0-1)} \nonumber \\
    & +   \frac{ 4 \sum_{k'=0}^{k-1} \sum_{i \in \N}\sum_{z=1}^{Z}  \E \| \nabla f(\vx^{k'}_i; \xi_{i,z})\|^2 }{\mu^2 N^2 Z^3 (k+k_0-1) }.   \nonumber
\end{align}
This completes the proof.
\end{proof}

\section{Proof of Theorem \ref{the-oe}}
\begin{proof}
Regarding the distance between the algorithm output of attack-free DSGD at time $k+1$ and the optimal solution $\vx_{\cS}^*$ of the global empirical loss, if Assumption \ref{assumption:variance} holds, we have
\begin{align}
    \label{g-22}
    & \E \|\bar\vx^{k+1} - \vx_{\cS}^* \|^2 \\
    = & \E \| \frac{1}{N} \sum_{n \in \N} \sum_{m \in \N} w_{nm} (\vx_m^k - \alpha^k \nabla f(\vx_m^k,\xi_m^k)) - \vx_{\cS}^* \|^2 \nonumber \\
    = & \E \| \frac{1}{N} \sum_{n \in \N} (\vx_n^k - \alpha^k \nabla f(\vx_n^k,\xi_n^k)) - \vx_{\cS}^* \|^2 \nonumber \\
    = & \E \|\bar \vx^k - \frac{\alpha^k}{N} \sum_{n \in \N} \nabla f(\vx_n^k,\xi_n^k) - \vx_{\cS}^* \|^2 \nonumber \\
    \leq & \E \|\bar \vx^k - \frac{\alpha^k}{N} \sum_{n \in \N} \nabla F_{\cS_n}(\vx_n^k) - \vx_{\cS}^* \|^2 + \frac{(\alpha^k)^2 \sigma^2}{N}, \nonumber
\end{align}
where $\xi_n^k$ denotes the sample drawn from the local dataset $\cS_n$ at time $k$.

For the first term at the RHS of \eqref{g-22}, we have
\begin{align}
    \label{g-22-1}
    &  \|\bar \vx^k - \frac{\alpha^k}{N} \sum_{n \in \N} \nabla F_{\cS_n}(\vx_n^k) - \vx_{\cS}^* \|^2 \\
    = &   \|\bar \vx^k - \frac{\alpha^k}{N} \sum_{n \in \N} (\nabla F_{\cS_n}(\vx_n^k) -\nabla F_{\cS_n}(\vx_{\cS}^*))- \vx_{\cS}^* \|^2  \nonumber \\
    =&  \|\bar \vx^k - \vx_{\cS}^* \|^2 + (\alpha^k)^2\|\frac{1}{N} \sum_{n \in \N} (\nabla F_{\cS_n}(\vx_n^k) -\nabla F_{\cS_n}(\vx_{\cS}^*)) \|^2 \nonumber \\
    & - \frac{2 \alpha^k}{N} \sum_{n \in \N}  \langle \bar\vx^k - \vx_{\cS}^* , \nabla F_{\cS_n}(\vx_n^k) -\nabla F_{\cS_n}(\vx_{\cS}^*) \rangle. \nonumber
\end{align}
For the second term at the RHS of \eqref{g-22-1}, applying Assumption \ref{assumption:Lip}, we obtain
\begin{align}
    \label{g-22-2}
    &  \|\frac{1}{N} \sum_{n \in \N} (\nabla F_{\cS_n}(\vx_n^k) -\nabla F_{\cS_n}(\vx_{\cS}^*)) \|^2  \\
     \leq & 2 \|\frac{1}{N} \sum_{n \in \N} (\nabla F_{\cS_n}(\vx_n^k) - \nabla F_{\cS_n}(\bar\vx^k) )\|^2 \nonumber \\
     & +2 \|\frac{1}{N} \sum_{n \in \N} ( \nabla F_{\cS_n}(\bar\vx^k)- \nabla F_{\cS_n}(\vx_{\cS}^*)) \|^2 \nonumber \\
     \leq & 2 L^2 H^k  + 4L (F_{\cS}(\bar\vx^k)-F_{\cS}(\vx_{\cS}^*)),  \nonumber
\end{align}
where $H^k = \frac{1}{N} \sum_{n \in \N} \| \vx_n^k -\bar\vx^k\|^2$ represents the consensus error, as defined and analyzed in Lemma \ref{lemma-dm}.
For the third term at the RHS of \eqref{g-22-1}, we have
\begin{align}
    \label{g-22-3}
    &  \frac{2 \alpha^k}{N} \sum_{n \in \N}  \langle\bar\vx^k - \vx_{\cS}^* , \nabla F_{\cS_n}(\vx_n^k) -\nabla F_{\cS_n}(\vx_{\cS}^*) \rangle \\
    = & \frac{2 \alpha^k}{N} \sum_{n \in \N}  \langle\bar\vx^k - \vx_{\cS}^* , \nabla F_{\cS_n}(\vx_n^k)  \rangle  \nonumber \\
    = & \frac{2 \alpha^k}{N} \sum_{n \in \N}  \langle\bar\vx^k - \vx_{n}^k , \nabla F_{\cS_n}(\vx_n^k)  \rangle \nonumber \\
    & + \frac{2 \alpha^k}{N} \sum_{n \in \N}  \langle\vx_n^k - \vx_{\cS}^* , \nabla F_{\cS_n}(\vx_n^k)  \rangle \nonumber \\
    \geq & \frac{2 \alpha^k}{N} \sum_{n \in \N} \lp F_{\cS_n}(\bar\vx^k ) - F_{\cS_n}(\vx_{n}^k) -\frac{L}{2} \|\bar\vx^k  - \vx_{n}^k\|^2  \rp  \nonumber \\
    & + \frac{2 \alpha^k}{N} \sum_{n \in \N} \lp F_{\cS_n}(\vx_{n}^k ) - F_{\cS_n}(\vx_{\cS}^*) + \frac{\mu}{2} \| \vx_{n}^k-\vx_{\cS}^*\|^2  \rp  \nonumber \\
    \geq & \frac{2 \alpha^k}{N} \sum_{n \in \N} \lp F_{\cS_n}(\bar\vx^k ) - F_{\cS_n}(\vx_{\cS}^*) \rp  \nonumber \\
    & - \alpha^k L H^k + \alpha^k \mu \|\bar\vx^k -\vx_{\cS}^* \|^2  \nonumber \\
    \geq & 2 \alpha^k \lp F_{\cS}(\bar\vx^k ) - F_{\cS}(\vx_{\cS}^*) \rp - \alpha^k L H^k + \alpha^k \mu \|\bar\vx^k -\vx_{\cS}^* \|^2 . \nonumber
\end{align}
Substituting \eqref{g-22-2}--\eqref{g-22-3} into \eqref{g-22-1}, we have

\begin{align}
    \label{g-22-4}
    &  \|\bar \vx^k - \frac{\alpha^k}{N} \sum_{n \in \N} \nabla F_{\cS_n}(\vx_n^k) - \vx_{\cS}^* \|^2 \\
    \leq & (1-\mu\alpha^k) \|\bar \vx^k - \vx_{\cS}^* \|^2  + (2 (\alpha^k)^2 L^2 + \alpha^k L) H^k  \nonumber \\
    & + 2\alpha^k(2L \alpha^k -1) (F_{\cS}(\bar\vx^k)-F_{\cS}(\vx_{\cS}^*)). \nonumber
\end{align}

If we choose $\alpha^k \leq \frac{1}{2L}$, substituting \eqref{g-22-4} into \eqref{g-22}, we have
 \begin{align}
    \label{g-22-5}
    & \E \|\bar\vx^{k+1} - \vx_{\cS}^* \|^2 \\
    \leq & \E \|\bar \vx^k - \frac{\alpha^k}{N} \sum_{n \in \N} \nabla F_{\cS_n}(\vx_n^k) - \vx_{\cS}^* \|^2 + \frac{(\alpha^k)^2 \sigma^2}{N} \nonumber  \\
    \leq & (1-\mu\alpha^k) \E\|\bar \vx^k - \vx_{\cS}^* \|^2  + 2 \alpha^k L \E H^k + \frac{(\alpha^k)^2 \sigma^2}{N}. \nonumber
\end{align}

According to Lemma \ref{lemma-dm}, by setting the step size as $\alpha^k = \frac{1}{\mu(k+k_0)}$, in which $k_0$ sufficiently large, we obtain
\begin{align}
    \label{g-30}
     \E \|\bar\vx^{k+1} - \vx_{\cS}^* \|^2 \leq  & (1-\frac{1}{k+k_0}) \E \|\bar \vx^k - \vx_{\cS}^* \|^2 \\
   &  +  \frac{\sigma^2}{\mu^2N(k+k_0)^2}  +   \frac{2 c_1 L (\sigma^2 + \delta^2)}{\mu^3(k+k_0)^3}, \nonumber
\end{align}
where $c_1>0$ is a constant as defined in Lemma \ref{lemma-dm}.

Using telescopic cancellation on \eqref{g-30} from time $0$ to $k-1$, we deduce that
\begin{align}
    \label{g-31}
    & \E \|\bar\vx^{k} - \vx_{\cS}^* \|^2 \\
      \leq & \frac{k_0-1}{k+k_0-1} \|\bar \vx^0 - \vx_{\cS}^* \|^2 \nonumber \\
      &+  \sum_{k'=0}^{k-1} \frac{k'+k_0}{k+k_0-1} \bigg( \frac{\sigma^2}{\mu^2N(k'+k_0)^2}
      +  \frac{2 c_1 L (\sigma^2 + \delta^2)}{\mu^3(k'+k_0)^3} \bigg)  \nonumber \\
    \leq & \frac{(k_0-1)\|\bar \vx^0 - \vx_{\cS}^* \|^2}{k+k_0-1}   +    \frac{\sigma^2 ln (k+k_0-1)}{\mu^2N(k+k_0-1)} +   \frac{2 c_1 L (\sigma^2 + \delta^2)}{\mu^3 (k+k_0-1)} . \nonumber
\end{align}

Finally, combining \eqref{g-31} with Assumption \ref{assumption:Lip}, we obtain
\begin{align}
\label{g-31-1}
&    \E (F_{\cS}(\bar\vx^{k}) - F_{\cS} (\vx_{\cS}^*))  \\
\leq &  \frac{L}{2} \E \|\bar\vx^{k} - \vx_{\cS}^* \|^2 \nonumber\\
 \leq & \frac{(k_0-1)L\|\bar \vx^0 - \vx_{\cS}^* \|^2}{2(k+k_0-1)}   +    \frac{\sigma^2 L ln (k+k_0-1)}{2\mu^2N(k+k_0-1)} \nonumber  \\
 & +   \frac{ c_1 L^2 (\sigma^2 + \delta^2)}{\mu^3 (k+k_0-1)} . \nonumber
\end{align}
This completes the proof.
\end{proof}

\section{Proof of Theorem \ref{the-ge}}
\begin{proof}
According to Lemma \ref{lemma-sta}, the generalization error can be bounded by the sum of the on-average stability and the optimization error. Here we first demonstrate that the term $\E \| \nabla f(\vx^{k}_i; \xi_{i,z})\|^2$  in the on-average stability bound is inherently related to the optimization error under Assumption \ref{assumption:Lip}.
\begin{align}
    \label{g-14}
     &  \E \frac{1}{NZ}\sum_{i \in \N} \sum_{z=1}^{Z}  \| \nabla f(\vx^{k}_i; \xi_{i,z})\|^2   \\
    \leq &  4\E  \frac{1}{NZ}\sum_{i \in \N} \sum_{z=1}^{Z} \| \nabla f(\vx^{k}_i; \xi_{i,z}) - \nabla F_{\cS_{i}}(\vx^{k}_i)\|^2 \nonumber \\
    & + 4\E \frac{1}{N}\sum_{i \in \N} \|  \nabla F_{\cS_{i}}(\vx^{k}_i)- \nabla F_{\cS}(\vx^{k}_i)\|^2 \nonumber \\
    &  +  4\E \frac{1}{N}\sum_{i \in \N} \|  \nabla F_{\cS}(\vx^{k}_i) - \nabla F_{\cS}(\bar\vx^{k})\|^2   \nonumber \\
    & + 4\E \frac{1}{N}\sum_{i \in \N}  \| \nabla F_{\cS}(\bar\vx^{k})\|^2 \nonumber \\
    \leq &    4\E \frac{1}{N}\sum_{i \in \N} \|  \nabla F_{\cS}(\vx^{k}_i) - \nabla F_{\cS}(\bar\vx^{k})\|^2   \nonumber \\
    & + 4\E  \| \nabla F_{\cS}(\bar\vx^{k})\|^2 + 4 \sigma^2 + 4\delta^2 . \nonumber
\end{align}

For the First term at the RHS of \eqref{g-14}, according to Assumption \ref{assumption:Lip}, we have
\begin{align}
\label{g-16}
      \E \frac{1}{N}\sum_{i \in \N} \|  \nabla F_{\cS}(\vx^{k}_i) - \nabla F_{\cS}(\bar\vx^{k})\|^2
    \leq  L^2 \E H^k.
\end{align}
For the second term at the RHS of \eqref{g-14}, according to Assumption \ref{assumption:Lip}, we have
\begin{align}
\label{g-21}
 \E \| \nabla F_{\cS}(\bar\vx^{k}) \| ^2
\leq   2L \E (F_{\cS}(\bar\vx^{k}) - F_{\cS} (\vx_{\cS}^*)) .
\end{align}
Substituting \eqref{g-16} and \eqref{g-21} into \eqref{g-14}, we have
\begin{align}
    \label{g-21-1}
     &  \E \frac{1}{NZ}\sum_{i \in \N} \sum_{z=1}^{Z}  \| \nabla f(\vx^{k}_i; \xi_{i,z})\|^2   \\
    \leq & 4 L^2 \E H^k + 8L \E (F_{\cS}(\bar\vx^{k}) - F_{\cS} (\vx_{\cS}^*)) + 4 \sigma^2 + 4\delta^2. \nonumber
\end{align}



According to the Theorem \ref{the-oe} and Lemma \ref{lemma-dm},
substituting \eqref{g-31-1} into \eqref{g-12-1}, if setting $k \asymp \mu N Z$, we have
\begin{align}
\label{g-31-3}
       \frac{1}{NZ}\sum_{i \in \N}\sum_{z=1}^{Z} \E \|\bar\vx'^{k}_{(i,z)} - \bar\vx^{k} \|^2
     \leq  O\lp  \frac{\sigma^2 +\delta^2 }{ \mu N  Z} \rp.
\end{align}

According to Lemma \ref{lemma-sta}, let $v = L$ and $k \asymp \mu N Z$. Then by substituting \eqref{g-31-1} and \eqref{g-31-3} into \eqref{lemma}, we have
\begin{align}
\label{g-31-6}
    &\E[F(\bar \vx^k)-F_{\cS}(\bar \vx^k)] \\
   \leq & O(\frac{\|\bar\vx^0 - \vx_{\cS}^*\|^2}{ \mu N Z})+  \tilde O(\frac{\sigma^2}{\mu N Z}) + O(\frac{\delta^2}{\mu N Z}). \nonumber
\end{align}
This completes the proof.
\end{proof}

\section{Proof of Theorem \ref{the-onsta-b}}

\begin{proof}
Below, following \cite{ye2024ge,ye2024generalization}, we denote the output of Byzantine-resilient DGSD on datasets $\cS$ and $\cS^{(i,z)}$ at time $k$ as  $\mathcal{L}(\cS)= $ $\bar\vx^k =\frac{1}{N}\sum_{n \in \N}\vx_n^k$ and $\cL(\cS^{(i,z)})=\bar\vx'^k_{(i,z)} = \frac{1}{N}\sum_{n \in \N}\vx'^k_n$, respectively. Here, $\vx'^k_n$ denotes the local model of non-Byzantine agent $n$ at time $k$ when trained on the perturbed global dataset $\cS^{(i,z)} = \{\cS_{m} | m \in \mathcal{N},  m \neq i\} \cup \cS_i^{(z)}$, and $\bar\vx'^k_{(i,z)}$ is the average of $\vx'^k_n$.
We analyze the stability of Algorithm \ref{robust-DSGD} under two scenarios, depending on whether the virtual mixing matrix is doubly stochastic or row stochastic.

\subsection{Row Stochastic Virtual Mixing Matrix}
When the virtual mixing matrix is row stochastic, for any $v \in (0,1)$, we decompose the distance between $\bar\vx^{k+1}$ and $\bar\vx'^{k+1}_{(i,z)}$ into three parts as

\begin{align}
\label{ger-1}
&  \|\bar\vx^{k+1}-\bar\vx'^{k+1}_{(i,z)}\|^2
\leq  \frac{2}{v}\|\bar\vx^{k+1} - \bar\vx^{k+\frac{1}{2}} \|^2  \\
&+  \frac{1}{1-v} \|\bar\vx^{k+\frac{1}{2}} - \bar\vx'^{k+\frac{1}{2}}_{(i,z)} \|^2
 + \frac{2}{v} \|\bar\vx'^{k+1}_{(i,z)} - \bar\vx'^{k+\frac{1}{2}}_{(i,z)} \|^2. \nonumber
\end{align}

 For the first term at the RHS of (\ref{ger-1}), we bound it by
\begin{align}
    \label{ger-2}
    &\|\bar\vx^{k+1} - \bar\vx^{k+\frac{1}{2}} \|^2 \\
=& \| \frac{1}{N}\sum_{n \in \N}\vx_n^{k+1} - \frac{1}{N}\sum_{n \in \N}\vx_n^{k+\frac{1}{2}} \|^2 \nonumber\\
\leq& 2\| \frac{1}{N} \sum_{n \in \N}(\vx_n^{k+1} - \hat\vx_n^{k+\frac{1}{2}}) \|^2 \nonumber \\
&+ 2\| \frac{1}{N} \sum_{n \in \N} (\hat\vx_n^{k+\frac{1}{2}} - \vx_n^{k+\frac{1}{2}})\|^2. \nonumber
\end{align}
For the first term at the RHS of \eqref{ger-2}, we use the contraction property of the robust aggregation rules $ \{ \A_n \}_{n \in  \N}$ in Definition \ref{definition:mixing-matrix} to derive
\begin{align}
\label{ger-2-1}
    &\frac{1}{N} \sum_{n \in \N} \| \vx_n^{k+1} - \hat\vx_n^{k+\frac{1}{2}} \|^2 \\
    \leq &  \frac{1}{N} \sum_{n \in \N} \rho^2 \max_{m \in \N_n \cup \{n\}} \| \vx_{m}^{k+\frac{1}{2}} - \hat\vx_n^{k+\frac{1}{2}} \|^2 \nonumber \\
    \leq &  \frac{1}{N} \sum_{n \in \N} \rho^2 \max_{m \in \N} \| \vx_{m}^{k+\frac{1}{2}} - \hat\vx_n^{k+\frac{1}{2}} \|^2 \nonumber \\
    \leq &  \frac{1}{N} \sum_{n \in \N} \rho^2 \lp 2\max_{m \in \N} \| \vx_{m}^{k+\frac{1}{2}} - \bar\vx^{k+\frac{1}{2}} \|^2 + 2\|  \bar\vx^{k+\frac{1}{2}} - \hat\vx_n^{k+\frac{1}{2}}\|^2 \rp \nonumber \\
    \leq & 4\rho^2 \frac{1}{N} \sum_{n \in \N}  \max_{m \in \N} \| \vx_{m}^{k+\frac{1}{2}} - \bar\vx^{k+\frac{1}{2}} \|^2 \nonumber \\
     \leq & 4\rho^2 \sum_{n \in \N}   \| \vx_{n}^{k+\frac{1}{2}} - \bar\vx^{k+\frac{1}{2}} \|^2. \nonumber
\end{align}
For the second term at the RHS of (\ref{ger-2}), it holds that
\begin{align}
    \label{ger-4}
    &\| \frac{1}{N} \sum_{n \in \N} (\hat\vx_n^{k+\frac{1}{2}} - \vx_n^{k+\frac{1}{2}})\|^2 \\
     =&  \lnorm \frac{1}{N} \bm{1}^\top (WX^{k+\frac{1}{2}}-\frac{1}{N}\bm{1}\bm{1}^{\top}X^{k+\frac{1}{2}})
        \rnorm^2
          \nonumber\\
        =& \frac{1}{N^2}  \lnorm (\bm{1}^\top W-\bm{1}^{\top})
        (X^{k+\frac{1}{2}}-\frac{1}{N}\bm{1}\bm{1}^{\top}X^{k+\frac{1}{2}})
        \rnorm^2
         \nonumber\\
        \le& \frac{1}{N^2} \lnorm W^\top\bm{1} -\bm{1}
        \rnorm^2
        \lnorm X^{k+\frac{1}{2}}-\frac{1}{N}\bm{1}\bm{1}^{\top}X^{k+\frac{1}{2}}
        \rnorm_F^2
          \nonumber\\
		=& \frac{\chi^2}{N}\sum_{n\in\N} \|  \vx^{k+\frac{1}{2}}_{n}-\bar\vx^{k+\frac{1}{2}}\|^2,   \nonumber
\end{align}
in which  $X^{k+\frac{1}{2}} = [\vx^{k+\frac{1}{2}}_1, \cdots, \vx^{k+\frac{1}{2}}_{N}]^\top \! \in \mathbb{R}^{N \times d}$.
Substituting \eqref{ger-4} and \eqref{ger-2-1} into \eqref{ger-2}, and applying Lemma 2 in \cite{wu2022byzantine}, we have
\begin{align}
    \label{ger-5}
    &\|\bar\vx^{k+1} - \bar\vx^{k+\frac{1}{2}} \|^2\\
 \leq &  8 \rho^2 \sum_{n \in \N} \|\vx_n^{k+\frac{1}{2}}  - \bar\vx^{k+\frac{1}{2}} \|^2 + \frac{2 \chi^2}{N}\sum_{n\in\N} \|  \vx^{k+\frac{1}{2}}_{n}-\bar\vx^{k+\frac{1}{2}}\|^2\nonumber \\
\leq &  (8\rho^2 N+2 \chi^2)  (2+12(\alpha^k)^2 L^2) H^k \nonumber \\
& + (64 \rho^2 N+16 \chi^2) (\alpha^k)^2 (\sigma^2 + \delta^2).  \nonumber
\end{align}
Note that for the third term at the RHS of (\ref{ger-1}), an inequality similar to (\ref{ger-5}) also holds true.

Now, we analyze the second term at the RHS of (\ref{ger-1}). For the non-Byzantine agent $i$, the probability of Algorithm \ref{robust-DSGD} selecting the same training sample from  $\cS_i$ and $\cS_i^{(z)}$ at time $k$ is $1-\frac{1}{Z}$.  According to Lemma 6 in \cite{sun2021stability}, for any $p>0$, if the loss is strongly convex and Assumption \ref{assumption:Lip} holds, then choosing $\alpha^k \leq \frac{1}{L}$ yields
\begin{align}
    \label{ger-7}
           &   \|\bar\vx^{k+\frac{1}{2}} - \bar\vx'^{k+\frac{1}{2}}_{(i,z)} \|^2  \\
     = &     \|\bar\vx^k -\frac{\alpha^k}{N} \sum_{m \in \N} \nabla f(\vx^{k}_m; \xi_m^{k}) \nonumber  \\
     & \hspace{2em}-\bar\vx'^k_{(i,z)} +\frac{\alpha^k}{N} \sum_{m \in \N} \nabla f(\vx'^{k}_m; \xi_m^{k}) \|^2  \nonumber \\
    \leq& (1+p)  \|\bar\vx^k -\frac{\alpha^k}{N} \sum_{m \in \N} \nabla f(\bar\vx^{k}; \xi_m^{k} ) \nonumber \\
    & \hspace{4em} -\bar\vx'^k_{(i,z)} + \frac{\alpha^k}{N} \sum_{m \in \N} \nabla f(\bar\vx'^{k}_{(i,z)}; \xi_m^{k}) \|^2  \nonumber  \\
    & +  2(1+p^{-1})\frac{(\alpha^k)^2}{N} \sum_{m \in \N}  \|  \nabla f(\bar\vx^{k}; \xi_m^{k}) -  \nabla f(\vx^{k}_m; \xi_m^{k}) \|^2 \nonumber  \\
    & + 2(1+p^{-1}) \frac{(\alpha^k)^2}{N} \sum_{m \in \N} \|  \nabla f(\bar\vx'^{k}_{(i,z)}; \xi_m^{k}) - \nabla f(\vx'^{k}_m; \xi_m^{k}) \|^2   \nonumber  \\
    \leq&    (1-\alpha^k\mu)^2 (1+p) \| \bar\vx^k - \bar\vx'^k_{(i,z)}\|^2
     + 4 (1+p^{-1}) (\alpha^k)^2 L^2 H^k. \nonumber
\end{align}
Substituting \eqref{ger-7} and \eqref{ger-5} into \eqref{ger-1}, in this case we have
\begin{align}
\label{ger-8}
&  \E\|\bar\vx^{k+1}-\bar\vx'^{k+1}_{(i,z)}\|^2 \\
\leq &  \frac{(1-\alpha^k\mu)^2 (1+p)}{1-v}  \E \| \bar\vx^k - \bar\vx'^k_{(i,z)}\|^2   \nonumber \\
     & + \frac{4 (1+p^{-1}) (\alpha^k)^2 L^2 \E H^k}{1-v} \nonumber \\
    & + \frac{4}{v}(8\rho^2 N+2 \chi^2)  (2+12(\alpha^k)^2 L^2) \E H^k \nonumber \\
& + \frac{4}{v}(64 \rho^2 N+16 \chi^2) (\alpha^k)^2 (\sigma^2 + \delta^2).  \nonumber
\end{align}

For non-Byzantine agent $i$, the probability of Algorithm \ref{robust-DSGD} selecting the different training samples from  $\cS_i$ and $\cS_i^{(z)}$ at time $k$ is $\frac{1}{Z}$. According to Lemma 6 in \cite{sun2021stability}, for any $p>0$, if the loss is strongly convex and Assumption \ref{assumption:Lip} holds, then choosing $\alpha^k \leq \frac{1}{L}$ yields
\begin{align}
    \label{ger-9}
   & \|\bar\vx^{k+\frac{1}{2}} - \bar\vx'^{k+\frac{1}{2}}_{(i,z)} \|^2   \\
    \leq& (1+p)   \|\bar\vx^k -\frac{\alpha^k}{N} \sum_{m \in \N} \nabla f(\bar\vx^{k}; \xi_m^{k} ) \nonumber \\
    & \hspace{4em} -\bar\vx'^k_{(i,z)} + \frac{\alpha^k}{N} \sum_{m \in \N} \nabla f(\bar\vx'^{k}_{(i,z)}; \xi_m^{k}) \|^2  \nonumber  \\
    & +  3(1+p^{-1})   \| \frac{\alpha^k}{N} \lp  \nabla f(\vx^{k}_i; \xi'_{i,z}) - \nabla f(\vx^{k}_i; \xi_{i,z}) \rp \|^2 \nonumber \\
    & +  3(1+p^{-1}) \frac{(\alpha^k)^2}{N} \sum_{m \in \N}  \| \nabla f(\bar\vx^{k}; \xi_m^{k}) - \nabla f(\vx^{k}_m; \xi_m^{k}) \|^2 \nonumber  \\
    & + 3(1+p^{-1}) \frac{(\alpha^k)^2}{N} \sum_{m \in \N} \|  \nabla f(\bar\vx'^{k}_{(i,z)}; \xi_m^{k}) - \nabla f(\vx'^{k}_m; \xi_m^{k}) \|^2   \nonumber  \\
    \leq&    (1-\alpha^k\mu)^2 (1+p)   \| \bar\vx^k - \bar\vx'^k_{(i,z)}\|^2
     + 6 (1+p^{-1}) (\alpha^k)^2 L^2 H^k \nonumber \\
    & + \frac{3(1+p^{-1})(\alpha^k)^2  \|   \nabla f(\vx^{k}_i; \xi'_{i,z}) - \nabla f(\vx^{k}_i; \xi_{i,z})  \|^2  }{N^2}  . \nonumber
\end{align}
Substituting \eqref{ger-9} and \eqref{ger-5} into \eqref{ger-1}, in this case we have
\begin{align}
\label{ger-10}
&   \E \|\bar\vx^{k+1}-\bar\vx'^{k+1}_{(i,z)}\| \\
\leq &  \frac{(1-\alpha^k\mu)^2 (1+p)}{1-v}  \E \| \bar\vx^k - \bar\vx'^k_{(i,z)}\|^2 \nonumber \\
& + \frac{12(1+p^{-1})(\alpha^k)^2}{N^2 (1-v)}   \E \|    \nabla f(\vx^{k}_i; \xi_{i,z})  \|^2  \nonumber \\
    & + \frac{6 (1+p^{-1}) (\alpha^k)^2 L^2 \E H^k}{1-v} \nonumber \\
    & + \frac{4}{v}(8\rho^2 N+2 \chi^2)  (2+12(\alpha^k)^2 L^2) \E H^k \nonumber \\
& + \frac{4}{v}(64 \rho^2 N+16 \chi^2) (\alpha^k)^2 (\sigma^2 + \delta^2). \nonumber
\end{align}

Combining the two cases of \eqref{ger-8} and \eqref{ger-10}, according to Definition \ref{def-sta}, we have
\begin{align}
\label{ger-11}
&  \frac{1}{NZ}\sum_{i \in \N}\sum_{z=1}^{Z} \E \|\bar\vx^{k+1}-\bar\vx'^{k+1}_{(i,z)}\| \\
\leq &  \frac{(1-\alpha^k\mu)^2 (1+p)}{1-v}  \frac{1}{NZ}\sum_{i \in \N}\sum_{z=1}^{Z} \E \| \bar\vx^k - \bar\vx'^k_{(i,z)}\|^2 \nonumber \\
& + \frac{12(1+p^{-1})(\alpha^k)^2}{N^2 Z (1-v)} \frac{1}{NZ}\sum_{i \in \N}\sum_{z=1}^{Z}  \E \|    \nabla f(\vx^{k}_i; \xi_{i,z})  \|^2  \nonumber \\
    & + \frac{6 (1+p^{-1}) (\alpha^k)^2 L^2 \E H^k}{1-v} \nonumber \\
    & + \frac{4}{v}(8\rho^2 N+2 \chi^2)  (2+12(\alpha^k)^2 L^2) \E H^k \nonumber \\
& + \frac{4}{v}(64 \rho^2 N+16 \chi^2) (\alpha^k)^2 (\sigma^2 + \delta^2).  \nonumber
\end{align}

If we set $v = \frac{\alpha^k\mu}{2}$, $p =\alpha^k \mu$ and $\alpha^k = \frac{2}{\mu (k+k_1)}$, according to Lemma \ref{lemma-dm-b},
we have
\begin{align}
\label{ger-12}
     & \frac{1}{NZ}\sum_{i \in \N}\sum_{z=1}^{Z} \E \|\bar\vx^{k+1}-\bar\vx'^{k+1}_{(i,z)}\|^2 \\
     \leq &  (1-\frac{1}{k+k_1})  \frac{1}{NZ}\sum_{i \in \N}\sum_{z=1}^{Z} \E  \| \bar\vx^k - \bar\vx'^k_{(i,z)}\|^2 \nonumber \\
     & + \frac{96}{\mu^2 (k+k_1)^2 N^2 Z } \frac{1}{NZ}\sum_{i \in \N}\sum_{z=1}^{Z}  \E \|    \nabla f(\vx^{k}_i; \xi_{i,z})  \|^2  \nonumber \\
     & + \frac{48}{\mu^2 (k+k_1) N^2 Z } \frac{1}{NZ}\sum_{i \in \N}\sum_{z=1}^{Z}  \E \|    \nabla f(\vx^{k}_i; \xi_{i,z})  \|^2  \nonumber \\
     & + \frac{48 L^2 c_2 (\sigma^2 + \delta^2) }{\mu^4 (k+k_1)^4} + \frac{24 L^2 c_2 (\sigma^2 + \delta^2) }{\mu^4 (k+k_1)^3} \nonumber \\ &+    \frac{16 c_2 (4\rho^2 N+\chi^2) (\sigma^2 + \delta^2)}{\mu^2 (k+k_1)}  \nonumber \\
     & + \frac{384 c_2 L^2 (4\rho^2 N+\chi^2) (\sigma^2 + \delta^2) }{\mu^4 (k+k_1)^3}    \nonumber \\
     & + \frac{256 (4\rho^2N +\chi^2)  (\sigma^2 + \delta^2) }{\mu^2 (k+k_1)}, \nonumber
\end{align}
where $c_2>0$ is a constant defined in Lemma \ref{lemma-dm-b}.


Using telescopic cancellation on \eqref{ger-12} from time $0$ to $k$, we deduce that

\begin{align}
    \label{ger-13}
    & \frac{1}{NZ}\sum_{i \in \N}\sum_{z=1}^{Z} \E \|\bar\vx^{k}-\bar\vx'^{k}_{(i,z)}\|^2 \\
    \leq &   \frac{(16c_2 +256) (4\rho^2 N+\chi^2) (\sigma^2 + \delta^2)}{\mu^2} \nonumber  \\
    & + \frac{384 c_2 L^2  (4\rho^2 N+\chi^2) (\sigma^2 + \delta^2) }{\mu^4 (k+k_1-1)}  + \frac{72 L^2 c_2 (\sigma^2 + \delta^2) }{\mu^4 (k+k_1-1)} \nonumber \\
   & + \frac{96 \sum_{k'=0}^{k-1} \sum_{i \in \N}\sum_{z=1}^{Z} \E  \lp \| \nabla f(\vx^{k'}_i; \xi_{i,z}) \|^2 / (k'+k_1) \rp }{\mu^2 N^3 Z^2 (k+k_1-1) }  \nonumber \\
    & + \frac{48 \sum_{k'=0}^{k-1} \sum_{i \in \N}\sum_{z=1}^{Z} \E  \| \nabla f(\vx^{k'}_i; \xi_{i,z}) \|^2  }{\mu^2 N^3 Z^2  (k+k_1-1)}.   \nonumber
\end{align}

\subsection{Doubly Stochastic Virtual Mixing Matrix}
When the virtual mixing matrix becomes doubly stochastic, it is convenient to analyze $ \frac{1}{N} \sum_{n \in \N} \|\vx_n^{k} - \vx'^{k}_n \|^2$ instead of $\|\bar\vx'^{k}_{(i,z)} - \bar\vx^{k} \|^2$, because $\frac{1}{N} \sum_{n \in \N} \|\vx_n^{k} - \vx'^{k}_n \|^2 \geq \|\bar\vx'^{k}_{(i,z)} - \bar\vx^{k} \|^2$. For any $v \in (0,1)$, we decompose $\frac{1}{N} \sum_{n \in \N} \|\vx_n^{k+1} - \vx'^{k+1}_n \|^2$ into three parts, as
\begin{align}
\label{gb-5}
    &  \frac{1}{N} \sum_{n \in \N} \|\vx_n^{k+1} - \vx'^{k+1}_n \|^2 \\
\leq & \frac{2}{v}\frac{1}{N} \sum_{n \in \N}    \|\vx_n^{k+1} - \hat\vx_n^{k+\frac{1}{2}} \|^2 + \frac{2}{v}\frac{1}{N} \sum_{n \in \N}  \|\vx_n'^{k+1} - \hat\vx_n'^{k+\frac{1}{2}} \|^2  \nonumber\\
&+ \frac{1}{1-v} \frac{1}{N} \sum_{n \in \N}  \|\hat\vx_n^{k+\frac{1}{2}} - \hat\vx_n'^{k+\frac{1}{2}} \|^2.  \nonumber
\end{align}

For the first and second terms at the RHS of (\ref{gb-5}), we can bound them similarly to \eqref{ger-5}.

\begin{align}
\label{gb-5-8}
    &\frac{1}{N} \sum_{n \in \N} \| \vx_n^{k+1} - \hat\vx_n^{k+\frac{1}{2}} \|^2 \\
     \leq & 4\rho^2 \sum_{n \in \N}   \| \vx_{n}^{k+\frac{1}{2}} - \bar\vx^{k+\frac{1}{2}} \|^2 \nonumber \\
     \leq &  4\rho^2 N  (2+12(\alpha^k)^2 L^2) H^k + 32 \rho^2 N (\alpha^k)^2 (\sigma^2 + \delta^2).   \nonumber
\end{align}

Now, we analyze the third term at the RHS of (\ref{gb-5}). For the non-Byzantine agent $i$, the probability of Algorithm \ref{robust-DSGD} selecting the same training sample from  $\cS_i$ and $\cS_i^{(z)}$ at time $k$ is $1-\frac{1}{Z}$.  According to Lemma 6 in \cite{sun2021stability}, if the loss is strongly convex and Assumption \ref{assumption:Lip} holds, then choosing $\alpha^k \leq \frac{1}{L}$ yields
\begin{align}
    \label{gb-6}
    & \frac{1}{N} \sum_{n \in \N} \|\hat\vx_n^{k+\frac{1}{2}} - \hat\vx_n'^{k+\frac{1}{2}} \|^2  \\
    = & \frac{1}{N} \sum_{n \in \N} \|\sum_{m \in \N} w_{nm} \lp \vx_m^{k+\frac{1}{2}} - \vx_m'^{k+\frac{1}{2}} \rp\|^2 \nonumber\\
    \leq & \frac{1}{N} \sum_{n \in \N} \sum_{m \in \N} w_{nm}  \| \vx_m^{k+\frac{1}{2}} - \vx_m'^{k+\frac{1}{2}} \|^2 \nonumber\\
    \leq &   (1-\alpha^k\mu)^2 \frac{1}{N} \sum_{n \in \N} \| \vx_n^{k} - \vx_n'^{k} \|^2. \nonumber
\end{align}
Substituting (\ref{gb-5-8}) and (\ref{gb-6}) into (\ref{gb-5}), in this case we have
\begin{align}
\label{gb-7}
    & \frac{1}{N} \sum_{n \in \N} \E \|\vx_n^{k+1} - \vx'^{k+1}_n \|^2 \\
     \leq &  \frac{(1-\alpha^k\mu)^2}{1-v}   \frac{1}{N} \sum_{n \in \N} \E \|\vx_n^{k} - \vx'^{k}_n \|^2 \nonumber \\
     & + \frac{16}{v}\rho^2 N  (2+12(\alpha^k)^2 L^2) \E H^k  \nonumber \\
     & + \frac{128}{v} \rho^2 N (\alpha^k)^2 (\sigma^2 + \delta^2).  \nonumber
\end{align}

For non-Byzantine agent $i$, the probability of Algorithm \ref{robust-DSGD} selecting the different training samples from  $\cS_i$ and $\cS_i^{(z)}$ at time $k$ is $\frac{1}{Z}$. According to Lemma 6 in \cite{sun2021stability}, for any $p>0$, if the loss is strongly convex and Assumption \ref{assumption:Lip} holds, then choosing $\alpha^k \leq \frac{1}{L}$ yields
\begin{align}
    \label{gb-8}
    & \frac{1}{N} \sum_{n \in \N} \|\hat\vx_n^{k+\frac{1}{2}} - \hat\vx_n'^{k+\frac{1}{2}} \|^2 \\
   \leq &  \frac{(1-\alpha^k \mu)^2 (1+p)}{N}  \sum_{n \in \N} \|\vx_n^{k} - \vx_n'^{k} \|^2  \nonumber \\    
   & +  \frac{(\alpha^k)^2(1+p^{-1})}{N}  \| \nabla f(\vx^{k}_i; \xi_{i,z}) - \nabla f(\vx^{k}_i; \xi'_{i,z})\|^2.   \nonumber
\end{align}
Substituting (\ref{gb-5-8}) and (\ref{gb-8}) into (\ref{gb-5}), in this case we have
\begin{align}
\label{gb-9}
     & \frac{1}{N} \sum_{n \in \N} \E \|\vx_n^{k+1} - \vx'^{k+1}_n \|^2 \\
     \leq &  \frac{(1-\alpha^k\mu)^2(1+p)}{1-v}   \frac{1}{N} \sum_{n \in \N} \E \|\vx_n^{k} - \vx'^{k}_n \|^2 \nonumber \\
     & + \frac{16}{v}\rho^2 N  (2+12(\alpha^k)^2 L^2) \E H^k  \nonumber \\
     & + \frac{128}{v} \rho^2 N (\alpha^k)^2 (\sigma^2 + \delta^2)  \nonumber \\
     & +  \frac{4(\alpha^k)^2(1+p^{-1})}{N}  \E  \| \nabla f(\vx^{k}_i; \xi_{i,z}) \|^2. \nonumber
\end{align}

Combining the two cases of (\ref{gb-9}) and (\ref{gb-7}), we can obtain

\begin{align}
\label{gb-10}
    & \frac{1}{N} \sum_{n \in \N} \E \|\vx_n^{k+1} - \vx'^{k+1}_n \|^2 \\
    \leq &  \frac{(1-\alpha^k\mu)^2(1+\frac{p}{Z})}{1-v}   \frac{1}{N} \sum_{n \in \N} \E \|\vx_n^{k} - \vx'^{k}_n \|^2 \nonumber \\
     & + \frac{16}{v}\rho^2 N  (2+12(\alpha^k)^2 L^2) \E H^k \nonumber \\
     & + \frac{128}{v} \rho^2 N (\alpha^k)^2 (\sigma^2 + \delta^2)  \nonumber \\
     & +  \frac{4(\alpha^k)^2(1+p^{-1})}{NZ}  \E  \| \nabla f(\vx^{k}_i; \xi_{i,z}) \|^2. \nonumber
\end{align}

If we set $v = \frac{\alpha^k\mu}{2}$, $p = Z \alpha^k \mu$ and $\alpha^k = \frac{2}{\mu (k+k_1)}$, according to Lemma \ref{lemma-dm-b}, we have
\begin{align}
\label{gb-10-1}
 & \frac{1}{N} \sum_{n \in \N} \E \|\vx_n^{k+1} - \vx'^{k+1}_n \|^2  \\
     \leq &  (1-\frac{1}{k+k_1})  \frac{1}{N} \sum_{n \in \N} \E \|\vx_n^{k} - \vx'^{k}_n \|^2 \nonumber \\
     &+  \frac{2 (16 c_2 +256)  \rho^2 N (\sigma^2 + \delta^2)}{\mu^2 (k+k_1)}
      + \frac{768 c_2 L^2 \rho^2 N (\sigma^2 + \delta^2) }{\mu^4 (k+k_1)^3} \nonumber \\
      &
      + \frac{16 \E  \| \nabla f(\vx^{k}_i; \xi_{i,z}) \|^2  }{\mu^2 NZ (k+k_1)^2}
      + \frac{8 \E  \| \nabla f(\vx^{k}_i; \xi_{i,z}) \|^2  }{\mu^2 N Z^2 (k+k_1)}.  \nonumber
\end{align}

Using telescopic cancellation on \eqref{gb-10-1} from time $0$ to $k$, we deduce that
\begin{align}
    \label{gb-10-2}
     & \frac{1}{N} \sum_{n \in \N}  \|\vx_n^{k} - \vx_n'^{k} \|^2   \\
    \leq &   \frac{2 (16c_2 +256) \rho^2 N (\sigma^2 + \delta^2)}{\mu^2} + \frac{768 c_2 L^2 \rho^2 N (\sigma^2 + \delta^2) }{\mu^4 (k+k_1-1)}  \nonumber \\
   & + \frac{16 \sum_{k'=0}^{k-1}  \lp \E  \| \nabla f(\vx^{k'}_i; \xi_{i,z}) \|^2 / (k'+k_1)  \rp}{\mu^2 NZ (k+k_1-1)}  \nonumber \\
    & + \frac{8 \sum_{k'=0}^{k-1} \E  \| \nabla f(\vx^{k'}_i; \xi_{i,z}) \|^2  }{\mu^2 N Z^2  }.  \nonumber
\end{align}

According to \eqref{gb-10-2} and Definition \ref{def-sta},  the on-average stability can be upper bounded by
\begin{align}
    \label{gb-10-2-1}
    & \frac{1}{NZ}\sum_{i \in \N}\sum_{z=1}^{Z}  \|\bar\vx'^{k}_{(i,z)} - \bar\vx^{k} \|^2   \\
    \leq &   \frac{ 16 \sum_{k'=0}^{k-1} \sum_{i \in \N}\sum_{z=1}^{Z} \lp \E \| \nabla f(\vx^{k'}_i; \xi_{i,z})\|^2  / (k'+k_1) \rp }{\mu^2 N^2 Z^2 (k+k_1-1)} \nonumber \\
    & +   \frac{ 8 \sum_{k'=0}^{k-1} \sum_{i \in \N}\sum_{z=1}^{Z}  \E \| \nabla f(\vx^{k'}_i; \xi_{i,z})\|^2 }{\mu^2 N^2 Z^3 (k+k_1-1) }    \nonumber \\
   &+ \frac{2 (16c_2 +256) \rho^2 N (\sigma^2 + \delta^2)}{\mu^2} + \frac{384 c_2 L^2 2 \rho^2 N (\sigma^2 + \delta^2) }{\mu^4 (k+k_1-1)}.  \nonumber
\end{align}
This completes the proof.
\end{proof}

\section{Proof of Theorem \ref{the-oe-b}}
\begin{proof}
Regarding the distance between the algorithm output of Byzantine-resilient DSGD at time $k+1$ and the optimal solution $\vx_{\cS}^*$ of the global empirical loss, if Assumption \ref{assumption:variance} holds, for any $a \in (0,1)$, we have
\begin{align}
    \label{ger-17}
    & \E \|\bar\vx^{k+1} - \vx_{\cS}^* \|^2  \\
    \leq &  \frac{1}{1-a} \E \|\bar\vx^{k+1} - \bar\vx^{k+\frac{1}{2}} \|^2
     + \frac{1}{a} \E \|\bar\vx^{k+\frac{1}{2}} - \vx_{\cS}^* \|^2.  \nonumber
\end{align}
The first term at the RHS of \eqref{ger-17} can be bounded by \eqref{ger-5}.
For the second term at the RHS of \eqref{ger-17}, if we choose $\alpha^k \leq \frac{1}{2L}$, following \eqref{g-22-5},  we have
\begin{align}
    \label{gb-17-1}
    & \E \| \bar\vx^{k+\frac{1}{2}} - \vx_{\cS}^* \|^2 \\
    \leq & (1-\mu\alpha^k) \E \|\bar \vx^k - \vx_{\cS}^* \|^2  + 2 \alpha^k L \E H^k + \frac{(\alpha^k)^2 \sigma^2}{N}. \nonumber
\end{align}


Substituting \eqref{gb-17-1} and \eqref{ger-5} into \eqref{ger-17}, setting $a=1-\frac{\mu\alpha^k}{2}$, because $1-\mu\alpha^k \leq (1-\frac{\mu\alpha^k}{2})^2$, we have
\begin{align}
    \label{ger-19}
    & \E \|\bar\vx^{k+1} - \vx_{\cS}^* \|^2 \\
    \leq & (1-\frac{\alpha^k \mu}{2})  \E \|\bar \vx^k - \vx_{\cS}^* \|^2  +  \frac{ (\alpha^k)^2 \sigma^2}{\lp 1-\frac{\alpha^k \mu}{2}\rp N}
     +  \frac{2\alpha^k L \E  H^k}{1-\frac{\alpha^k \mu}{2}}  \nonumber\\
     & + \frac{4(4 \rho^2 N+ \chi^2) (2+12 (\alpha^k)^2 L^2) \E H^k}{\alpha^k \mu} \nonumber \\
     & + \frac{ 32(4   \rho^2 N +  \chi^2) \alpha^k (\sigma^2 + \delta^2)}{\mu}.  \nonumber
\end{align}

If we set $\alpha^k = \frac{2}{\mu (k+k_1)}$, then since $\frac{1}{a}=\frac{k+k_1}{k+k_1-1} \in (1,2)$, we have
\begin{align}
    \label{ger-20}
    & \E \|\bar\vx^{k+1} - \vx_{\cS}^* \|^2 \\
     \leq & (1-\frac{1}{k+k_1})  \E \|\bar \vx^k - \vx_{\cS}^* \|^2
     +  \frac{8\sigma^2}{\mu^2 N (k+k_1)^2}    \nonumber \\
     & + \frac{8 c_2 L (\sigma^2 + \delta^2)}{\mu^3 (k+k_1)^3}
      + \frac{96 c_2 L^2 (4\rho^2 N+\chi^2)  (\sigma^2 + \delta^2)}{ \mu^4 (k+k_1)^3} \nonumber \\
      & + \frac{(64+4c_2)  (4\rho^2 N + \chi^2) (\sigma^2 + \delta^2)}{ \mu^2 (k+k_1)}.  \nonumber
\end{align}

Using telescopic cancellation on \eqref{ger-20} from time $0$ to $k$, we deduce that
\begin{align}
    \label{ger-21}
    & \E \|\bar\vx^{k} - \vx_{\cS}^* \|^2 \\
     \leq & \frac{(k_1-1)  \|\bar \vx^0 - \vx_{\cS}^* \|^2}{k+k_1-1} + \frac{8\sigma^2 ln(k+k_1-1)}{ \mu^2 N (k+k_1-1)} \nonumber \\
     & + \frac{8 c_2 L (\sigma^2 + \delta^2)}{\mu^3 (k+k_1-1) }
      + \frac{96c_2 L^2 (4\rho^2 N+\chi^2)(\sigma^2 + \delta^2)}{ \mu^4 (k+k_1-1)} \nonumber \\
      & + \frac{(64+4c_2)  (4\rho^2 N + \chi^2) (\sigma^2 + \delta^2) }{ \mu^2 }. \nonumber
\end{align}

Finally, combining \eqref{ger-21} with Assumption \ref{assumption:Lip}, we obtain

\begin{align}
\label{ger-22}
&   \E \lp F_{\cS}(\bar\vx^{k})  -  F_{\cS} (\vx_{\cS}^*) \rp \\
\leq & \frac{L}{2} \E \|\bar\vx^{k} - \vx_{\cS}^* \|^2  \nonumber\\
\leq &   \frac{L(k_1-1)  \|\bar \vx^0 - \vx_{\cS}^* \|^2}{2(k+k_1-1)} + \frac{4L\sigma^2 ln(k+k_1-1)}{ \mu^2 N (k+k_1-1)} \nonumber \\
& + \frac{4 c_2 L^2 (\sigma^2 + \delta^2)}{\mu^3 (k+k_1-1) }
      + \frac{48 c_2 L^3 (4\rho^2 N+\chi^2) (\sigma^2 + \delta^2)}{ \mu^4 (k+k_1-1)}  \nonumber \\
     & + \frac{(32+2c_1) L  (4\rho^2 N+\chi^2)  (\sigma^2 + \delta^2)}{ \mu^2 }.  \nonumber
\end{align}
This completes the proof.
\end{proof}


\section{Proof of Theorem \ref{the-ge-b}}
\begin{proof}
Similar to the derivation of \eqref{g-21-1}, we have
\begin{align}
    \label{ger-23}
     &  \E \frac{1}{NZ}\sum_{i \in \N} \sum_{z=1}^{Z}  \| \nabla f(\vx^{k}_i; \xi_{i,z})\|^2   \\
    \leq & 4 L^2 \E H^k + 8L \E (F_{\cS}(\bar\vx^{k}) - F_{\cS} (\vx_{\cS}^*)) + 4 \sigma^2 + 4\delta^2. \nonumber
\end{align}

According to the Theorem \ref{the-oe-b} and Lemma \ref{lemma-dm-b},
substituting \eqref{ger-23} into \eqref{ger-13} and \eqref{gb-10-2-1},  if we set $k \asymp \mu N Z$, we have
\begin{align}
\label{ger-24}
      & \frac{1}{NZ}\sum_{i \in \N}\sum_{z=1}^{Z} \E \|\bar\vx'^{k}_{(i,z)} - \bar\vx^{k} \|^2 \\
     \leq & O\lp (4\rho^2 N+\chi^2)  (\sigma^2 + \delta^2) \rp + O\lp  \frac{\sigma^2 +\delta^2 }{ \mu N  Z} \rp \nonumber \\
      & +  O\lp  \frac{(4\rho^2 N+\chi^2)  (\sigma^2 + \delta^2)}{ \mu  N  Z} \rp. \nonumber
\end{align}

According to Lemma \ref{lemma-sta}, let $v = L$ and $k \asymp \mu N Z$. Then by substituting \eqref{ger-22} and \eqref{ger-24} into \eqref{lemma}, we have

\begin{align}
\label{gb-10-9}
    &\E[F(\bar \vx^k)-F_{\cS}(\bar \vx^k)] \\
   \leq &  O(\frac{\|\bar\vx^0 - \vx_{\cS}^*\|^2}{ \mu N Z})+  \tilde O(\frac{\sigma^2}{\mu N Z}) + O(\frac{\delta^2}{ \mu N Z}) \nonumber \\
      & +  O\lp  \frac{(4\rho^2 N+\chi^2)  (\sigma^2 + \delta^2)}{  \mu N Z} \rp \nonumber \\
      & + O\lp (4\rho^2 N+\chi^2)  (\sigma^2 + \delta^2) \rp. \nonumber
\end{align}
This completes the proof.
\end{proof}

\section{Useful Lemmas and Their Proofs}
\subsection{Proof of Lemma 2}
The following lemma characterizes the consensus dynamics among the agents of Algorithm \ref{DSGD} during the learning process. The proof builds upon the approach of Theorem 2 in \cite{wu2022byzantine}, as well as Lemma 2 in \cite{ye2024generalization}.
We define the consensus error as $H^k = \frac{1}{N} \sum_{n \in \N} \| \vx^{k}_n - \bar\vx^{k}\|^2 $, where $\bar\vx^k:= \frac{1}{N}\sum_{n \in \N}\vx^k_n$ is the average of all local models at time $k$.
\begin{lemma}[Consensus Error of Attack-free DSGD]
\label{lemma-dm}
    Under Assumptions \ref{assumption:Lip}--\ref{assumption:heterogeneity}, setting the step size $\alpha^k= \frac{1}{\mu(k+k_0)}$, in which $k_0$ is sufficiently large. When all agents share the same initialization, the consensus error of Algorithm \ref{DSGD} can be bounded by
    \begin{align}
        \label{inequality:H-convergence}
        \E H^k
        \le &  \frac{c_1(\sigma^2 +\delta^2)}{\mu^2 (k+k_0)^2}.
    \end{align}
    Here  $c_1 > 0$ is a constant, and the expectation is taken over all the random variables.
\end{lemma}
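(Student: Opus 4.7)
The plan is to derive a recursive inequality of the form $\E H^{k+1} \leq (1-\lambda/2)\E H^k + O((\alpha^k)^2)(\sigma^2 + \delta^2 + \text{smaller terms})$, and then exploit the decaying step size $\alpha^k = 1/(\mu(k+k_0))$ together with the initialization $H^0 = 0$ to unroll the recursion into the claimed $O(1/(k+k_0)^2)$ bound.

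First I would write the update in matrix form. Let $X^k \in \R^{N \times d}$ collect the local models row-wise and $G^k$ collect the stochastic gradients $\nabla f(\vx_n^k;\xi_n^k)$ row-wise, so that $X^{k+1} = W(X^k - \alpha^k G^k)$. Since $W$ is doubly stochastic, $\bar\vx^{k+1} = \bar\vx^k - \alpha^k \bar{\vg}^k$ with $\bar{\vg}^k := \frac{1}{N}\sum_{n \in \N}\nabla f(\vx_n^k;\xi_n^k)$. Thus
\begin{equation*}
\|X^{k+1} - \bm{1}(\bar\vx^{k+1})^\top\|_F^2 = \|(W - \tfrac{1}{N}\bm{1}\bm{1}^\top)(X^k - \alpha^k G^k)\|_F^2 \leq (1-\lambda)\|(I-\tfrac{1}{N}\bm{1}\bm{1}^\top)(X^k - \alpha^k G^k)\|_F^2,
\end{equation*}
using the definition of $\lambda$. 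Applying Young's inequality with parameter $\eta = \lambda/(2(1-\lambda))$ to separate the $X^k$ and $G^k$ contributions yields
\begin{equation*}
\E H^{k+1} \leq (1-\lambda/2)\,\E H^k + \frac{2(1-\lambda)(\alpha^k)^2}{\lambda N}\,\E\sum_{n \in \N}\|\nabla f(\vx_n^k;\xi_n^k)\|^2.
\end{equation*}

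Next I would bound the aggregate stochastic gradient norm. The same four-way decomposition used in \eqref{g-14} together with Assumptions \ref{assumption:Lip}--\ref{assumption:heterogeneity} gives
\begin{equation*}
\frac{1}{N}\E\sum_{n \in \N}\|\nabla f(\vx_n^k;\xi_n^k)\|^2 \leq 4\sigma^2 + 4\delta^2 + 4L^2\,\E H^k + 4\,\E\|\nabla F_{\cS}(\bar\vx^k)\|^2.
\end{equation*}
Substituting this in and absorbing the $L^2\E H^k$ term into the contraction by choosing $k_0$ large enough so that $C(\alpha^k)^2 L^2/\lambda \leq \lambda/4$ for all $k \geq 0$, I obtain a recursion of the form
\begin{equation*}
\E H^{k+1} \leq (1-\lambda/4)\,\E H^k + \frac{C'(\alpha^k)^2}{\lambda}\bigl(\sigma^2 + \delta^2 + \E\|\nabla F_{\cS}(\bar\vx^k)\|^2\bigr).
\end{equation*}

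The main obstacle is handling the residual $\E\|\nabla F_{\cS}(\bar\vx^k)\|^2$, which couples the consensus error to the optimization error. I would handle it via $\|\nabla F_{\cS}(\bar\vx^k)\|^2 \leq 2L(F_{\cS}(\bar\vx^k) - F_{\cS}(\vx_{\cS}^*))$ from smoothness, combined with the auxiliary bound $\E(F_{\cS}(\bar\vx^k) - F_{\cS}(\vx_{\cS}^*)) = O(1/(k+k_0))$ which follows from the descent argument in \eqref{g-22-5} (this does not use Lemma \ref{lemma-dm} if one temporarily leaves $\E H^k$ inside, forming a small coupled system). Since $(\alpha^k)^2 = O(1/(k+k_0)^2)$, this residual contributes $O(1/(k+k_0)^3)$, which is dominated by the target rate. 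Finally, with $H^0 = 0$ (all agents share the same initialization), I would either telescope or perform induction on the ansatz $\E H^k \leq c_1(\sigma^2+\delta^2)/(\mu^2(k+k_0)^2)$, verifying that the inductive step closes when $c_1$ is chosen sufficiently large relative to $C'/\lambda$ and the arithmetic identity $(1-\lambda/4)(k+k_0)^2/(k+k_0+1)^2 + C''/(k+k_0)^2 \leq 1$ holds for large enough $k_0$. This completes the derivation of \eqref{inequality:H-convergence}.
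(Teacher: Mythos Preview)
Your recursion setup and the Young-inequality split are fine, but you make one wasteful move right after: the gradient term arising from the split is the \emph{centered} quantity $\|(I-\tfrac{1}{N}\bm{1}\bm{1}^\top)G^k\|_F^2=\sum_{n\in\N}\|\nabla f(\vx_n^k;\xi_n^k)-\bar\vg^k\|^2$, and you immediately bound it by the uncentered $\sum_{n\in\N}\|\nabla f(\vx_n^k;\xi_n^k)\|^2$. That loss of centering is exactly what forces you to carry $\E\|\nabla F_{\cS}(\bar\vx^k)\|^2$ and couple the consensus recursion to the optimization error. The paper avoids this coupling entirely: since any vector constant across $n$ lies in the null space of $I-\tfrac{1}{N}\bm{1}\bm{1}^\top$, one may subtract $\nabla F_{\cS}(\bar\vx^k)$ for free inside the centered norm and then decompose each summand into stochastic noise, heterogeneity, and a smoothness-times-consensus piece. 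This yields directly (cf.\ \eqref{g-17}--\eqref{g-19}) the self-contained recursion $\E H^{k+1}\le(1-\lambda^2)\E H^k + \tfrac{12(1-\lambda)}{\lambda}(\alpha^k)^2(\sigma^2+\delta^2)$ with no optimization-error residual at all. Telescoping from $H^0=0$ then gives the stated bound with $c_1=24(1-\lambda)/\lambda^3$ depending only on the topology.

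Your coupled-system route can in principle be pushed through, but at real cost. First, the descent inequality \eqref{g-22-5} you invoke relies on strong convexity, which Lemma~\ref{lemma-dm} does not assume (it only cites Assumptions~\ref{assumption:Lip}--\ref{assumption:heterogeneity}; the $\mu$ enters merely through the step-size schedule). Second, resolving the coupling would drag $\|\bar\vx^0-\vx^*_{\cS}\|^2$ into the bound via a higher-order term, whereas the paper's constant is purely a function of $\lambda$. The one-line fix is simply to keep the centering in the gradient term.
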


\begin{proof}
For convenience, we define $X^{k} = [\vx^{k}_1, \cdots, \vx^{k}_{N}]^\top \in$ $\mathbb{R}^{N \times d}$ and  $X^{k+\frac{1}{2}} = [\vx^{k+\frac{1}{2}}_1, \cdots, \vx^{k+\frac{1}{2}}_{N}]^\top \! \in \mathbb{R}^{N \times d}$. Because $X^{k+1} = W X^{k+\frac{1}{2}}$, we have

 \begin{align}
        \label{dm-1}
         &\sum_{n \in \N}   \|\vx^{k+1}_{n}-\bar\vx^{k+1}\|^2   \\
        = & \|(I-\frac{1}{N}\bm{1}\bm{1}^\top) X^{k+1}\|_{F}^2  \nonumber\\
         \le & \|(I-\frac{1}{N}\bm{1}\bm{1}^\top)W X^{k+\frac{1}{2}}\|_{F}^2
         \nonumber\\
          =&  \|(I-\frac{1}{N}\bm{1}\bm{1}^\top)W (I-\frac{1}{N}\bm{1}\bm{1}^\top) X^{k+\frac{1}{2}}\|_{F}^2
        \nonumber \\
        \le&  \|(I-\frac{1}{N}\bm{1}\bm{1}^\top) W\|^2\|(I-\frac{1}{N}\bm{1}\bm{1}^\top) X^{k+\frac{1}{2}}\|_{F}^2
        \nonumber \\
        =& (1-\lambda)  \| (I-\frac{1}{N}\bm{1}\bm{1}^\top) X^{k+\frac{1}{2}}\|_{F}^2.  \nonumber
    \end{align}

According to Lemma 2 in \cite{wu2022byzantine}, for any $b \in (0,1)$, we have
\begin{align}
\label{g-17}
    \E H^{k+1}
     \leq & (1-\lambda) \lp \frac{1}{1-b}+\frac{6 (\alpha^k)^2 L^2}{b} \rp \E H^k  \\
     & +  (1-\lambda) \frac{4 (\alpha^k)^2 (\sigma^2 + \delta^2)}{b}. \nonumber
\end{align}
If we further set $b =\frac{\lambda}{3}$ and the proper step size satisfying $\alpha^k \leq \frac{1}{3L}\sqrt{(2-\lambda)\lambda^2/(6-2\lambda)}$, we can obtain
\begin{align}
\label{g-18}
    \frac{1}{1-b}+\frac{6 \alpha^2 L^2}{b} \leq 1+\lambda
\end{align}
Substituting \eqref{g-18} into \eqref{g-17}, we have
\begin{align}
    \label{g-19}
    \E H^{k+1} \leq (1-\lambda^2) \E H^k + \frac{12(1-\lambda)}{\lambda} (\alpha^k)^2 (\sigma^2 +\delta^2).
\end{align}

Below we choose the decaying step size  $\alpha^k= \frac{1}{\mu(k+k_0)}$, where $k_0$ is sufficiently large.  By applying telescopic cancellation to \eqref{g-19} from time $0$ to $k$, we deduce that
 \begin{align}
 \label{dm-11}
        &  \E H^{k}
        \le  (1-\lambda^2)^k H^0  \\
         &\hspace{-1em} +\frac{12(1-\lambda)(\sigma^2 +\delta^2)}{\lambda \mu^2} \lp \frac{1}{(k+k_0-1)^2}+\cdot\cdot\cdot+\frac{(1-\lambda^2)^{k-1}}{(k_0)^2} \rp . \nonumber
    \end{align}
When all agents share the same initialization, according to Lemma 5 in \cite{10730755}, if $\frac{(k_0+1)^2}{k_0^2} \leq \frac{2}{2-\lambda^2}$ is satisfied, we have
    \begin{align}
    \label{dm-12}
            &\E H^{k}
        \le   \frac{c_1(\sigma^2 +\delta^2)}{\mu^2(k+k_0)^2}.
    \end{align}
where $c_1 = \frac{24(1-\lambda)}{\lambda^3} $.
\end{proof}

\subsection{Proof of Lemma 3}
The following lemma characterizes the consensus dynamics of Algorithm \ref{robust-DSGD} among the non-Byzantine agents during the learning process. The proof builds upon the approach of Theorem 2 in \cite{wu2022byzantine} and Lemma 2 in \cite{ye2024generalization}.
In this setting, the consensus error is defined as $H^k = \frac{1}{N} \sum_{n \in \N} \| \vx^{k}_n - \bar\vx^{k}\|^2 $, where $\bar\vx^k:= \frac{1}{N}\sum_{n \in \N}\vx^k_n$ is the average of all non-Byzantine models at time $k$.
\begin{lemma}[Consensus of Byzantine-resilient DSGD]
\label{lemma-dm-b}
Suppose that the robust aggregation rules $\{\A_n\}_{n\in \N}$ in Algorithm \ref{robust-DSGD} satisfy Definition \ref{definition:mixing-matrix},  the associated contraction constant satisfies $\rho < \rho^* := \frac{\lambda}{8\sqrt{N}}$ and
Assumptions \ref{assumption:Lip}--\ref{assumption:heterogeneity} hold for all non-Byzantine agents $n \in \N$.
    Set the step size $\alpha^k= \frac{2}{\mu(k+k_1)}$, where $k_1$ is sufficiently large. When all non-Byzantine agents share the same initialization, the consensus error of Algorithm \ref{robust-DSGD} can be bounded by
    \begin{align}
        \label{by-Hk}
        \E H^k
        \le &  \frac{c_2(\sigma^2 +\delta^2)}{\mu^2 (k+k_1)^2}.
    \end{align}
    Here  $c_2 > 0$ is a constant, and the expectation is taken over all the random variables.
\end{lemma}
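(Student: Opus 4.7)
The plan is to adapt the attack-free recursion derived in Lemma \ref{lemma-dm} to the Byzantine-resilient setting, where the aggregation step is no longer a straight multiplication by a doubly stochastic matrix. First I would decompose $\sum_{n\in\N}\|\vx_n^{k+1}-\bar\vx^{k+1}\|^2$ into a ``virtual-mixing'' part and a ``contraction residual'' part, using Definition \ref{definition:mixing-matrix}: writing $\hat\vx_n^{k+\frac{1}{2}} := \sum_{m\in \N_n\cup\{n\}} w_{nm}\vx_{m}^{k+\frac{1}{2}}$ and applying $(a+b)^2\leq (1+c)a^2 + (1+c^{-1})b^2$, one has for any $c>0$
\begin{align*}
\sum_{n\in\N}\|\vx_n^{k+1}-\bar\vx^{k+1}\|^2
 \leq & (1+c)\sum_{n\in\N}\|\hat\vx_n^{k+\frac{1}{2}}-\bar{\hat\vx}^{k+\frac{1}{2}}\|^2 \\
 & + (1+c^{-1})\sum_{n\in\N}\|\vx_n^{k+1}-\hat\vx_n^{k+\frac{1}{2}}\|^2.
\end{align*}
The first summand is controlled via the spectral gap $\lambda$ of the virtual mixing matrix (exactly as in \eqref{dm-1}), yielding a factor $(1-\lambda)\|(I-\frac{1}{N}\bm{1}\bm{1}^\top)X^{k+\frac{1}{2}}\|_F^2$. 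The second summand is controlled by the contraction constant $\rho$ via \eqref{inequality:robustness-of-aggregation-local}, which I would bound as in \eqref{ger-2-1} by $4\rho^2 N\sum_{n\in\N}\|\vx_n^{k+\frac{1}{2}}-\bar\vx^{k+\frac{1}{2}}\|^2$.

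Next I would plug in the update $\vx_n^{k+\frac{1}{2}}=\vx_n^{k}-\alpha^k\nabla f(\vx_n^{k};\xi_n^{k})$ and expand $\sum_{n}\|\vx_n^{k+\frac{1}{2}}-\bar\vx^{k+\frac{1}{2}}\|^2$ using Assumptions \ref{assumption:Lip}--\ref{assumption:heterogeneity}; following the standard computation in Lemma 2 of \cite{wu2022byzantine} this produces, for any $b\in(0,1)$, a bound of the form $(\tfrac{1}{1-b}+\tfrac{6(\alpha^k)^2 L^2}{b})H^k+\tfrac{4(\alpha^k)^2}{b}(\sigma^2+\delta^2)$. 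Combining the two contributions and choosing $c$ so that $(1+c)(1-\lambda)+(1+c^{-1})\cdot 4\rho^2 N < 1$ gives a clean recursion. The hypothesis $\rho<\rho^* = \lambda/(8\sqrt{N})$ is exactly what is needed here: taking $c = \lambda/(2-\lambda)$ (so $(1+c)(1-\lambda)=1-\lambda/2$) makes $(1+c^{-1})\cdot 4\rho^2 N \leq \lambda/4$ when $\rho<\rho^*$, so the effective contraction factor is at most $1-\lambda/4$. After then selecting $b=\lambda/12$ and $\alpha^k$ small enough (absorbing the $(\alpha^k)^2L^2$ terms into the spare $\lambda/8$), I obtain
\begin{align*}
\E H^{k+1}\leq \Bigl(1-\tfrac{\lambda^2}{C}\Bigr)\E H^k + C'(\alpha^k)^2(\sigma^2+\delta^2)
\end{align*}
for some constants $C,C'>0$ depending only on $\lambda,\rho,L$.

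Finally, with $\alpha^k = 2/(\mu(k+k_1))$ and all non-Byzantine agents sharing the same initialization (so $H^0=0$), I would apply telescoping exactly as in \eqref{dm-11}--\eqref{dm-12}, invoking Lemma 5 of \cite{10730755} (which requires $(k_1+1)^2/k_1^2$ close enough to $1$; this is why $k_1$ must be sufficiently large) to convert the geometric series $\sum_{k'}(1-\lambda^2/C)^{k-1-k'}/(k'+k_1)^2$ into an $O(1/(k+k_1)^2)$ bound. This yields the claimed estimate with $c_2$ of order $(1-\lambda)/\lambda^3$, matching the structure of the attack-free constant $c_1$ with an additional dependence on $\rho$ and $N$ inherited from the contraction step.

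The main obstacle I expect is verifying that the row-stochasticity (rather than double-stochasticity) of $W$ does not break the spectral contraction used in $\|(I-\frac{1}{N}\bm{1}\bm{1}^\top)W\|^2=1-\lambda$. Since $W$ is row stochastic we still have $W\bm{1}=\bm{1}$, so $(I-\frac{1}{N}\bm{1}\bm{1}^\top)W X = (I-\frac{1}{N}\bm{1}\bm{1}^\top)W(I-\frac{1}{N}\bm{1}\bm{1}^\top)X$ and the spectral norm bound still applies by the very definition $\lambda := 1-\|(I-\frac{1}{N}\bm{1}\bm{1}^\top)W\|^2$ given in Section \ref{sec-al}; the secondary difficulty is threading the constants so that the joint contribution of the spectral part and the contraction residual remains strictly contractive, for which the precise threshold $\rho<\lambda/(8\sqrt{N})$ is crucial.
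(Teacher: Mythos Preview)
Your approach is correct and essentially the same as the paper's. The only difference is packaging: the paper invokes Theorem~2 of \cite{wu2022byzantine} as a black box to obtain the one-step recursion
\[
\E H^{k+1}\leq (1-w^2)\E H^k + \tfrac{12(1-w)}{w}(\alpha^k)^2(\sigma^2+\delta^2),\qquad w:=\lambda-8\rho\sqrt{N},
\]
whereas you reconstruct that recursion from scratch via the Young decomposition into the virtual-mixing part (controlled by $\lambda$) and the contraction residual (controlled by $\rho$). Your constant tuning ($c=\lambda/(2-\lambda)$, $b=\lambda/12$) plays the same role as the paper's reparametrisation through $w$; note that the paper's $c_2=96(1-w)/w^3$ already encodes the dependence on $\rho$ and $N$ via $w$, matching your remark that $c_2$ inherits an extra $\rho,N$ dependence beyond the attack-free $c_1$. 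After the recursion, both proofs telescope with $\alpha^k=2/(\mu(k+k_1))$, use $H^0=0$, and apply Lemma~5 of \cite{10730755} under the condition $(k_1+1)^2/k_1^2\leq 2/(2-w^2)$ to get the $O(1/(k+k_1)^2)$ bound. Your observation about row-stochasticity is also on point: since $W\bm{1}=\bm{1}$, the identity $(I-\tfrac{1}{N}\bm{1}\bm{1}^\top)WX=(I-\tfrac{1}{N}\bm{1}\bm{1}^\top)W(I-\tfrac{1}{N}\bm{1}\bm{1}^\top)X$ still holds, so the spectral-gap step goes through unchanged.
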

\begin{proof}
In the presence of Byzantine agents, by Theorem 2 in \cite{wu2022byzantine}, if $\rho < \frac{\lambda}{8\sqrt{N}}$ and we define $w = \lambda - 8\rho \sqrt{N}$, then if $\alpha^k \leq \frac{\sqrt{(2-w)w^2/(6-2w)}}{3L}$, we have
\begin{align}
    \label{bdm-1}
    \E H^k \leq (1-w^2) H^k + \frac{12(1-w)}{w}(\alpha^k)^2 (\sigma^2+\delta^2).
\end{align}

We further choose the decaying step size  $\alpha^k= \frac{2}{\mu(k+k_1)}$, use telescopic cancellation on \eqref{bdm-1} from time $0$ to $k$ and deduce that
 \begin{align}
 \label{bdm-2}
        &\E H^{k}
        \le  (1-w^2)^k H^0  \\
         & \hspace{-1em} +\frac{48(1-w)}{w \mu^2}  (\sigma^2 +\delta^2)\lp \frac{1}{(k+k_1-1)^2}+\cdot\cdot\cdot+\frac{(1-w^2)^{k-1}}{(k_1)^2} \rp . \nonumber
    \end{align}
When all agents share the same initialization, by Lemma 5 in \cite{10730755}, if $\frac{(k_1+1)^2}{k_1^2} \leq \frac{2}{2-w^2}$ is satisfied, we have
    \begin{align}
    \label{bdm-3}
            &\E H^{k}
        \le   \frac{c_2(\sigma^2 +\delta^2)}{\mu^2(k+k_1)^2}.
    \end{align}
where $c_2 = \frac{96(1-w)}{w^3} $.
\end{proof}

\section{Additional Numerical Experiments on CIFAR-10}
\label{sec:cifar10}
In addition to the squared $\ell_2$-norm regularized softmax regression task with a strongly convex loss on the MNIST dataset, we investigate the generalization error of ResNet-18 training on the CIFAR-10 dataset with a nonconvex loss. The CIFAR-10 dataset comprises 60,000 color images across 10 classes, with 50,000 training samples and 10,000 test samples. Data heterogeneity is controlled by assigning training samples to each agent according to a Dirichlet distribution,  $Dir(\beta)$, similar to the setup in the softmax regression task. The step size is set as $\alpha^k = \frac{0.015}{0.01k+1}$.  Other experimental settings, including the network topology, remain consistent with those in the softmax regression task.

\subsection{Experimental Results for Attack-free DSGD}
Fig. \ref{gene-noniid-c} illustrates the generalization error of DSGD under varying levels of data heterogeneity, demonstrating that the generalization error increases with larger data heterogeneity. Fig. \ref{gene-size-c} presents the generalization error of Algorithm \ref{DSGD} for different training sample sizes per agent when $\beta=1$. The numerical results indicate that the generalization error decreases as the training sample size increases. These observations are consistent with those from the softmax regression task on MNIST dataset and support our theoretical findings.

\begin{figure}[t]
\centering
\begin{minipage}{0.49\linewidth}
    \centering
    \includegraphics[width=0.9\linewidth]{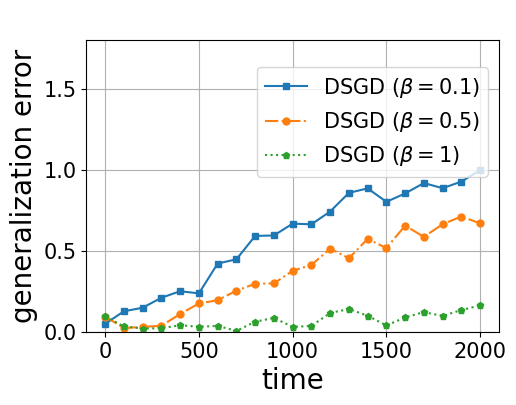}
    \caption{Generalization error of DSGD under different levels of data heterogeneity on CIFAR-10.}
    \label{gene-noniid-c}
\end{minipage}
\hfill
\begin{minipage}{0.49\linewidth}
    \centering
    \includegraphics[width=0.9\linewidth]{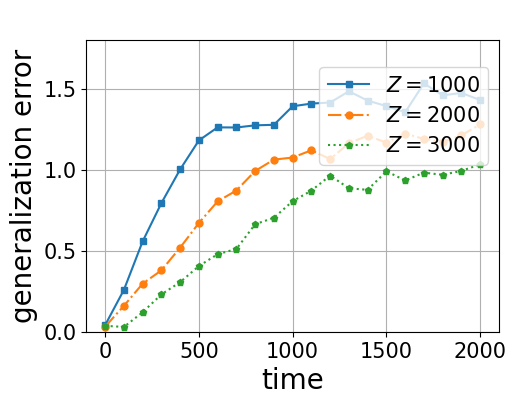}
    \caption{Generalization error of DSGD across different training sample sizes per agent when $\beta=1$ on CIFAR-10.}
    \label{gene-size-c}
\end{minipage}
\end{figure}

\subsection{Experimental Results for Byzantine-resilient DSGD}
To validate the impact of data heterogeneity on the generalization error of Byzantine-resilient DSGD, we present the generalization error of Algorithm \ref{robust-DSGD} equipped with IOS in Fig. \ref{fig:ios-c} under different levels of data heterogeneity. Under the ALIE attack, the generalization error of Algorithm \ref{robust-DSGD} with IOS becomes unstable. Under other attacks, it is obvious that the generalization error increases as $\beta$ decreases.
Furthermore, Fig. \ref{fig:robust-cifar} presents the generalization errors of both attack-free DSGD and Byzantine-resilient DSGD with different robust aggregation rules when $\beta=1$. The results depict that the presence of Byzantine agents enlarges the generalization error. These findings are also consistent with observations from the softmax aggregation task on MINIST dataset and align with our theoretical results.

\subsection{Cooperation Gain of Byzantine-resilient DSGD}
To highlight the cooperation gain in the presence of Byzantine agents, we set the concentration parameter to $\beta = 1$ and randomly select a non-Byzantine agent to train independently on its own dataset without cooperation.  The generalization errors of SGD without cooperation and cooperation-based Byzantine-resilient approaches on CIFAR-10 are illustrated in Fig. \ref{fig:co-c}. We observe that the generalization error of the non-cooperation algorithm is significantly larger than that of the cooperation-based Byzantine-resilient approaches when $\beta = 1$.
This is consistent with observations from the softmax regression task on MNIST and support our theoretical results.

\newpage

\begin{figure*}[htbp]
    \centering
    \includegraphics[width=0.74\linewidth]{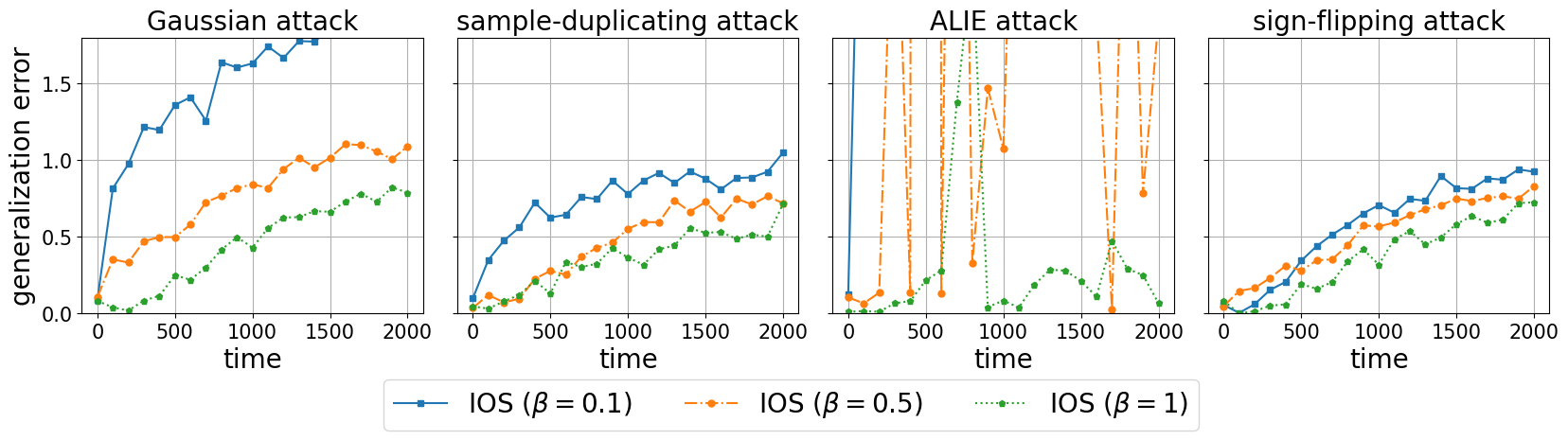}
    \caption{Generalization error of Byzantine-resilient DSGD equipped with IOS under different levels of data heterogeneity on CIFAR-10.}
    \label{fig:ios-c}
\end{figure*}

\begin{figure*}[htbp]
    \centering
    \includegraphics[width=0.74\linewidth]{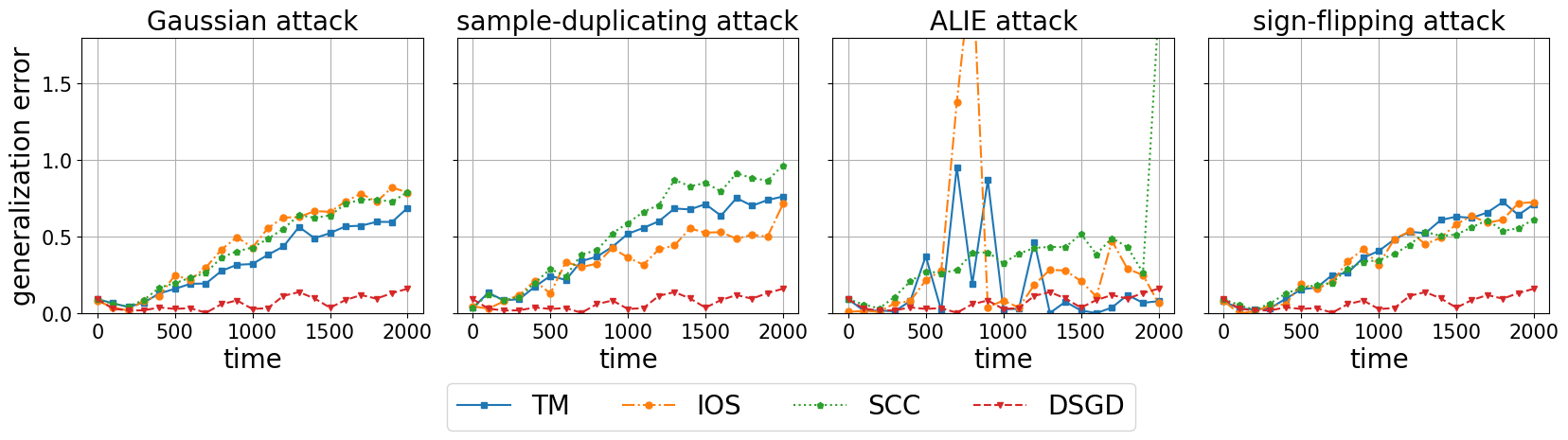}
    \caption{Generalization error of attack-free DSGD and Byzantine-resilient DSGD when $\beta=1$ on CIFAR-10.}
    \label{fig:robust-cifar}
\end{figure*}

\begin{figure*}[htbp]
    \centering
    \includegraphics[width=0.74\linewidth]{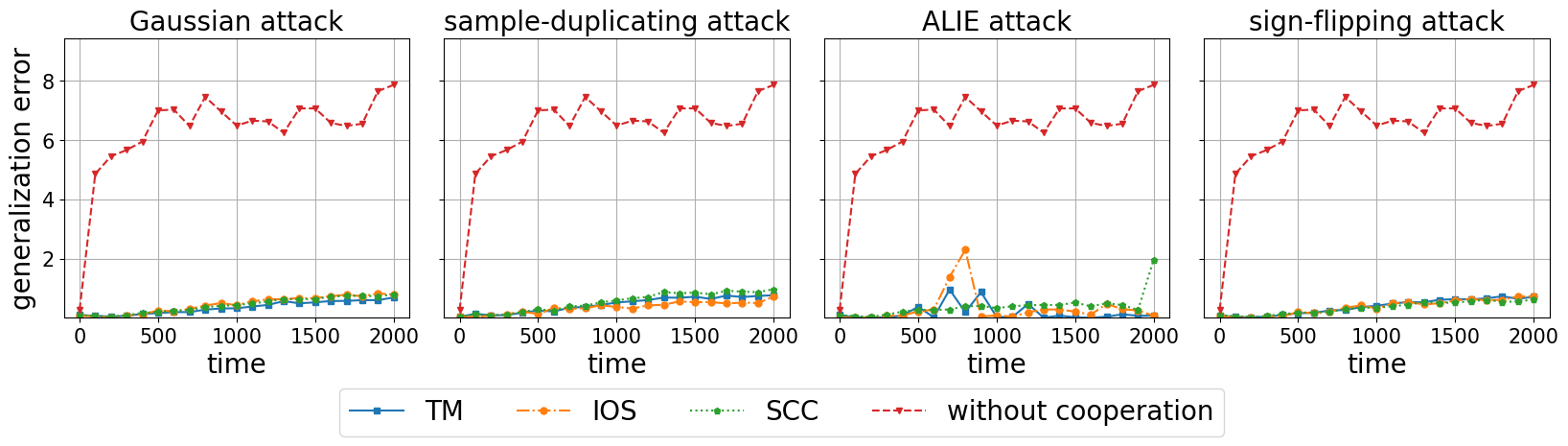}
    \caption{Generalization error of SGD without cooperation and Byzantine-resilient DSGD when $\beta=1$ on CIFAR-10.}
    \label{fig:co-c}
\end{figure*}

\end{appendices}


\end{document}